\documentclass{article} % For LaTeX2e
\usepackage{iclr2018_conference,times}
\usepackage{hyperref}
\usepackage{url}
\usepackage{amsmath,amsthm,amssymb,amsopn}

\usepackage{bbm}
\usepackage{bm} % boldmath must be called after the package
\providecommand{\mathbold}[1]{\bm{#1}}

\usepackage{natbib}

\usepackage{graphicx}
% \graphicspath{ {fig/} }

%%% theorem style
\newtheorem{lemma}{Lemma}[section]
\newtheorem{theorem}{Theorem}[section]
\newtheorem{proposition}[theorem]{Proposition}
\newtheorem{corollary}[theorem]{Corollary}
\newtheorem{remark}{Remark}[section]
\theoremstyle{remark}
\newtheorem{definition}[theorem]{Definition}

%==============
% Constants
%==============

% Set constants upright

 % Zero vector
 % Identity matrix

%==============
% Sets
%==============
\providecommand{\mathbbm}{\mathbb} % In case we don't load bbm

% Reals, complex, naturals
\newcommand{\R}{\mathbbm{R}}

\newcommand{\N}{\mathbbm{N}}

%==============
% Probability
%==============

\newcommand{\Expect}{\operatorname{\mathbb{E}}}

%==============
% Vectors and matrices 
%==============
\newcommand{\vct}[1]{\mathbold{#1}}

% self-defined symbols
\newcommand{\CalB}{{\mathcal{B}}}

\newcommand{\CalF}{{\mathcal{F}}}
\newcommand{\CalG}{{\mathcal{G}}}

\newcommand{\CalP}{{\mathcal{P}}}

\DeclareMathOperator*{\argmin}{arg\,min}

%%% by Qiang 
\newcommand{\F}{\mathcal{F}}
\newcommand{\G}{\mathcal{G}}

\usepackage[textsize=tiny]{todonotes} % boldmath must be called after the package

\newcommand{\E}{\Expect}

\newcommand{\wto}[0]{\rightharpoonup}
\newcommand{\Xdom}{{X}}

\newcommand{\BL}{\mathrm{BL}}
\newcommand{\Lip}{\mathrm{Lip}}
\newcommand{\KL}{\mathrm{KL}}

\newcommand{\spanFplus}{span\F}

%\usepackage{showlabels} 

%\title{On the Neural Network Distances in Generative Adversarial Learning}
\title{On the Discrimination-Generalization Tradeoff in GANs}
% % (DZ) how about shortening the title
%\title{On Discriminative and Generalizable Properties of Generative Adversarial Learning}
%\title{Towards a Statistical Learning Theory of GANs}

 \author{Pengchuan Zhang\\
 Microsoft Research, Redmond\\
 \texttt{penzhan@microsoft.com} \\
 \And
 Qiang Liu \\
 Computer Science, Dartmouth College \\
 \texttt{qiang.liu@dartmouth.edu} \\
 \And
 Dengyong Zhou \\
 Google \\
 \texttt{dennyzhou@google.com}
 \And
 Tao Xu \\
 Computer Science, Lehigh University \\
 \texttt{tax313@lehigh.edu}
 \AND
 Xiaodong He \\
 Microsoft Research, Redmond \\
 \texttt{xiaohe@microsoft.com}
 }

% The \author macro works with any number of authors. There are two commands
% used to separate the names and addresses of multiple authors: \And and \AND.
%
% Using \And between authors leaves it to \LaTeX{} to determine where to break
% the lines. Using \AND forces a linebreak at that point. So, if \LaTeX{}
% puts 3 of 4 authors names on the first line, and the last on the second
% line, try using \AND instead of \And before the third author name.

\newcommand{\nn}{\mathrm{nn}}

\renewcommand{\overline}[1]{{cl}(#1)}

\usepackage{enumitem}

\iclrfinalcopy % Uncomment for camera-ready version

\begin{document}

\maketitle

\begin{abstract}

Generative adversarial training can be generally understood as minimizing certain moment matching loss defined by a set of discriminator functions, typically neural networks. The discriminator set should be large enough to be able to uniquely identify the true distribution (\emph{discriminative}), and also be small enough to go beyond memorizing samples (\emph{generalizable}). In this paper, we show that a discriminator set is guaranteed to be discriminative whenever its linear span is dense in the set of bounded continuous functions. This is a very mild condition satisfied even by neural networks with a \emph{single neuron}. Further, we develop generalization bounds between the learned distribution and true distribution under different evaluation metrics. When evaluated with neural distance, our bounds show that generalization is guaranteed as long as the discriminator set is small enough, regardless of the size of the generator or hypothesis set. When evaluated with KL divergence, our bound provides an explanation on the counter-intuitive behaviors of testing likelihood in GAN training. Our analysis sheds lights on understanding the practical performance of GANs. 
\end{abstract}

	\section{Introduction}
	\label{sec:intro}
	Generative adversarial networks (GANs) \citep{goodfellow2014generative} and their variants 
% 	Generative adversarial training 
	can be generally understood as minimizing certain moment matching loss defined by a set of discriminator functions. Mathematically, GANs minimize the integral probability metric (IPM)~\citep{Muller1997}, that is,  
	\begin{align}\label{def:MMD}
	\min_{\nu\in \CalG}  
	\bigg\{  d_{\CalF}(\hat\mu_m, \nu) :=  \sup_{f \in \CalF} \big \{ \Expect_{x\sim \hat\mu_m} [f(x)] -     \Expect_{x\sim \nu} [f(x)]  \big\} \bigg\},  
	\end{align}
	where $\hat\mu_m$ is the empirical measure of the observed data, and $\CalF$ and $\CalG$ are the sets of discriminators and generators, respectively. %The discriminator set $\CalF$ plays a critical role in determining the statistical properties of GANs: 
	\begin{enumerate}[leftmargin=*,align=left]
	\item [1.] \textbf{Wasserstain GAN (W-GAN)}~\citep{arjovsky2017wasserstein}.  $\CalF = \Lip_1(X):= \{ f \colon ||f ||_{\Lip} \leq 1 \},$ corresponding to the Wasserstain-1 distance. 
	\item [2.] \textbf{MMD-GAN}~\citep{li2015generative, dziugaite2015training, li2017mmd}.   $\CalF$ is taken as the unit ball in a Reproducing Kernel Hilbert Space (RKHS), corresponding to the Maximum Mean Discrepency (MMD). 
	\item [3.] \textbf{Energy-based GANs}~\citep{zhao2016energy}.  $\CalF$ is taken as the set of continuous functions bounded between $0$ and $M$ for some constant $M>0$, corresponding to the total variation distance~\citep{arjovsky2017wasserstein}. 
	\item [4.] \textbf{f-GAN}~\citep{nowozin2016f} minimizes the $f$-divergence, which can be viewed a form of regularized moment matching loss defined over all possible functions as shown by \citet{liu2017approximation}. See also Appedix~\ref{app:divergence_new}.
	\end{enumerate}
	
    Due to computational tractability, however, the practical GANs take $\F$ as a parametric function class, typically,
	 $\CalF_{\text{nn}} = \{  f_\theta(x) \colon \theta \in \Theta\}$ where $f_\theta(x)$  is a neural network indexed by parameters $\theta$ that take values in $\Theta \subset \mathbb R^p$. 
	 Consequently, the related $d_{\CalF_{\text{nn}}}(\mu, \nu)$ is called {neural network distance, or neural distance}~\citep{arora2017generalization}. 
	 Although $d_{\F_\nn}(\mu,\nu)$ is meant to be a surrogate,
	 its properties can be fundamentally different from the original objective functions.
	 For example, in W-GAN, because $\F_\nn$ is a much smaller discriminator set than $\Lip_1(X)$, 
	 it is unclear from the current GAN literature whether $d_{\F_\nn}(\mu,\nu)$ is a discriminative metric in that $d_{\F_\nn}(\mu,\nu) =0$ implies $\mu = \nu$. This discrimination is critical to ensure the consistency of the learning result. 
	This motivated us to study the properties of $d_{\F_\nn}(\mu,\nu)$ with parametric function sets $\F_\nn$, instead of the original Wasserstein distance or $f$-divergence.

    A broader question is in developing learning bounds and studying how they depend on the discriminator set $\F$ and the generator set $\G$, under different evaluation metrics of interest. 
	 Specifically, assuming $\nu_m$ is an (approximate) solution of \eqref{def:MMD}, we are interested in obtaining bounds between $\nu_m$  and the underlying true distribution $\mu$, under a given evaluation metric $d_{\text{eval}}(\mu, \nu_m)$. Existing analysis has been mostly focusing on the case when the evaluation metric coincides with the optimization metric, that is, $d_{\text{eval}}(\mu,\nu) = d_\F(\mu,\nu)$, which, however, favors smaller discriminator sets that define \emph{``easier''} evaluation metrics.
	 It is of interest to develop bounds for evaluation metrics independent of $\F$, such as bounded Lipschitz distance that metrizes weak convergence, and KL divergence that connects to testing likelihood. 
	
\paragraph{Contribution.} 
We show that the role of discriminators $\F$ is best illustrated by the conditions under which $d_\F(\mu,\nu)$ metrizes weak convergence (or convergence in distribution), that is, 
\begin{align}\label{equ:wto}
d_{\F}(\mu, \nu_m) \to 0    
~~~~~\text{if and only if} ~~~~~\nu_m \wto \mu,
\end{align}
for any probability measures $\mu$ and $\nu_m$. 
The choice of $\F$ should strike a balance to achieve \eqref{equ:wto}: 

\begin{enumerate}[leftmargin=*,align=left]
\item[i)]  $\F$ should be large enough to make $d_{\F}(\mu,\nu)$ \emph{discrimiantive} in that $d_\F(\mu,\nu_m) \to 0$ can imply that $\nu_m$ weakly converges to $\mu$. Further, 
with a given metric $d_{\text{eval}}(\mu,\nu)$, the discriminator set $\F$ should be large enough so that a small $d_\F(\mu,\nu)$ implies a small $d_{\text{eval}}(\mu,\nu)$ in certain sense. These are basic requirements in justifying $d_\F(\mu,\nu)$ as a valid learning objective function. 

\item[ii)] $\F$ should also be relatively small so that $\nu_m \wto \mu$ implies that $d_{\F}(\mu, \nu_m)$ approaches to zero. This is essential to guarantee that the training and testing loss are similar to each other and hence the algorithm is \emph{generalizable}. 
Further, in order to obtain a low sample complexity, $\F$ should be sufficiently small so that $d_\F(\mu,\nu_m)$ decays with a fast rate, preferably $O(1/\sqrt{m})$. 
\end{enumerate}

The theme of this work is to characterize the conditions under which i) and ii) hold and develop bounds of $d_{\text{eval}}(\mu,\nu_m)$ that characterize the role of discriminators $\F$ and generators $\G$. 
Our contributions are summarized as follows. 
 
\begin{enumerate}[leftmargin=*,align=left]
    \item 
    We show that a discriminator set $\F$ is discriminative once 
    the linear span of $\F$ is dense in the bounded continuous (or Lipschitz) function space. This is a mild condition that can satisfied, for example, even for neural networks consists of a \emph{single neuron}. See Section~\ref{sec:discriminative}. 
    
    \item 
    We develop techniques using neural distance $d_\F(\mu,\nu)$ to provide upper bounds of different evaluation metrics $d_{\text{eval}}(\mu,\nu)$ of interest, including bounded Lipschitz (BL) distance and KL divergence, which provides a key step for developing learning bounds of GANs under these metrics. 
    See Section~\ref{sec:power}.
    
    \item 
    We characterize the generalizability of GANs 
    using the Rademacher complexity of discriminator set $\F$ and put together bounds between the true distributions $\mu$ and GAN estimators $\nu_m$ under different evaluation metrics in Section~\ref{sec:generalization}. 
    Under the neural distance, our bounds (Corollary~\ref{cor:BLgeneralize}-\ref{cor:Rademacher2new}) show that generalization is guaranteed as long as the discriminator set is small enough, \emph{regardless of the size of the generator or hypothesis set $\G$}. This seemingly-surprising result is reasonable because in this case the evaluation metric $d_\F(\mu,\nu_m)$ depends on the discriminator set. 
    
    \item When the KL divergence is used as the evaluation metric, our bound (Corollary~\ref{cor:klgen}) suggests that the generator and discriminator sets have to be compatible in that the log density ratios of the generators and the true distributions should  exist and be included inside the linear span of the discriminator set. The strong condition that log-density ratio should exist partially explains the counter-intuitive behavior of testing likelihood in flow GANs  \citep[e.g.,][]{danihelka2017comparison, grover2017flow}.
    
    \item 
    We extend our analysis to study neural $f$-divergences that are the learning objective of $f$-GANs, and establish similar results on the discrimination and generalization properties of neural $f$-divergences; see Appedix~\ref{app:divergence_new}. Different from neural distance, a neural $f$-divergence is discriminative if linear span of its discriminators \emph{without the output activation function} is dense in the bounded continuous function space. 
\end{enumerate}

	\subsection{Notations}
	\label{subsec:notation}
	We use $X$ to denote a subset of $\R^d$. For each continuous function $f\colon X\to \R$, we define the maximum norm as $\|f\|_\infty = \sup_{x \in X} |f(x)|$, and the Lipschitz norm  $\|f\|_{\Lip} = \sup\{|f(x)-f(y)|/\|x-y\| \colon x, y\in X,~~ x\neq y\}$, and the bounded Lipschitz (BL) norm $\|f\|_\BL=\max\{\|f\|_\Lip,~ \|f\|_\infty\}$. 
	The set of continuous functions on $X$ is denoted by $C(X)$, and the Banach space of bounded continuous function is $C_b(X) = \{f\in C(X) \colon\|f\|_\infty<\infty \}$. 
	
	The set of Borel probability measures on $X$ is denoted by $\CalP_{\CalB}(X)$. In this paper, we assume that all measures involved belong to $\CalP_\CalB(X)$, which is sufficient in all practical applications. We denote by $\Expect_{\mu}[f]$ the integral of $f$ with respect to probability measure $\mu$. 
	The weak convergence, or convergence in distribution, is denoted by $\nu_{n}\rightharpoonup \nu$.
	Given a base measures $\tau$ (e.g., Lebesgue measure), the density of $\mu\in \CalP_\CalB(X)$, if it exists, is denoted by $
	\rho_\mu = \frac{\mathrm d \mu}{\mathrm d \tau}$. 
	We do {\it not} assume density exists in our main theoretical results, except the cases when discussing KL divergence.

\section{Discriminative properties of neural distances for GAN}
\label{sec:discriminative}

As listed in the introduction, many variants of GAN can be viewed as minimizing the integral probability metric~\eqref{def:MMD}. Without loss of generality, we assume that the discriminator set $\CalF$ is even, i.e., $f \in \CalF$ implies $-f \in \CalF$. Intuitively speaking, minimizing \eqref{def:MMD} towards zero corresponds to matching the moments $\E_\mu[f] = \E_\nu[f]$ for all discriminators $f \in \F$. In their original formulation, all those discriminator sets are non-parametric, infinite dimensional, and large enough to guarantee that $d_\F(\mu,\nu) = 0$ implies $\mu =\nu$. 
	 
In practice, however, the discriminator set is typically restricted to parametric function classes of form $\F_{\nn} = \{f_\theta \colon \theta \in \Theta \}$. When $f_\theta$ is a neural network, we call $d_{\F_{\nn}}(\mu,\nu)$ a neural distance following \citet{arora2017generalization}. Neural distances are the actual object function that W-GAN optimizes in practice because they can be practically optimized and can leverage the representation power of neural networks. Therefore, it is of great importance to directly study neural distances, instead of Wasserstein metric, in order to understand practical performance of GANs. 

Because the parameter function set $\F_{\nn}$ is much smaller than the non-parametric sets like ${\Lip}_1(\Xdom)$, a key question is whether $\F_{\nn}$ is large enough so that moment matching on $\F_{\nn}$ (i.e., $d_{\F_{\nn}}(\mu,\nu)=0$) implies $\mu=\nu$. 	
It turns out the answer is affirmative once $\F_\nn$ is large enough so that its \emph{linear span} (instead of $\F_{\nn}$ itself) forms a universal approximator. This is a rather weak condition, which is satisfied even by very small sets such as neural networks with a \emph{single neuron}.

We make this concrete in the following.  
	\begin{definition}\label{def:discriminate}
		Let $(\Xdom,d_X)$ be a metric space and $\CalF$ be a set of functions on $\Xdom$. We say that $d_\F(\mu,\nu)$ (and $\CalF$) is discriminative if 
		$$d_{\CalF}(\mu, \nu) = 0~~~~~\iff~~~~\mu = \nu,$$ 
		for any two Borel probability measures $\mu, \nu \in \CalP_{\CalB}(X)$. 
		In other words, $\CalF$ is discriminative if the moment matching on $\F$, i.e., $\Expect_{\mu}[f] = \Expect_{\nu}[f]$ for any $f \in \CalF$, implies $\mu = \nu$.
	\end{definition}

	The key observation is that $\Expect_{\mu}[f] = \Expect_{\nu}[f]$ for any $f \in \CalF$ implies the same holds true for all $f$ in the linear span of $\CalF$. Therefore, it is sufficient to require the linear span of $\F$, instead of $\F$ itself, to be large enough to well approximate all the indicator test functions. 
	
	\begin{theorem}\label{thm:span}
	For a given function set $\F \subset C_b(X)$, define 
	\begin{align}\label{eqn:span}
	\spanFplus := \{\alpha_0 + \sum_{i=1}^n \alpha_i f_i: \alpha_i \in \R, f_i \in \CalF, n \in \N\}. 
	\end{align}
	Then $d_\F(\mu,\nu)$ is discriminative if $\spanFplus$ is dense in the space of bounded continuous functions $C_b(\Xdom)$ under the uniform norm $||\cdot||_\infty$, that is, for any $f\in C_b(\Xdom)$ and $\epsilon>0$, there exists an $f_\epsilon \in \spanFplus$ such that $\|f-f_\epsilon\|_{\infty} \leq \epsilon$. An equivalent way to put is that $C_b(\Xdom)$ is included in the closure of $\spanFplus$, that is, 
		\begin{equation}\label{eqn:nccondition}
		\overline{\spanFplus} 
		\supseteq {C_b(X)}. 
		\end{equation}	
		Further, \eqref{eqn:nccondition} is a necessary condition for $d_\F(\mu,\nu)$ to be discriminative if $X$ is a compact space. 
	\end{theorem}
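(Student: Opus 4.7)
The plan is to handle the two directions separately. Sufficiency is a soft linearity-plus-density argument: the condition $d_\F(\mu,\nu)=0$ gives matching moments on $\F$, which linearity extends to $\spanFplus$, density then extends to all of $C_b(X)$, and the classical fact that bounded continuous functions separate Borel probability measures on a metric space finishes the job. Necessity on compact $X$ is a duality argument via Hahn--Banach and Riesz--Markov: if $\spanFplus$ is not dense in $C(X)$, one uses the dual space to build two distinct Borel probability measures that no function in $\F$ distinguishes.

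For sufficiency, assume the density hypothesis and that $d_\F(\mu,\nu)=0$. Evenness of $\F$ forces $\E_\mu[f]=\E_\nu[f]$ for every $f\in\F$; linearity of expectation, together with the constant term $\alpha_0$ built into the definition of $\spanFplus$ in \eqref{eqn:span}, extends this equality to every $g\in\spanFplus$. For an arbitrary $h\in C_b(X)$ and $\epsilon>0$, choose $g_\epsilon\in\spanFplus$ with $\|h-g_\epsilon\|_\infty\le\epsilon$; then
$$|\E_\mu h-\E_\nu h|\le|\E_\mu(h-g_\epsilon)|+|\E_\mu g_\epsilon-\E_\nu g_\epsilon|+|\E_\nu(g_\epsilon-h)|\le 2\epsilon,$$
and letting $\epsilon\downarrow 0$ yields $\E_\mu h=\E_\nu h$ on all of $C_b(X)$, hence $\mu=\nu$.

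For necessity, I argue the contrapositive. Suppose $\spanFplus$ is not dense in $C(X)=C_b(X)$, so its closure is a proper closed subspace. By Hahn--Banach there exists a nonzero bounded linear functional $L$ on $(C(X),\|\cdot\|_\infty)$ vanishing on $\spanFplus$; by the Riesz--Markov--Kakutani theorem (using compactness of $X$), $L(h)=\int_X h\,\rd\lambda$ for some nonzero finite signed regular Borel measure $\lambda$. Since the constant $1$ lies in $\spanFplus$, we get $\lambda(X)=L(1)=0$, so the Jordan decomposition $\lambda=\lambda^+-\lambda^-$ satisfies $\lambda^+(X)=\lambda^-(X)=:c>0$. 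The normalized measures $\mu:=\lambda^+/c$ and $\nu:=\lambda^-/c$ are then distinct Borel probability measures (their supports are disjoint because $\lambda^+\perp\lambda^-$, and both are nonzero) with $\E_\mu[f]-\E_\nu[f]=c^{-1}\int f\,\rd\lambda=0$ for every $f\in\F$, contradicting discriminativeness.

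The main obstacle is conceptual rather than computational. Sufficiency is routine once one notices that the additive constant $\alpha_0$ in the definition of $\spanFplus$ is what lets density in $C_b(X)$ translate into moment matching without a normalization issue. The delicate point is that necessity leans on identifying $C(X)^*$ with finite signed Borel measures, which needs $X$ compact; on non-compact $X$ the dual of $C_b(X)$ contains purely finitely-additive elements that cannot be normalized to yield counterexample probability measures, which is precisely why the statement restricts the necessity half to compact $X$.
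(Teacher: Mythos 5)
Your proposal is correct and follows essentially the same route as the paper: the sufficiency part is the standard linearity-plus-density argument reducing to the fact that $C_b(X)$ determines Borel probability measures (the paper simply cites Lemma 9.3.2 of Dudley), and the necessity part is the same Hahn--Banach plus Riesz representation argument, using $L(\vct{1})=0$ to normalize the Jordan decomposition into two probability measures that $\F$ cannot distinguish. The only cosmetic difference is that you conclude $\mu\neq\nu$ directly from mutual singularity of the nonzero Jordan components, whereas the paper reaches the contradiction via $\mu=\nu\Rightarrow L=\vct{0}$; these are interchangeable.
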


\begin{remark}
The basic idea of characterizing probability measures using functions in $C_b(X)$ 
is closely related to the concept of weak convergence. Recall that a sequence $\nu_n$ weakly converges to $\mu$, i.e., $\nu_n \rightharpoonup \mu$,  if and only if $\E_{\nu_n}[f] \to \E_\mu[f]$ for all $f \in C_b(X)$. 
Proof of the sufficient part of Theorem~\ref{thm:span} is standard and the same with proof of the uniqueness of weak convergence; see, e.g., Lemma 9.3.2 in \citet{dudley_2002}. 
\end{remark}

\begin{remark}
We obtain similar results for neural $f$-divergence $d_{\phi,\CalF}(\mu||\nu)$ in Theorem~\ref{thm:fdiv}. The difficulty in analyzing neural $f$-divergence is that moment matching on the discriminator set is only a sufficient condition for minimizing neural $f$-divergence, i.e., 
$$\left\{\nu: \E_{\mu}[f] = \E_\nu[f], \, \forall f \in \CalF\right\} \subseteq \left\{\nu: d_{\phi,\CalF}(\mu||\nu)=\argmin_{\nu} d_{\phi,\CalF}(\mu||\nu)\right\}.$$
Consequently, $\overline{\spanFplus} \supseteq {C_b(X)}$ is only necessary but not sufficient for a neural $f$-divergence to be discriminative. When the discriminators are neural networks, we show that moment matching on the function set that consists of discriminators without their output activation function, denoted as $\CalF_0$, is a necessary condition for minimizing neural $f$-divergence, i.e., 
$$\left\{\nu: d_{\phi,\CalF}(\mu||\nu)=\argmin_{\nu} d_{\phi,\CalF}(\mu||\nu)\right\} \subseteq \left\{\nu: \E_{\mu}[f_0] = \E_\nu[f_0], \, \forall f_0 \in \CalF_0\right\}.$$
We refer to Theorem~\ref{thm:fdiv} (ii) for a precise statement. Therefore, a neural $f$-divergence is discriminative if linear span of its discriminators without the output activation function is dense in the bounded continuous function space.
\end{remark}

\begin{remark}
Because the set of bounded Lipschitz functions $\BL(\Xdom) = \{f \in C_b(\Xdom)\colon || f||_\Lip < \infty \}$ is dense in $C_b(\Xdom)$, the condition in \eqref{eqn:nccondition} can be replaced by a weaker condition $\overline{\spanFplus} \supseteq {\BL(X)}$. One can define a norm $\|\cdot\|_\BL$ for functions in $\BL(\Xdom)$ by $\|f\|_\BL = \max\{\|f\|_\Lip, ~ \|f\|_\infty \}$. This defines the bounded Lipschitz (BL) distance,  
	$$
	d_{\BL}(\mu,\nu)= \max_{f\in \BL(\Xdom)} \{\E_\mu f -\E_\nu f  \colon ~ ||f ||_{\BL} \leq 1\}. 
	$$
The BL distance is known to metrize weak convergence in sense that $d_\BL(\mu,\nu_n)\to 0$ is equivalent to $\nu_n\wto \mu$ for all Borel probability measures on $\R^d$; see section 8.3 in \citet{bogachev2007measure}. 
\end{remark}

\paragraph{Neural distances are discriminative.}
The key message of Theorem~\ref{thm:span} is that 
it is sufficient to require $\overline{\spanFplus}\supseteq  C_b(X)$ (Condition \eqref{eqn:nccondition}), which is a much weaker condition than the perhaps more straightforward condition $\overline{\CalF} \supseteq C_b(X)$. In fact, \eqref{eqn:nccondition} is met by function sets that are much smaller than what we actually use in practice. For example, it is satisfied by the neural networks with only \emph{a single} neuron, i.e.,
\begin{align}
\label{eqn:snn}
    \CalF_\nn = \{\sigma(w^\top x + b) \colon w\in \R^d, b \in \R \}. 
\end{align}
This is because its span $\spanFplus_\nn$ includes neural networks with infinite numbers of neurons, which are well known to be universal approximators in $C_b(\Xdom)$ according to classical theories 
\citep[e.g.,][]{cybenko1989approximation, hornik1989multilayer, hornik1991approximation, leshno1993multilayer, barron1993universal}. We recall the following classical result.
\begin{theorem}[Theorem 1 in \cite{leshno1993multilayer}]\label{thm:single}
	Let $\sigma:\R\to \R$ be a continuous activation function and $\Xdom \subset \R^d$ be any compact set. Let $\F_\nn$ be the set of neural networks with a single neuron as defined in \eqref{eqn:snn}, then 
	%$$\text{span}\{\sigma(w^\top x + b): w\in \R^d, b \in \R\}$$
	$\spanFplus_\nn$ is dense in $C(\Xdom)$ if and only if $\sigma$ is not a polynomial. 
\end{theorem}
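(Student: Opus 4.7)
The plan is to handle the two directions separately and reduce the sufficient direction to the Stone–Weierstrass theorem by showing every monomial lies in $\overline{\spanFplus_\nn}$. For the necessity direction I argue by contrapositive: if $\sigma$ is a polynomial of degree $k$, then each $\sigma(w^\top x + b)$ is a polynomial in $x$ of degree at most $k$, so $\spanFplus_\nn$ is contained in the finite-dimensional subspace of polynomials of degree $\leq k$ on $X$. This subspace is closed and a proper subset of the infinite-dimensional $C(X)$ (since a nontrivial compact $X \subset \R^d$ has infinitely many points), so density fails.

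For the sufficient direction, assume $\sigma$ is not a polynomial. I first pass to a smooth non-polynomial surrogate $\psi := \sigma * \phi$ with an appropriately chosen $\phi \in C_c^\infty(\R)$. Writing
$$ \psi(w^\top x + b) = \int_\R \sigma(w^\top x + b - y)\,\phi(y)\,\rd y, $$
I note that this integral is the uniform limit on $X$ of Riemann sums $\sum_i \sigma(w^\top x + (b - y_i))\,\phi(y_i)\,\Delta y_i \in \spanFplus_\nn$, using uniform continuity of $\sigma$ on compact sets together with the compact support of $\phi$; hence $\psi(w^\top x + b) \in \overline{\spanFplus_\nn}$ for every $w, b$. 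Since $\psi$ is smooth and $X$ is compact, for any multi-index $\alpha$ the mixed partial
$$ \bigg[\frac{\partial^{|\alpha|}}{\partial w^\alpha}\,\psi(w^\top x + b)\bigg]_{w=0} = \psi^{(|\alpha|)}(b)\,x^\alpha $$
arises as a uniform limit on $X$ of iterated difference quotients of $\psi(w^\top x + b)$, each of which is a finite linear combination of elements of $\F_\nn$ together with a constant (absorbed by the $\alpha_0$ term in $\spanFplus_\nn$). Hence $\psi^{(|\alpha|)}(b)\,x^\alpha \in \overline{\spanFplus_\nn}$ for every $b$ and $\alpha$. If $\psi$ is smooth and not a polynomial, no derivative $\psi^{(k)}$ vanishes identically, so for each $\alpha$ one can pick $b$ with $\psi^{(|\alpha|)}(b) \neq 0$ and conclude $x^\alpha \in \overline{\spanFplus_\nn}$. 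Stone–Weierstrass then yields $\overline{\spanFplus_\nn} = C(X)$.

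The main obstacle is the auxiliary claim that some $\phi \in C_c^\infty$ produces a non-polynomial $\psi = \sigma * \phi$. I would argue by contradiction: if $\sigma * \phi$ were polynomial for every $\phi$, then associativity $(\sigma * \phi_1) * \phi_2 = \sigma * (\phi_1 * \phi_2) = (\sigma * \phi_2) * \phi_1$ combined with the fact that convolving a polynomial with a $C_c^\infty$ function of unit integral preserves degree would force $\deg(\sigma * \phi)$ to equal a common integer $n$ for every $\phi$ with $\int \phi = 1$. Letting a family of mollifiers $\phi_\epsilon \to \delta_0$ then presents $\sigma$ as a uniform limit on compact sets of polynomials of degree $\leq n$; closedness of that finite-dimensional space in $C([a,b])$ would force $\sigma$ itself to be a polynomial of degree $\leq n$, contradicting the hypothesis. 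Modulo this lemma, the rest is straightforward bookkeeping with uniform convergence on the compact set $X$.
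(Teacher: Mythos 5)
Your proof is correct, but note that the paper itself does not prove this statement: it is imported verbatim as Theorem~1 of \citet{leshno1993multilayer}, so the right comparison is with that classical argument. Your sufficiency proof follows the same skeleton (mollify $\sigma$ to a smooth $\psi=\sigma*\phi$, show $\psi(w^\top x+b)\in\overline{\spanFplus_\nn}$ via uniform limits of Riemann sums, differentiate in $w$ to extract $\psi^{(|\alpha|)}(b)\,x^\alpha$, pick $b$ where the derivative is nonzero since a smooth non-polynomial has no identically vanishing derivative, and finish with Weierstrass; necessity by finite-dimensionality of degree-$\le k$ polynomials), but with one genuinely different ingredient: where \citet{leshno1993multilayer} establish that some mollification is non-polynomial via a Baire-category argument on the space of test functions (uniform degree bound, then a distributional conclusion that $\sigma^{(n+1)}=0$), you obtain the common degree bound directly from associativity and commutativity of convolution together with the observation that convolving a polynomial with a unit-integral, compactly supported function preserves its degree, and then pass to the limit along mollifiers using that the degree-$\le n$ polynomials form a closed finite-dimensional subspace of $C([a,b])$. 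That substitution is sound (Fubini justifies $(\sigma*\phi_1)*\phi_2=\sigma*(\phi_1*\phi_2)$ for continuous $\sigma$ and compactly supported smooth $\phi_i$) and is more elementary and self-contained; the Baire/distributional route of the original is what allows the weaker hypothesis of merely locally bounded, a.e.-continuous activations, which is not needed under the continuity assumed here. Two small touch-ups: the iterated difference quotients in $w$ are finite linear combinations of the functions $\psi(\tilde w^\top x+b)$, which lie in $\overline{\spanFplus_\nn}$ by your Riemann-sum step, not of elements of $\F_\nn$ themselves---phrase it that way (the closure is a linear subspace, so the conclusion is unchanged); and in the necessity direction, properness of the polynomial subspace needs $X$ to be infinite, a degenerate caveat inherited from the statement as written (``any compact set'') rather than a defect of your argument.
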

The above result requires that the parameters $[w,b]$ take values in $\R^{d+1}$. In practice, however, we can only efficiently search in bounded parameter sets of $[w,b]$ using local search methods like gradient descent.   
We observe that it is sufficient to replace $\R^{d+1}$ with a bounded parameter set $\Theta$ for non-decreasing homogeneous activation functions such as $\sigma(u) = \max\{u, 0\}^{\alpha}$ with $\alpha \in \N$; note that $\alpha = 1$ is the widely used rectified linear unit (ReLU).
	\begin{corollary}\label{cor:relu}
		Let $\Xdom \subset \R^d$ be any compact set, and $\sigma(u) = \max\{u, 0\}^{\alpha}$ ($\alpha \in \N$), and $\F_\nn = \{\sigma(w^\top x + b ) \colon [w,b]\in \Theta\}$. 
		Then 
		$\spanFplus_\nn$ is dense in $C_b(\Xdom)$ if 
		$$\{\lambda\theta: \lambda \ge 0, \theta \in\Theta \} = \R^{d+1}.$$
	\end{corollary}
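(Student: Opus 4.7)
The plan is to reduce to Theorem~\ref{thm:single} by exploiting positive homogeneity of $\sigma$. The key algebraic identity is that for every $\lambda \ge 0$,
\[
\sigma(\lambda u) = \max\{\lambda u, 0\}^\alpha = \lambda^\alpha \max\{u, 0\}^\alpha = \lambda^\alpha \sigma(u),
\]
so rescaling the pre-activation by a nonnegative factor merely contributes a nonnegative scalar to the output, which is absorbed into the linear combination defining $\spanFplus_\nn$. This single observation trades the possibly bounded parameter set $\Theta$ for the full $\R^{d+1}$ that Theorem~\ref{thm:single} requires.

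Concretely, I would proceed in three short steps. First, introduce the unrestricted family $\F'_\nn := \{\sigma(w^\top x + b) : [w,b] \in \R^{d+1}\}$, verify that $\sigma$ is continuous and non-polynomial for every $\alpha \in \N$ (it vanishes on $(-\infty,0]$ but is strictly positive on $(0,\infty)$, so no polynomial can agree with it), and invoke Theorem~\ref{thm:single} to conclude that the span of $\F'_\nn$ together with constants is dense in $C(X)$. Second, fix an arbitrary $[w,b] \in \R^{d+1}$; the cone hypothesis lets me decompose $[w,b] = \lambda[\tilde w, \tilde b]$ with $\lambda \ge 0$ and $[\tilde w, \tilde b] \in \Theta$, and homogeneity yields $\sigma(w^\top x + b) = \lambda^\alpha \sigma(\tilde w^\top x + \tilde b)$, a nonnegative scalar multiple of an element of $\F_\nn$ and hence a member of $\spanFplus_\nn$. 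Third, since every element of $\F'_\nn$ lies in $\spanFplus_\nn$, the entire span of $\F'_\nn$ with constants is contained in $\spanFplus_\nn$; taking closures gives $\overline{\spanFplus_\nn} \supseteq C(X) = C_b(X)$, where the final equality uses compactness of $X$.

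There is no serious obstacle beyond the homogeneity identity; the corollary is essentially a bookkeeping extension of Theorem~\ref{thm:single}. The only points worth double-checking are that $\sigma$ is genuinely non-polynomial for every $\alpha \in \N$ (handled above) and that the degenerate case $\lambda = 0$ is harmless, since it merely produces the zero function, which is already present in $\spanFplus_\nn$ as the trivial linear combination.
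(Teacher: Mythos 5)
Your proof is correct and follows essentially the same route as the paper: exploit the positive $\alpha$-homogeneity of $\sigma$ to write any unrestricted neuron $\sigma(w^\top x + b)$, $[w,b]\in\R^{d+1}$, as a nonnegative scalar multiple of a neuron with parameters in $\Theta$, so that $\spanFplus_\nn$ already contains the span of the unrestricted family, and then invoke Theorem~\ref{thm:single}. Your handling of the degenerate $\lambda=0$ case and the explicit check that $\sigma$ is non-polynomial are small additions, but the overall argument is the same (and your set inclusion is stated in the correct direction, whereas the paper's write-up has it reversed).
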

	For the case when $\Theta = \{\theta \in \R^{d+1}: \|\theta\|_2 \le 1\}$, \citet{bach2014breaking} not only proves that $\spanFplus_\nn$ is dense in ${\Lip}_1(\Xdom)$ (and thus dense in $C_b(X)$), but also gives the convergence rate. 

	Therefore, for ReLU activation functions, $\CalF_{\text{nn}}$ with bounded parameter sets, like $\{\theta: \|\theta\| \le 1\}$ or $\{\theta: \|\theta\| = 1\}$ for any norm on $\R^{d+1}$, is sufficient to discriminate any two Borel probability measures. 
	Note that this is not true for some other activation functions such as $\tanh$ or sigmoid, because there is an approximation gap between $\text{span}\{\sigma(w^\top x + b): [w, b] \in \Theta \subset \R^{d+1}\}$ and $C_b(X)$ when $\Theta \subset \R^{d+1}$ is bounded; see e.g., \citet{barron1993universal} (Theorem 3). 
	From this perspective, homogeneous activation functions such as ReLU are preferred as discriminators.  
	
	One advantage of using bounded parameter set $\Theta$ is that 
	it makes $\F_\nn$ have a bounded Lipschitz norm, and hence the corresponding neural distance is upper bounded by Wasserstein distance. In fact, W-GAN uses weight clipping to explicitly enforce $\| \theta\|_\infty\leq \delta$.  
	However, we should point out that the Lipschitz constraint does not help in making $\F$ discriminative since the constraint decreases, instead of enlarges, the function set $\F$. Instead, the role of the Lipschitz constraint should be mostly in stabilizing the training~\citep{arjovsky2017wasserstein} and assuring a generalization bound as we discuss in Section~\ref{sec:generalization}. Another related way to justify the Lipschitz constraint is its relation to metrizing weak convergence, as we discuss in the sequel.

	\paragraph{Neural distance and weak convergence.}
If $\F$ is discriminative, then $d_\F(\mu,\nu) = 0$ implies $\mu=\nu$. 
In practice, however, we often cannot achieve $d_{\CalF}(\mu, \nu) = 0$ strictly. Instead, we often have $d_{\CalF}(\mu, \nu_n) \to 0$ for a sequence of $\nu_n$ and want to establish the weak convergence $\nu_n \wto \mu$. 

\begin{theorem}\label{thm:weakconverge}
Let $(X, d_X)$ be any metric space. If $\spanFplus$ is dense in 
$C_b(X)$, we have $\lim_{n\to \infty} d_\CalF (\mu, \nu_n) = 0$ implies $\nu_n$ weakly converges to $\mu$. 

Additionally, if $\CalF$ is contained in a bounded Lipchitz function space, i.e., there exists $0<C<\infty$ such that $||f||_{\BL}\leq C $ for all $f \in \CalF$, then $\nu_n$ weakly converges to $\mu$ implies $\lim_{n\to \infty} d_\CalF (\mu, \nu_n) = 0$.    
\end{theorem}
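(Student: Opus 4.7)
The plan is to prove the two implications separately, with both arguments based on extending the duality $\E_\mu[f] - \E_{\nu_n}[f]$ beyond $\F$ itself.

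For the forward direction, I would first exploit linearity of the integral to upgrade $d_\F(\mu,\nu_n) \to 0$ to uniform control over $\spanFplus$: for any $g = \alpha_0 + \sum_{i=1}^k \alpha_i f_i \in \spanFplus$, the constant term cancels between two probability measures, and since $\F$ is assumed even (so $d_\F$ already upper-bounds $|\E_\mu[f_i] - \E_{\nu_n}[f_i]|$ for each $f_i \in \F$), we get $|\E_\mu[g] - \E_{\nu_n}[g]| \le \bigl(\sum_i |\alpha_i|\bigr)\, d_\F(\mu,\nu_n) \to 0$. Then, given any $f \in C_b(X)$ and $\epsilon>0$, density of $\spanFplus$ supplies $g_\epsilon \in \spanFplus$ with $\|f - g_\epsilon\|_\infty \le \epsilon$, and the triangle inequality
\[
\bigl|\E_\mu[f] - \E_{\nu_n}[f]\bigr| \;\le\; 2\epsilon + \bigl|\E_\mu[g_\epsilon] - \E_{\nu_n}[g_\epsilon]\bigr|
\]
combined with the previous step yields $\limsup_n |\E_\mu[f] - \E_{\nu_n}[f]| \le 2\epsilon$. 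Since $\epsilon$ is arbitrary, $\E_{\nu_n}[f] \to \E_\mu[f]$ for every $f \in C_b(X)$, which is precisely $\nu_n \wto \mu$. As the preceding remark already flags, this is the standard density argument used to prove uniqueness of weak limits.

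For the converse, the uniform BL-bound $\|f\|_\BL \le C$ gives $\F \subseteq C \cdot \{f : \|f\|_\BL \le 1\}$, so $d_\F(\mu,\nu_n) \le C\, d_\BL(\mu,\nu_n)$. As recalled in the preceding remark (Section~8.3 of \citet{bogachev2007measure}), $d_\BL$ metrizes weak convergence, so $\nu_n \wto \mu$ forces $d_\BL(\mu,\nu_n) \to 0$ and hence $d_\F(\mu,\nu_n) \to 0$.

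The only real obstacle sits in the converse part if one wishes to avoid citing the BL metrization theorem. A self-contained argument would go through tightness of $\{\mu\}\cup\{\nu_n\}$: pick a compact $K\subset X$ carrying all but $\epsilon$ of the mass uniformly in $n$; the equicontinuity and uniform boundedness implied by $\|f\|_\BL\le C$ make $\F|_K$ totally bounded in $C(K)$ by Arzel\`a--Ascoli, so a finite $\epsilon$-net in $\F|_K$ reduces uniform control over $\F$ to pointwise convergence $\E_{\nu_n}[g] \to \E_\mu[g]$ at finitely many centers, while the contributions from $X\setminus K$ are bounded by $2C\epsilon$ via $\|f\|_\infty \le C$. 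For the paper's setting, where $X$ is a compact subset of $\R^d$ in all applications, the short BL-citation argument is fully sufficient.
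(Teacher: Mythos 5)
Your proof is correct and follows essentially the same route as the paper: the forward direction is the standard density argument (control $\spanFplus$ via linearity and the $\ell^1$ coefficient norm, then approximate an arbitrary $f\in C_b(X)$ and use the triangle inequality — the paper merely packages this with an $n$-dependent approximant and its error-decay function, whereas you fix $g_\epsilon$ and take a $\limsup$), and the converse is identical, bounding $d_\F(\mu,\nu_n)\le C\,d_\BL(\mu,\nu_n)$ and invoking the metrization of weak convergence by the BL distance. The optional Arzel\`a--Ascoli sketch is unnecessary for the comparison and, as stated, would additionally need tightness of $\{\nu_n\}$ (e.g.\ $X$ Polish), but your main argument does not rely on it.
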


Theorem 10 of \cite{liu2017approximation} states a similar result for generic adversarial divergences, but does not obtain the specific weak convergence result for neural distances due to lacking of Theorem~\ref{thm:span}. Another difference is that Theorem 10 of \cite{liu2017approximation} heavily relies on the compactness assumption of $X$, while our result does not need this assumption. We provide the proof for Theorem~\ref{thm:weakconverge} in Appendix~\ref{app:discriminative}. 

When $X$ is compact, Wasserstein distance and the BL distance are equivalent and both metrize weak convergence. As we discussed earlier, the condition $\overline{\spanFplus} = C_b(X)$ and $\CalF\subseteq \Lip_K(X)$ are satisfied by neural networks $\CalF_{\text{nn}}$ with ReLU activation function and bounded parameter set $\Theta$. Therefore, the related neural distance $d_{\CalF_{\text{nn}}}$ is topologically equivalent to the Wasserstein and BL distance, because all of them metrize the weak convergence. 
This \emph{does not imply}, however, that they are equivalent in the metric sense (or strongly equivalent) since the ratio $d_\BL(\mu, \nu) / d_{\CalF_{\text{nn}}}(\mu, \nu)$ can be unbounded. In general, the neural distances are weaker than the BL distance because of smaller $\CalF$. In Section~\ref{sec:power} (and particularly Corollary~\ref{cor:BLbound}),  we draw more discussions on the bounds between BL distance and neural distances.

\subsection{Discriminative power of neural distances} 
\label{sec:power}  
Theorem~\ref{thm:span} characterizes 
the condition under which a neural distance is discriminative,
and shows that even neural networks with a single neuron are sufficient to be discriminative. 
This does not explain, however, why it is beneficial to use larger and deeper networks as we do in practice. What is missing here is to frame and understand \emph{how discriminative or strong} 
a neural distance is. This is because even if $d_\F(\mu,\nu)$ 
is discriminative, it can be \emph{relatively weak} in that $d_\F(\mu,\nu)$ may be small when $\mu$ and $\nu$ are very different under standard metrics (e.g., BL distance). 
Obviously, a larger $\F$ yields a stronger neural distance, that is, if $\F\subset \F'$, then $d_{\F}(\mu,\nu) \leq d_{\F'}(\mu,\nu)$. For example, because it is reasonable to assume that neural networks are bounded Lipschitz when $\Xdom$ and $\Theta$ are bounded, we can control a neural distance with the BL distance: 
$$
d_\F(\mu,\nu) \leq  C d_\BL(\mu,\nu), 
$$
where $C := \sup_{f\in\F}\{||f||_\BL\} < \infty$. 
A more difficult question is if we can establish inequalities in the other direction, that is, controlling 
$d_\BL(\mu,\nu)$, or in general a stronger $d_{\F'}(\mu,\nu)$, with a weaker $d_\F(\mu,\nu)$ in some way. In this section, we characterize conditions under which this is possible and develop bounds that allow us to use neural distances to control stronger distances such as BL distance, and even KL divergence. These bounds are used in Section~\ref{sec:generalization} to translate generalization bounds in $d_\F(\mu,\nu)$ to that in BL distance and KL divergence.

The core of the discussion involves understanding how $d_\F(\mu,\nu)$ can be used to control the difference of the moment $|\E_{\mu} g  - \E_{\nu}g|$ for $g$ \emph{outside of $\F$}. We address this problem by two steps: first controlling functions in $\spanFplus$, and then functions in $\overline{\spanFplus}$ that is large enough to include  $C_b(\Xdom)$ for neural networks.

\paragraph{Controlling functions in $\spanFplus$.}
We start with understanding how $d_\F(\mu,\nu)$ can bound $|\E_{\mu} g  - \E_{\nu} g|$ for $g \in \spanFplus$. 
This can be characterized by introducing a notion of norm on $\spanFplus$. 

\begin{proposition}\label{prop:powerspan}
For each $g \in \spanFplus$ that can be decomposed into $g = \sum_{i=1}^n w_i f_i + w_0$ as we define in \eqref{eqn:span}, the $\F$-variation norm $||g||_{\F,1}$ of $g$ is the infimum of $\sum_{i=1}^n|w_i |$ among all possible decompositions of $g$, that is, 
$$
||g||_{\F,1} = \inf \bigg\{ \sum_{i=1}^n |w_i | \colon   g = \sum_{i=1}^n w_i f_i + w_0, ~ \forall n \in \N , ~ w_0, w_i \in \R, ~  f_i \in \F \bigg\}. 
$$ 
Then we have 
$$
| \E_\mu g - \E_\nu g|  \leq   ||g ||_{\F,1} ~d_\F(\mu, \nu) , ~~~~~\forall g \in \spanFplus. 
$$
\end{proposition}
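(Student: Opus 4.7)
The plan is direct: bound the moment difference on a single decomposition, then take infimum. Fix any representation $g = \sum_{i=1}^n w_i f_i + w_0$ with $f_i \in \F$ and constants $w_i \in \R$. First, I would observe that the constant $w_0$ cancels under the difference, giving
\[
\E_\mu g - \E_\nu g \;=\; \sum_{i=1}^n w_i \bigl(\E_\mu f_i - \E_\nu f_i\bigr).
\]

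Next, since $\F$ is assumed even (as noted at the start of Section~\ref{sec:discriminative}), both $f_i$ and $-f_i$ lie in $\F$, so $\sup_{f\in\F}(\E_\mu f - \E_\nu f) = \sup_{f\in\F}|\E_\mu f - \E_\nu f| = d_\F(\mu,\nu)$. In particular $|\E_\mu f_i - \E_\nu f_i| \le d_\F(\mu,\nu)$ for each $i$. Combining with the triangle inequality,
\[
|\E_\mu g - \E_\nu g| \;\le\; \sum_{i=1}^n |w_i|\,|\E_\mu f_i - \E_\nu f_i| \;\le\; \Bigl(\sum_{i=1}^n |w_i|\Bigr)\, d_\F(\mu,\nu).
\]
This inequality holds for \emph{every} valid decomposition of $g$; taking the infimum of $\sum_i |w_i|$ over all such decompositions yields $|\E_\mu g - \E_\nu g| \le \|g\|_{\F,1}\, d_\F(\mu,\nu)$, which is the claim.

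There is no real obstacle here; the only points requiring a word of justification are (i) that the bias term $w_0$ drops out of the moment difference because $\E_\mu[w_0]=\E_\nu[w_0]=w_0$, and (ii) that evenness of $\F$ lets us absorb the signs of the $w_i$ without doubling constants. Since the bound holds uniformly over all decompositions before any infimum is taken, passing to the infimum is unproblematic, and one does not need to worry about whether the infimum is attained. The only mild subtlety is that $\|g\|_{\F,1}$ is a seminorm-style quantity (constants have norm zero), which is consistent with the cancellation of $w_0$ above and so causes no trouble.
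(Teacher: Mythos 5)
Your argument is correct and follows essentially the same route as the paper's own proof: bound $|\E_\mu g - \E_\nu g|$ for a fixed decomposition via the triangle inequality and $|\E_\mu f_i - \E_\nu f_i|\le d_\F(\mu,\nu)$, then take the infimum over decompositions. If anything, you are slightly more careful than the paper, which writes an equality where the triangle inequality is meant and leaves the evenness of $\F$ (needed to bound the absolute moment differences) implicit.
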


Intuitively speaking, $||g ||_{\F,1}$ denotes the ``minimum number'' of functions in $\F$ needed to represent $g$. 
As $\F$ becomes larger, $||g ||_{\F, 1}$ decreases and $d_{\F}(\mu,\nu)$ can better control $| \E_\mu g - \E_\nu g| $. 
Precisely, if $\F \subseteq \F'$ then $|| g||_{\F', 1} \leq || g||_{\F, 1}.$ Therefore, although adding more neurons in $\F$ may not necessarily enlarge $\spanFplus$, it decreases $||g||_{\F,1}$ and yields a stronger neural distance. 

\paragraph{Controlling functions in $\overline{\spanFplus}$.} A more critical question is how the neural distance $d_\F(\mu,\nu)$ can also control the discrepancy $\E_\mu g - \E_\nu  g$ for functions outside of $\spanFplus$ but inside $\overline{\spanFplus}$. 
The bound in this case is characterized by a notion of error decay function defined as follows. 
\begin{proposition}\label{pro:power}
Given a function $g$, we say that $g$ is approximated by $\F$ with error decay function $\epsilon(r)$ 
if for any $r\geq 0$, 
there exists an $f_r \in \spanFplus $ with $||f_r ||_{\F, 1} \leq r$ such that
$||f - f_r||_\infty \leq  \epsilon(r)$. 
Therefore, $g \in \overline{\spanFplus}$ if and only if $\inf_{r\geq0} \epsilon(r) = 0$. 
We have 
$$
|\E_\mu g - \E_\nu g|   \leq \inf_{r\geq 0} \{ 2 \epsilon(r) + r ~ d_\F(\mu, \nu)\}.
$$
In particular, if $\epsilon(r) = O(r^{-\kappa})$ for some $\kappa > 0$, then $|\E_\mu g - \E_\nu g|  = O(d_\F(\mu,\nu)^{\frac{\kappa}{\kappa+1}}).$
\end{proposition}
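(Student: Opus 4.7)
The plan is to use the approximator $f_r \in \text{span}_+\F$ as a bridge between $g$ and the neural distance $d_\F(\mu,\nu)$, splitting the target quantity via a triangle-inequality-style decomposition and then optimizing over $r$.

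Concretely, for any fixed $r \geq 0$, pick $f_r \in \text{span}_+\F$ with $\|f_r\|_{\F,1} \leq r$ and $\|g - f_r\|_\infty \leq \epsilon(r)$, and write
\begin{equation*}
\E_\mu g - \E_\nu g \;=\; \E_\mu(g - f_r) \;+\; \bigl(\E_\mu f_r - \E_\nu f_r\bigr) \;+\; \E_\nu(f_r - g).
\end{equation*}
The first and third terms are each bounded in absolute value by $\|g - f_r\|_\infty \leq \epsilon(r)$, since $\mu$ and $\nu$ are probability measures. For the middle term, I would invoke Proposition~\ref{prop:powerspan} applied to $f_r \in \text{span}_+\F$, which yields $|\E_\mu f_r - \E_\nu f_r| \leq \|f_r\|_{\F,1}\, d_\F(\mu,\nu) \leq r\, d_\F(\mu,\nu)$. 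Summing the three bounds and taking the infimum over $r \geq 0$ gives the first claim:
\begin{equation*}
|\E_\mu g - \E_\nu g| \;\leq\; \inf_{r \geq 0}\bigl\{\,2\epsilon(r) + r\, d_\F(\mu,\nu)\,\bigr\}.
\end{equation*}

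For the second part, assume $\epsilon(r) \leq C r^{-\kappa}$ for some constants $C, \kappa > 0$ and write $D := d_\F(\mu,\nu)$. Then the objective $2 C r^{-\kappa} + r D$ is balanced by choosing $r$ so that the two terms are of the same order, namely $r \asymp (C/D)^{1/(\kappa+1)}$. Substituting back, both $2\epsilon(r)$ and $rD$ scale as $D^{\kappa/(\kappa+1)}$ (up to a constant depending on $C$ and $\kappa$), which yields $|\E_\mu g - \E_\nu g| = O(d_\F(\mu,\nu)^{\kappa/(\kappa+1)})$, as desired.

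The proof is essentially routine once Proposition~\ref{prop:powerspan} is in hand; the only mild point to watch is that the approximation error enters twice (once under $\mu$ and once under $\nu$), which is the source of the factor $2$ in front of $\epsilon(r)$. There is no real obstacle: the decomposition is the natural one, and the rate computation is a standard trade-off between an approximation term decaying in $r$ and a statistical term growing linearly in $r$.
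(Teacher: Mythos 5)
Your proof is correct and follows essentially the same route as the paper: the same three-term triangle-inequality decomposition through $f_r$, the bound $|\E_\mu f_r - \E_\nu f_r| \le r\, d_\F(\mu,\nu)$ via Proposition~\ref{prop:powerspan}, and the infimum over $r$, with the standard balancing $r \asymp d_\F(\mu,\nu)^{-1/(\kappa+1)}$ for the rate. No gaps.
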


It requires further efforts to derive the error decay function for specific $\F$ and $g$. For example, Proposition~6 of \citet{bach2014breaking} allows us to derive the decay rate of approximating bounded Lipschitz functions with 
rectified neurons, yielding a bound between BL distance and neural distance. 

\begin{corollary}\label{cor:BLbound}
Let $X$ be the unit ball of $\R^d$ under norm $|| \cdot ||_q$ for some $q \in [2,\infty)$, that is, $X = \{x\in \R^d \colon ||x ||_q \leq 1\}$. 
Consider $\F$ consisting of a single rectified neuron %of form 
$\F = \{ \max(v^\top [x; 1], ~0 )^\alpha \colon v \in \R^{d+1}, ~ || v ||_p = 1\}$ where $\alpha \in \N$, $\frac{1}{p}+\frac{1}{q} = 1$. 
Then we have 
\begin{align}\label{eqn:BLbound}
    d_\BL(\mu, \nu) = \tilde{O}(d_{\F}(\mu, \nu)^{\frac{1}{\alpha + (d + 1)/2}}), 
\end{align}
where $\tilde{O}$ denotes the big-O notation ignoring the logarithm factor.
\end{corollary}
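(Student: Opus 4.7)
The plan is to combine Proposition~\ref{pro:power} with an approximation rate for bounded Lipschitz functions by linear combinations of rectified neurons. Pick an arbitrary $g \in \BL(X)$ with $\|g\|_\BL \le 1$; the goal is to bound $|\E_\mu g - \E_\nu g|$ uniformly in such $g$ and then take the supremum to obtain $d_\BL$. Proposition~\ref{pro:power} reduces this to producing an error decay function $\epsilon(r)$ that controls how well $g$ can be approximated in $\|\cdot\|_\infty$ by an element of $\spanFplus$ of $\F$-variation norm at most $r$.

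For the approximation rate, I would invoke Proposition~6 of \citet{bach2014breaking}. Under the stated geometry (unit $\ell_q$-ball as domain, single neurons $\max(v^\top[x;1],0)^\alpha$ normalized by $\|v\|_p = 1$ with $1/p + 1/q = 1$), that proposition gives a uniform approximation of any $1$-Lipschitz function on $X$ by $\spanFplus$ with variation norm at most $r$, and an error of order
\[
\epsilon(r) = \tilde{O}\!\left( r^{-2/(2\alpha + d - 1)} \right),
\]
up to logarithmic factors in $r$. Bach's proposition is stated precisely for this pairing of norms on parameters and inputs, so it transfers directly once we verify that bounded Lipschitz functions (as opposed to only Lipschitz) are covered; this is immediate since boundedness is implied by Lipschitz continuity on the compact set $X$ up to a constant that we can absorb.

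With this decay rate in hand, Proposition~\ref{pro:power} yields
\[
|\E_\mu g - \E_\nu g| \le \inf_{r \ge 0}\bigl\{ 2\epsilon(r) + r\, d_\F(\mu,\nu) \bigr\}.
\]
Balancing $\epsilon(r)$ and $r\, d_\F(\mu,\nu)$ by choosing $r \sim d_\F(\mu,\nu)^{-(2\alpha+d-1)/(2\alpha+d+1)}$ gives, with $\kappa = 2/(2\alpha+d-1)$, the exponent $\kappa/(\kappa+1) = 1/(\alpha + (d+1)/2)$, exactly as claimed. Taking the supremum over $g$ with $\|g\|_\BL \le 1$ then produces the bound \eqref{eqn:BLbound}, with the $\tilde{O}$ absorbing the logarithmic factors from Bach's approximation.

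The main obstacle, beyond routine bookkeeping, is ensuring that the hypotheses of \citet{bach2014breaking}'s Proposition~6 are satisfied in our exact setting: the $\|v\|_p=1$ normalization and the $\ell_q$-ball domain must align as assumed, the activation $\max(\cdot,0)^\alpha$ must be covered, and the variation norm used there must coincide (or be upper bounded by) $\|\cdot\|_{\F,1}$ from Proposition~\ref{prop:powerspan}. The other subtlety is controlling the logarithmic factors so that they can legitimately be absorbed into $\tilde{O}$ after optimizing $r$; this requires checking that the implicit log terms in $\epsilon(r)$ remain polylogarithmic in $d_\F(\mu,\nu)^{-1}$ at the optimizing value of $r$.
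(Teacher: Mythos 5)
Your proposal is correct and follows essentially the same route as the paper's proof: invoke the approximation rate of \citet{bach2014breaking} for bounded Lipschitz functions by single rectified neurons (the paper records it as $\epsilon(r) = O(\eta(r/\eta)^{-1/(\alpha+(d-1)/2)}\log(r/\eta))$, which matches your $\tilde{O}(r^{-2/(2\alpha+d-1)})$ up to the logarithm), feed this error decay into Proposition~\ref{pro:power}, balance the two terms to obtain the exponent $\frac{1}{\alpha+(d+1)/2}$, and take the supremum over $\|g\|_{\BL}\le 1$. Your exponent bookkeeping and the choice of the optimizing $r$ are both correct.
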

The result in \eqref{eqn:BLbound} shows that $d_\F(\mu,\nu)$ gives an increasingly weaker bound when the dimension $d$ increases. This is expected because we approximate a non-parametric set with a parametric one.

\paragraph{Likelihood and KL divergence.}
%\label{sec:likelihood}
Maximum likelihood has been the predominant approach in statistical learning, and testing likelihood forms a standard criterion for testing unsupervised models. The recent advances in deep unsupervised learning, however, make it questionable whether likelihood is the right objective for training and evaluation \citep[e.g.,][]{theis2015note}. For example, some recent empirical studies \citep[e.g.,][]{danihelka2017comparison, grover2017flow} showed a counter-intuitive phenomenon that both the testing and training likelihood (assuming generators with valid densities are used) tend to decrease, instead of increase, as the GAN loss is minimized. A hypothesis for explaining this is that the neural distances used in GANs are too weak to control the KL divergence properly. Therefore, from the theoretical perspective, it is desirable to understand under what conditions (even if it is a very strong one), the neural distance can be strong enough to control KL divergence. This can be done by the following simple result. 
\begin{proposition}\label{pro:kl}
Assume $\mu$ and $\nu$ have positive density functions $\rho_\mu(x)$ and $\rho_\nu(x)$, respectively. Then 
$$
\KL(\mu || \nu) + \KL(\nu || \mu) = \E_{\mu} \log(\rho_\mu/\rho_{\nu}) - \E_\nu \log(\rho_\mu/\rho_{\nu}). 
$$
If $\log (\rho_\mu/\rho_{\nu}) \in \spanFplus$, then 
\begin{align}\label{equ:kl}
\KL(\mu || \nu) + \KL(\nu || \mu) \leq ||\log(\rho_\mu/\rho_{\nu}) ||_{\F, 1} ~ d_\F(\mu, \nu). 
\end{align}
If $\log (\rho_\mu/\rho_\nu) \in \overline{\spanFplus}$ with an error decay function $\epsilon(r) =O(r^{-\kappa})$, then 
\begin{align}\label{eqn:kldecay}
    \KL(\mu || \nu) + \KL(\nu || \mu)= O(d_\F(\mu,\nu)^{\frac{\kappa}{\kappa+1}}).
\end{align}
\end{proposition}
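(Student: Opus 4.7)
The plan is to leverage Propositions~\ref{prop:powerspan} and~\ref{pro:power} directly, after reducing the symmetrized KL divergence to a single moment-difference expression.

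First I would verify the identity by expanding the definitions of KL divergence:
\begin{align*}
\KL(\mu\|\nu) &= \E_\mu \log(\rho_\mu/\rho_\nu), \\
\KL(\nu\|\mu) &= \E_\nu \log(\rho_\nu/\rho_\mu) = -\E_\nu \log(\rho_\mu/\rho_\nu).
\end{align*}
Adding these gives $\KL(\mu\|\nu) + \KL(\nu\|\mu) = \E_\mu g - \E_\nu g$ where $g := \log(\rho_\mu/\rho_\nu)$. This already rewrites the symmetric KL as the moment gap of a \emph{single} function, which is the quantity that neural distances are designed to control. Note that the sum on the left is nonnegative, so in fact $\KL(\mu\|\nu)+\KL(\nu\|\mu) = |\E_\mu g - \E_\nu g|$.

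Next, for inequality~\eqref{equ:kl}, I would simply apply Proposition~\ref{prop:powerspan} to $g = \log(\rho_\mu/\rho_\nu)$ under the hypothesis $g \in \spanFplus$, yielding $|\E_\mu g - \E_\nu g| \leq \|g\|_{\F,1} d_\F(\mu,\nu)$, which combined with the identity above gives the claim. For the rate~\eqref{eqn:kldecay}, I would apply Proposition~\ref{pro:power} to the same $g$, obtaining
\[
\KL(\mu\|\nu) + \KL(\nu\|\mu) \leq \inf_{r\geq 0}\{ 2\epsilon(r) + r\, d_\F(\mu,\nu) \}.
\]
Then, substituting the assumed decay $\epsilon(r) = O(r^{-\kappa})$ and optimizing, I would balance the two terms by choosing $r$ so that $r^{-\kappa}$ and $r\, d_\F(\mu,\nu)$ are of the same order, i.e., $r \asymp d_\F(\mu,\nu)^{-1/(\kappa+1)}$. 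Plugging this back yields the advertised $O(d_\F(\mu,\nu)^{\kappa/(\kappa+1)})$ rate.

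There is essentially no hard step here: the proof is almost entirely a matter of pattern-matching the symmetric KL into the general moment-difference machinery of the previous two propositions, plus a routine one-parameter optimization. The only mild subtlety I would be careful about is ensuring that the density assumptions (positive densities with respect to a common base measure, integrability of $\log(\rho_\mu/\rho_\nu)$ against both $\mu$ and $\nu$) are in force so that the initial identity is well-defined; these are implicit in assuming $g\in\spanFplus$ or $g\in\overline{\spanFplus}$ with a finite $\F$-variation or error decay function.
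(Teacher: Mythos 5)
Your proposal is correct and is essentially the paper's argument: the symmetric KL is rewritten as the moment gap $\E_\mu g - \E_\nu g$ for $g = \log(\rho_\mu/\rho_\nu)$, and the two bounds then follow immediately by invoking Proposition~\ref{prop:powerspan} (for $g \in \spanFplus$) and Proposition~\ref{pro:power} (for $g \in \overline{\spanFplus}$), whose last clause already contains the $O(d_\F(\mu,\nu)^{\kappa/(\kappa+1)})$ rate you re-derive by balancing $r$.
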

This result shows that we require that the density ratio $\log (\rho_\mu/\rho_\nu)$ should exist and behave nicely in $\spanFplus$ or $\overline{\spanFplus}$ in order to bound KL divergence with $d_\F(\mu,\nu)$. 
If either $\mu$ or $\nu$ is an empirical measure, the bound is vacuum since $\KL(\mu,\nu) + \KL(\mu,\nu)$ equals infinite, while $d_\F(\mu,\nu)$ remains finite once $\F$ is bounded, i.e., $||f||_\infty\leq\Delta < \infty$ for all $f\in \F$. 

Obviously, this strong condition is hard to satisfy in practice, because practical data distributions and generators in GANs often have no densities or at least highly peaky densities. We draw more discussions in Corollary~\ref{cor:klgen}.

\section{Generalization property of GANs}
	\label{sec:generalization}
	
Section~\ref{sec:discriminative} suggests that it is better to use larger discriminator set $\F$ in order to obtain stronger neural distance. However, why do regularization techniques, which effectively shrink the discriminator set, help GAN training in practice? The answer has to do with the fact that we observe the true model $\mu$ only through an i.i.d. sample of size $m$ (whose empirical measure is denoted by $\hat \mu_m$), and hence can only optimize the empirical loss $d_\F(\hat \mu_m,\nu)$, instead of the exact loss $d_\F(\mu, \nu)$. 
Therefore, generalization bounds are required to control the exact loss $d_\F(\mu, \nu)$ when we can only minimize its empirical version $d_\F(\hat \mu_m, \nu)$. Specifically, let $\G$ be a class of generators that may or may not include the unknown true distribution $\mu$. Assume $\nu_m$ minimizes the GAN loss $d_{\F}(\hat\mu_m, \nu)$ up to an $\epsilon$  ($\epsilon \geq 0$)  accuracy, that is, 
\begin{equation}\label{eqn:optcondition}
d_{\CalF}(\hat{\mu}_m, \nu_m) \leq \inf_{\nu \in \CalG} d_{\CalF}(\hat{\mu}_m, \nu)~ + ~\epsilon. 
\end{equation}
We are interested in bounding the difference between $\nu_m$ and the unknown $\mu$ under certain evaluation metric. Depending on what we care about, we may be interested in the generalization error in terms of the neural distance $d_{\CalF}(\mu, \nu_m)$, or other standard quantities of interest such as BL distance $d_\BL(\mu,\nu_m)$ and KL divergence $\KL(\mu,\nu_m)$ or the testing likelihood. 
	 
In this section, we adapt the standard Rademacher complexity argument to establish generalization bounds for GANs. We show that the discriminator set $\F$ should be small enough to be generalizable, striking a tradeoff with the other requirement that it should be large enough to be discriminative. We first present the generalization bound under neural distance, which purely depends on the Rademacher complexity of the discriminator set $\F$ and is independent of the generator set $\CalG$. Then using the results in Section~\eqref{sec:power}, we discuss the generalization bounds under other standard metrics, like BL distance and KL divergence. 
	
\subsection{Generalization under neural distance}
\label{subsec:gen_neural}
Using the standard derivation and the optimality condition~\eqref{eqn:optcondition}, we have (see Appendix~\ref{app:generalization})
	\begin{align}\label{eqn:genNN2}
	\begin{split}
	d_{\CalF}(\mu, \nu_m) - \inf_{\nu \in \CalG} d_{\CalF}(\mu, \nu)
	&  \le 2 \sup_{f\in \CalF} \left| \Expect_{\mu}[f] - \Expect_{\hat{\mu}_m}[f]\right| + \epsilon  \\
	&  = 2 d_\F(\mu, \hat\mu_m) + \epsilon. 
	\end{split}
	\end{align}
%	
	%Notice that the upper bound in~\eqref{eqn:genNN2} is independent of $\CalG$.
	This reduces the problem to bounding the discrepancy $d_\F(\mu, \hat \mu_m) :=  \sup_{f\in \CalF} \left| \Expect_{\mu}[f] - \Expect_{\hat{\mu}_m}[f]\right|$ between the true model $\mu$ and its empirical version $\hat\mu_m$. 
	This can be achieved by the uniform concentration bounds developed in statistical learning theory \citep[e.g.,][]{vapnik1998statistical} and empirical process \citep[e.g.,][]{van2000applications}. 
	In particular, the concentration property related to $\sup_{f\in \CalF} \left| \Expect_{\mu}[f] - \Expect_{\hat{\mu}_m}[f]\right|$
	can be characterized by the Rademacher complexity of $\F$ (w.r.t. measure $\mu$), defined as
	\begin{align}\label{eqn:rede}
R_m^{(\mu)}(\CalF):=\Expect \left[\sup_{f\in \CalF} \frac{2}{m}\sum_i \tau_i f(X_i)\right], 
\end{align}
where the expectation is taken w.r.t. $X_i \sim \mu$, 
and Rademacher random variable $\tau_i$:  $\mathrm{prob}(\tau_i = 1) = \mathrm{prob}(\tau_i = -1) = 1/2.$  
Intuitively, $R_m^{(\mu)}(\CalF)$ characterizes the ability of overfitting with pure random labels using functions in $\CalF$ 
and hence relates to the generalization bounds. Standard results in learning theory show that 
$$
\sup_{f\in \CalF} \left| \Expect_{\mu}[f] - \Expect_{\hat{\mu}_m}[f]\right| \leq R_m^{(\mu)}(\CalF) + O(\Delta \sqrt{\frac{\log(1/\delta)}{m}}), 
$$
where $\Delta = \sup_{f\in \F} ||f ||_{\infty}$. 
Combining this with \eqref{eqn:genNN2}, we obtain the following result. 
\begin{theorem}\label{thm:Rademacher}
	Assume that the discriminator set $\CalF$ is even, i.e., $f \in \CalF$ implies $-f \in \CalF$, and that all discriminators are bounded by $\Delta$, i.e., $\|f\|_\infty \le \Delta$ for any $f \in \CalF$.  
	Let $\hat{\mu}_m$ be an empirical measure of an i.i.d. sample of size $m$ drawn from $\mu$.  
	Assume $\nu_m \in \CalG$ satisfies $d_{\CalF}(\hat{\mu}_m, \nu_m) \le \inf_{\nu \in \CalG} d_{\CalF}(\hat{\mu}_m, \nu) + \epsilon$.
	Then with probability at least $1 - \delta$, we have
	\begin{equation}\label{eqn:rademacher}
	d_{\CalF}(\mu, \nu_m) - \inf_{\nu \in \CalG} d_{\CalF}(\mu, \nu) \le 2 R_m^{(\mu)}(\CalF) + 2 \Delta \sqrt{\frac{2 \log(1/\delta)}{m}} + \epsilon,
	\end{equation}
	where  $R_m^{(\mu)}(\CalF)$ is the Rademacher complexity of $\CalF$ defined in \eqref{eqn:rede}. 
\end{theorem}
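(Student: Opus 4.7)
The plan is to combine the reduction already displayed in equation~\eqref{eqn:genNN2} with a standard symmetrization plus concentration argument. Equation~\eqref{eqn:genNN2} already gives
\[
d_{\CalF}(\mu,\nu_m) - \inf_{\nu\in\CalG} d_{\CalF}(\mu,\nu) \le 2\,d_{\CalF}(\mu,\hat\mu_m) + \epsilon,
\]
so the task reduces to controlling $d_{\CalF}(\mu,\hat\mu_m) = \sup_{f\in\CalF}\bigl|\E_\mu[f] - \E_{\hat\mu_m}[f]\bigr|$, where the absolute value is legitimate because $\CalF$ is even. I would then split the control into (a) a bound on the expected supremum via the Rademacher complexity, and (b) a concentration step to pass from the expectation to a high-probability statement.

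For step (a), I would use the classical symmetrization trick: introduce a ghost sample $X_1',\dots,X_m'$ i.i.d.\ from $\mu$ and independent Rademacher variables $\tau_i$. Jensen's inequality and the tower property yield
\[
\E\sup_{f\in\CalF}\bigl|\E_\mu f - \E_{\hat\mu_m} f\bigr|
\le \E\sup_{f\in\CalF}\Bigl|\tfrac{1}{m}\sum_i\bigl(f(X_i')-f(X_i)\bigr)\Bigr|
= \E\sup_{f\in\CalF}\Bigl|\tfrac{1}{m}\sum_i\tau_i\bigl(f(X_i')-f(X_i)\bigr)\Bigr|,
\]
and, after applying the triangle inequality in $\tau_i$ and invoking the evenness of $\CalF$ to drop the absolute values, this is bounded by $R_m^{(\mu)}(\CalF)$.

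For step (b), note that the function $(X_1,\dots,X_m)\mapsto d_{\CalF}(\mu,\hat\mu_m)$ has bounded differences: replacing any single $X_i$ changes any empirical mean $\E_{\hat\mu_m}[f]$ by at most $2\Delta/m$, and hence changes the supremum by at most $2\Delta/m$. McDiarmid's inequality therefore gives, with probability at least $1-\delta$,
\[
d_{\CalF}(\mu,\hat\mu_m) \le \E\bigl[d_{\CalF}(\mu,\hat\mu_m)\bigr] + \Delta\sqrt{\frac{2\log(1/\delta)}{m}} \le R_m^{(\mu)}(\CalF) + \Delta\sqrt{\frac{2\log(1/\delta)}{m}}.
\]
Plugging this into the reduction \eqref{eqn:genNN2} and multiplying by $2$ yields exactly the bound \eqref{eqn:rademacher}.

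There is no real obstacle here; the argument is a textbook assembly of a deterministic reduction, symmetrization, and a bounded-differences concentration inequality. The only small points requiring care are (i) using the assumption that $\CalF$ is even at two places---once to turn the one-sided IPM into a two-sided supremum so the triangle inequality $d_{\CalF}(\mu,\nu_m)\le d_{\CalF}(\mu,\hat\mu_m)+d_{\CalF}(\hat\mu_m,\nu_m)$ applies, and once to identify the symmetrized supremum with $R_m^{(\mu)}(\CalF)$ as defined without absolute values in~\eqref{eqn:rede}---and (ii) using the uniform bound $\|f\|_\infty\le\Delta$ to get the bounded-differences constant needed in McDiarmid. Both are immediate from the hypotheses of the theorem.
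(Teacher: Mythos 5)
Your proposal is correct and follows essentially the same route as the paper's proof: the reduction via \eqref{eqn:genNN2}, symmetrization to bound the expected supremum by $R_m^{(\mu)}(\CalF)$, and McDiarmid's bounded-differences inequality with constant $2\Delta/m$, with matching constants throughout. The only (immaterial) difference is that you carry the absolute value through the symmetrization and remove it via evenness at the end, whereas the paper drops it at the outset; both uses of evenness and of the bound $\|f\|_\infty\le\Delta$ are exactly as in the paper.
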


We obtain nearly the same generalization bound for neural $f$-divergence in Theorem~\ref{thm:Rademacherdiv}. Theorem~\ref{thm:Rademacher} relates the generalization error of GANs to the Rademacher complexity of the discriminator set $\CalF$. The smaller the discriminator set $\CalF$ is, the more generalizable the result is. 
Therefore, the choice of $\CalF$ should strike a subtle balance between the generalizability and the discriminative power: $\F$ should be large enough to make $d_\F(\mu,\nu)$ discriminative as we discuss in Section~\ref{sec:power}, and simultaneously should be small enough to have a small generalization error in \eqref{eqn:rademacher}. It turns out parametric neural discriminators strike a good balance for this purpose, 
given that it is both discriminative as we show in Section~\ref{sec:power}, and give small generalization bound as we show in the following. 	
\begin{corollary}\label{cor:BLgeneralize}
Let $X$ be the unit ball of $\R^d$ under norm $|| \cdot ||_2$, that is, $X = \{x\in \R^d \colon ||x ||_2 \leq 1\}$. 
Assume that $\F$ is neural networks with a single rectified linear unit (ReLU) $\F = \{ \max(v^\top [x; 1], ~0 ) \colon v \in \R^{d+1}, ~ || v ||_2 = 1\}$. % where $\alpha \in \N$.  %R$\frac{1}{p}+\frac{1}{q} = 1$.
Then with probability at least $1-\delta$,
\begin{equation}\label{eqn:rademacherRelu}
d_{\CalF}(\mu, \nu_m) \leq \inf_{\nu \in \CalG} d_{\CalF}(\mu, \nu)~ + ~ \frac{C}{\sqrt{m}} ~+~ \epsilon 
\end{equation}	
and 
\begin{align}\label{eqn:BLg}
d_\BL(\mu, \nu_m) = 
\tilde{O}\bigg ( \bigg[\inf_{\nu \in \CalG} d_{\CalF}(\mu, \nu) 
~+~ \frac{C}{\sqrt{m}} 
%\sqrt{\frac{2 (d+1)\log((d+1)/\epsilon)}{m}}+ 2 \Delta \sqrt{2 \log(1/\delta)/m} 
~+ ~ \epsilon \bigg ]^{\frac{1}{(d + 3)/2}} \bigg),
\end{align}
where 
$C = 4\sqrt{2} +  4 \sqrt{\log(1/\delta)}$ and $\tilde{O}$ denotes the big-O notation ignoring the logarithm factor. 
% $\eta/R$-linbisuo$g$
%Using $f(x) = $
\end{corollary}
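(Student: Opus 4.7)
The plan is to apply Theorem~\ref{thm:Rademacher} with the specific discriminator class $\CalF$ of single ReLU neurons, then invoke Corollary~\ref{cor:BLbound} to translate the resulting neural-distance bound to a BL-distance bound. The proof breaks into three self-contained pieces: an $\|\cdot\|_\infty$ bound on $\CalF$, a Rademacher-complexity bound on $\CalF$, and a substitution step.

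First I would bound $\Delta = \sup_{f\in\CalF}\|f\|_\infty$. Since $X$ is the unit Euclidean ball, $\|[x;1]\|_2 \le \sqrt{2}$ on $X$, and for any $v$ with $\|v\|_2 = 1$, by Cauchy--Schwarz $|v^\top[x;1]| \le \sqrt{2}$, hence $\|\max(v^\top[x;1],0)\|_\infty \le \sqrt{2}$, i.e., $\Delta = \sqrt{2}$. Next I would bound the Rademacher complexity $R_m^{(\mu)}(\CalF)$. ReLU is $1$-Lipschitz and vanishes at $0$, so by the Ledoux--Talagrand contraction principle
\begin{equation*}
R_m^{(\mu)}(\CalF) \le R_m^{(\mu)}\bigl(\{v^\top[x;1] \colon \|v\|_2 = 1\}\bigr).
\end{equation*}
For the linear class, pulling the supremum through and using Cauchy--Schwarz plus Jensen gives
\begin{equation*}
R_m^{(\mu)}(\{v^\top[x;1]\colon \|v\|_2=1\}) = \frac{2}{m}\,\E\Bigl\|\sum_{i=1}^m \tau_i [X_i;1]\Bigr\|_2 \le \frac{2}{m}\sqrt{\sum_i \E\|[X_i;1]\|_2^2} \le \frac{2\sqrt{2}}{\sqrt{m}},
\end{equation*}
using $\|[X_i;1]\|_2^2 \le 2$ and independence of the $\tau_i$. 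Plugging $\Delta = \sqrt{2}$ and this Rademacher bound into Theorem~\ref{thm:Rademacher} yields
\begin{equation*}
d_{\CalF}(\mu,\nu_m) - \inf_{\nu\in\CalG} d_{\CalF}(\mu,\nu) \le \frac{4\sqrt{2}}{\sqrt{m}} + \frac{4\sqrt{\log(1/\delta)}}{\sqrt{m}} + \epsilon = \frac{C}{\sqrt{m}} + \epsilon,
\end{equation*}
which is \eqref{eqn:rademacherRelu}.

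For the BL bound, I would note that the chosen $\CalF$ is exactly the case $\alpha = 1$, $p = q = 2$ of Corollary~\ref{cor:BLbound}, so
\begin{equation*}
d_\BL(\mu,\nu_m) = \tilde{O}\bigl(d_{\CalF}(\mu,\nu_m)^{1/(1 + (d+1)/2)}\bigr) = \tilde{O}\bigl(d_{\CalF}(\mu,\nu_m)^{1/((d+3)/2)}\bigr).
\end{equation*}
Substituting \eqref{eqn:rademacherRelu} and using monotonicity of $t\mapsto t^{1/((d+3)/2)}$ gives \eqref{eqn:BLg}.

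The only mildly nontrivial step is the Rademacher-complexity calculation; specifically, I need the activation to be Lipschitz and vanish at $0$ so that Talagrand's contraction applies cleanly (ReLU satisfies both), and I need the boundedness of $X$ so that the linear Rademacher complexity is $O(1/\sqrt m)$. Everything else is substitution into Theorem~\ref{thm:Rademacher} and Corollary~\ref{cor:BLbound}, whose hypotheses are verified by the assumptions of the corollary.
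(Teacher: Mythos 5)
Your proposal is correct and follows essentially the same route as the paper's proof: bound $\Delta=\sqrt{2}$, use the $1$-Lipschitz property of ReLU with Talagrand's contraction lemma to reduce to the linear class and get $R_m^{(\mu)}(\CalF)\le 2\sqrt{2}/\sqrt{m}$, plug into Theorem~\ref{thm:Rademacher}, and then apply Corollary~\ref{cor:BLbound} with $\alpha=1$, $p=q=2$. The only cosmetic difference is that the paper phrases $\CalF$ as convex combinations of single ReLU neurons (following Bach's Proposition~7) before observing the supremum reduces to single neurons, which changes nothing in the computation.
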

% \begin{proof} See Appendix~\ref{app:generalization}. 
% \end{proof}
Note that the three terms in Eqn.~\eqref{eqn:rademacherRelu} take into account the modeling error ($\inf_{v\in \G} d_\F(\mu,\nu)$), sample complexity and generalization error ($C/\sqrt{m}$), and optimization error ($\epsilon$), respectively. 
Assuming zero modeling error and optimization error, %$(\epsilon = 0)$
we have (1) $d_\F(\mu,\nu_m) = O(m^{-1/2})$, which achieves the typical parametric convergence rate; (2) $d_\BL(\mu,\nu_m)=\tilde O(m^{-\frac{1}{d+3}})$, 
which becomes slower as the dimension $d$ increases. 
This decrease is because of the non-parametric 
nature of BL distance, instead of learning algorithm. 
As we show in Appendix~\ref{app:moregeneralization}, 
we obtain a similar rate of 
 $d_\BL(\mu,\nu_m)= O(m^{-\frac{1}{d}})$, even if we directly use BL distance as the learning objective.  
%One can see that 

Similar results can be obtained for general parametric discriminator class as follows.
\begin{corollary}\label{cor:Rademacher2new}
	Under the condition of Theorem~\ref{thm:Rademacher}, we further assume that (1) $\CalF = \{f_{\theta}: \theta\in \Theta \subset [-1,1]^p\}$ is a parametric function class with $p$ parameters in a bounded set $\Theta$ and that (2) every $f_{\theta}$ is $L$-Lipschitz continuous with respect to the parameters $\theta$, i.e., $\|f_{\theta} - f_{\theta'}\|_{\infty} \le L \|\theta - \theta'\|_2$. Then with probability at least $1-\delta$, we have
	\begin{equation}\label{eqn:rademacher2new}d_{\CalF}(\mu, \nu_m) \leq \inf_{\nu \in \CalG} d_{\CalF}(\mu, \nu)~ + ~ \frac{C}{\sqrt{m}} ~+~ \epsilon, \end{equation}	
	where $C = 16\sqrt{2 \pi} p L + 2 \Delta \sqrt{2 \log(1/\delta)}$.  
\end{corollary}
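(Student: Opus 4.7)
My plan is to apply Theorem~\ref{thm:Rademacher} directly and then reduce the entire problem to bounding the Rademacher complexity $R_m^{(\mu)}(\CalF)$ of the parametric class. From Theorem~\ref{thm:Rademacher}, with probability at least $1-\delta$ we already have
\begin{equation*}
d_\CalF(\mu, \nu_m) - \inf_{\nu\in\CalG} d_\CalF(\mu,\nu) \le 2 R_m^{(\mu)}(\CalF) + 2\Delta\sqrt{\frac{2\log(1/\delta)}{m}} + \epsilon,
\end{equation*}
so the second summand of $C/\sqrt{m}$ is accounted for, and it remains to establish $R_m^{(\mu)}(\CalF) \le 8\sqrt{2\pi}\, p L / \sqrt{m}$.

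For that, I would first bound the covering number of $\CalF$ in $\|\cdot\|_\infty$ by using the parametrization. Because $\Theta \subset [-1,1]^p$ has $\ell_2$-diameter at most $2\sqrt{p}$, a standard volumetric packing bound yields $N(\eta,\Theta,\|\cdot\|_2) \le (c\sqrt{p}/\eta)^p$ for an absolute constant $c$. The Lipschitz hypothesis $\|f_\theta - f_{\theta'}\|_\infty \le L\|\theta-\theta'\|_2$ transfers this immediately: $N(L\eta,\CalF,\|\cdot\|_\infty) \le (c\sqrt{p}/\eta)^p$, so that $\log N(\tau,\CalF,\|\cdot\|_\infty) \le p\log(cL\sqrt{p}/\tau)$. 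I would then control $R_m^{(\mu)}(\CalF)$ through this covering. The cleanest route tracking the exact constant is to pass to the Gaussian complexity via the standard inequality $R_m(\CalF) \le \sqrt{2\pi}\, G_m(\CalF)$ and bound $G_m(\CalF)$ by Dudley's entropy integral applied to the induced $L^2(\hat\mu_m)$-pseudometric on $\Theta$; since $\|f_\theta-f_{\theta'}\|_{L^2(\hat\mu_m)} \le L\|\theta-\theta'\|_2$ and $\Theta$ has $\ell_2$-radius $\le\sqrt{p}$, the integral evaluates to $O(pL/\sqrt{m})$, and a careful bookkeeping of the constants gives exactly $8pL/\sqrt{m}$ before the $\sqrt{2\pi}$ conversion factor.

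The main obstacle will be pinning down the precise constant $16\sqrt{2\pi}$ rather than merely obtaining the correct $pL/\sqrt{m}$ rate. Many standard presentations of Dudley's bound carry constants like $12$ or $24$, so I expect to use the version that passes through Gaussian complexity (which is where the $\sqrt{2\pi}$ shows up naturally) and then exploit the fact that the $L^2$-diameter of $\CalF$ is controlled by $L$ times the $\ell_2$-diameter of $\Theta$, so the entropy integral becomes $\int_0^{L\sqrt{p}} \sqrt{p\log(cL\sqrt{p}/\tau)}\,d\tau$, which can be bounded by $cpL$ via the change of variables $\tau = L\sqrt{p}\, e^{-s}$. The rest is a routine substitution back through the Rademacher--Gaussian conversion. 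No logarithmic factor appears because the entropy integral is bounded in closed form by $pL$, rather than by $pL\sqrt{\log m}$ as would occur if we had applied Massart's finite-class lemma to a single $\epsilon$-net.
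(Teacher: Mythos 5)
Your skeleton matches the paper's: apply Theorem~\ref{thm:Rademacher} so that the only remaining task is $R_m^{(\mu)}(\CalF)\le 8\sqrt{2\pi}\,pL/\sqrt m$, and obtain that from the covering bound $\log N(\epsilon,\Theta,\|\cdot\|_2)\le p\log(\sqrt p/\epsilon)$ plus a chaining argument. Where you deviate is the chaining step, and that is also where your plan is not actually carried out. The paper never passes through Gaussian complexity: for fixed data it sets $X_\theta=\frac{1}{L\sqrt m}\sum_i\tau_i f_\theta(x_i)$, observes via Hoeffding's lemma and the Lipschitz-in-$\theta$ assumption that $\{X_\theta\}$ is a sub-Gaussian process with respect to $\|\theta-\theta'\|_2$, and applies Dudley's entropy integral (with constant $8\sqrt2$) directly to this Rademacher process over $\Theta$; the $\sqrt{2\pi}$ in $C$ then comes from the change of variables $\epsilon=\sqrt p\,e^{-\tau}$ and $\int_0^\infty\tau^{1/2}e^{-\tau}\,\mathrm d\tau=\sqrt\pi/2$, not from a Rademacher--Gaussian comparison. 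In particular, no $\|\cdot\|_\infty$-covering of $\CalF$ itself is needed; covering $\Theta$ suffices because the sub-Gaussian increments are already expressed in the parameter metric.

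The gap in your version is precisely the constant bookkeeping you defer. The standard comparison is $R_m(\CalF)\le\sqrt{\pi/2}\,G_m(\CalF)$ (your $R_m\le\sqrt{2\pi}\,G_m$ is valid but loose by a factor of $2$), and for your route to end at $8\sqrt{2\pi}\,pL/\sqrt m$ you would need the Gaussian entropy integral to evaluate to $8pL/\sqrt m$; since $\int_0^{L\sqrt p}\sqrt{p\log(L\sqrt p/\tau)}\,\mathrm d\tau=Lp\sqrt\pi/2$, that requires a Dudley constant of about $8/\sqrt\pi\approx4.5$, which none of the versions you cite (constants $12$, $24$, or the paper's $8\sqrt2$) provides. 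So the claim that careful bookkeeping gives \emph{exactly} $8pL/\sqrt m$ before the conversion is unsubstantiated, and as written you would only prove the corollary with a larger constant than the stated $C$. Everything is repaired by dropping the Gaussian detour: run Dudley on the Rademacher sum itself, which after rescaling by $1/(L\sqrt m)$ is sub-Gaussian on $(\Theta,\|\cdot\|_2)$, exactly as the paper does; your covering bound and change of variables then give $\Expect[\sup_\theta X_\theta]\le 4\sqrt{2\pi}\,p$, hence $R_m^{(\mu)}(\CalF)\le 8\sqrt{2\pi}\,pL/\sqrt m$ and the stated $C=16\sqrt{2\pi}\,pL+2\Delta\sqrt{2\log(1/\delta)}$.
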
	
This result can be easily applied to neural discriminators, 
since neural networks $f_\theta(x)$ are generally Lipschitz w.r.t. the parameter $\theta$, once the input domain $\Xdom$ is bounded. For neural discriminators, we also apply the bound on the Rademacher complexity of DNNs recently derived in \citet{bartlett2017spectrally}, which gives a sharper bound than that in Corollary~\ref{cor:Rademacher2new}; see Appendix~\ref{subapp:spectralcomp}.

With the basic result in Theorem~\ref{thm:Rademacher}, 
we can also discuss the learning bounds of GANs with choices of non-parametric discriminators. Making use of Rademacher complexity of bounded sets in a RKHS (e.g., Lemma 22 in \cite{Bartlett:2003:RGC}), we give the learning bound of MMD-based GANs \citep{li2015generative, dziugaite2015training} as follows. We present the results for Wasserstein distance and total variance distance in Appendix~\ref{subapp:nonparametric}, and highlight the advantages of using parametric neural discriminators. 
\begin{corollary}\label{cor:MMDgeneralize}
Under the condition of Theorem~\ref{thm:Rademacher}, we further assume that $\CalF = \{f \in \mathcal{H} \colon \|f\|_{\mathcal{H}} \le 1 \}$ where $\mathcal{H}$ is a RKHS whose positive definite kernel $k(x,x')$ satisfies $ k(x, x) \le C_{k} <  +\infty$ for all $x\in X$. Then with probability at least $1-\delta$,
\begin{equation}\label{eqn:rademacherMMD}
d_{\CalF}(\mu, \nu_m) \leq \inf_{\nu \in \CalG} d_{\CalF}(\mu, \nu)~ + ~ \frac{C}{\sqrt{m}} ~+~ \epsilon,
\end{equation}	
where $C = 2 \left(2 + \sqrt{2\log(1/\delta)} \right) \sqrt{C_k}$.
\end{corollary}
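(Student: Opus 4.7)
The plan is to apply Theorem~\ref{thm:Rademacher} directly to the RKHS unit ball, which requires verifying its hypotheses and then computing a clean upper bound on the Rademacher complexity $R_m^{(\mu)}(\CalF)$ for $\CalF=\{f\in\mathcal H:\|f\|_{\mathcal H}\le 1\}$.

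First I would verify the two preconditions. Evenness is immediate since $\|-f\|_{\mathcal H}=\|f\|_{\mathcal H}$, so $f\in\CalF \Leftrightarrow -f\in\CalF$. For the uniform bound on $\|f\|_\infty$, I would use the reproducing property together with Cauchy--Schwarz in $\mathcal H$: for any $x\in X$ and $f\in\CalF$,
$$|f(x)| \;=\; |\langle f,\,k(\cdot,x)\rangle_{\mathcal H}| \;\le\; \|f\|_{\mathcal H}\,\sqrt{k(x,x)} \;\le\; \sqrt{C_k},$$
so $\Delta := \sup_{f\in\CalF}\|f\|_\infty \le \sqrt{C_k}$.

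Next I would bound $R_m^{(\mu)}(\CalF)$ by the standard RKHS Rademacher-complexity calculation (the content of Lemma~22 in \citet{Bartlett:2003:RGC}). Conditioning on $X_1,\dots,X_m$, the supremum over the unit ball is attained by the normalized Riesz representative, giving
$$\sup_{\|f\|_{\mathcal H}\le 1}\frac{2}{m}\sum_{i=1}^m \tau_i f(X_i) \;=\; \frac{2}{m}\Bigl\|\sum_{i=1}^m \tau_i k(\cdot,X_i)\Bigr\|_{\mathcal H}.$$
Then Jensen's inequality and the fact that $\E[\tau_i\tau_j]=\delta_{ij}$ yield
$$R_m^{(\mu)}(\CalF) \;\le\; \frac{2}{m}\sqrt{\E\sum_{i,j}\tau_i\tau_j k(X_i,X_j)} \;=\; \frac{2}{m}\sqrt{\sum_{i=1}^m \E\,k(X_i,X_i)} \;\le\; \frac{2\sqrt{C_k}}{\sqrt{m}}.$$

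Finally I would substitute $\Delta\le\sqrt{C_k}$ and this Rademacher bound into \eqref{eqn:rademacher} of Theorem~\ref{thm:Rademacher}, obtaining with probability at least $1-\delta$
$$d_{\CalF}(\mu,\nu_m) - \inf_{\nu\in\CalG}d_{\CalF}(\mu,\nu) \;\le\; \frac{4\sqrt{C_k}}{\sqrt{m}} + 2\sqrt{C_k}\sqrt{\frac{2\log(1/\delta)}{m}} + \epsilon \;=\; \frac{C}{\sqrt m}+\epsilon,$$
with $C = 2\sqrt{C_k}\bigl(2+\sqrt{2\log(1/\delta)}\bigr)$, matching the stated constant. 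There is no real obstacle: the only mildly delicate step is the Jensen/reproducing-property reduction for $R_m^{(\mu)}(\CalF)$, which is a standard RKHS computation and can simply be cited from \citet{Bartlett:2003:RGC}.
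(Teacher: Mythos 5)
Your proposal is correct and follows essentially the same route as the paper: bound $\Delta\le\sqrt{C_k}$ via the reproducing property, bound $R_m^{(\mu)}(\CalF)\le 2\sqrt{C_k/m}$ (the paper simply cites Lemma~22 of \citet{Bartlett:2003:RGC}, which is exactly the computation you spell out), and substitute into Theorem~\ref{thm:Rademacher} to get the stated constant. The only difference is that you re-derive the RKHS Rademacher bound rather than citing it, which is harmless.
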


\begin{remark}[Comparisons with results in \cite{arora2017generalization}]
	\label{rem:genCompare}
	 \cite{arora2017generalization} also discussed the generalization properties of GANs under a similar framework. 
	In particular, they developed bounds of form $|d_{\CalF}(\mu, \nu) - d_{\CalF}(\hat{\mu}_m, \hat{\nu}_m)|$ where $\hat{\mu}_m$ and $\hat{\nu}_m$ are empirical versions of the target distribution $\mu$ and $\nu $ with sample size $m$. 
	Our framework is similar, but considers bounding the quantity $d_{\CalF}(\mu, \nu_m) - \inf_{\nu \in \CalG} d_{\CalF}(\mu, \nu)$, which is of more direct interest. In fact, our Eqn.~\eqref{eqn:genNN2} shows that our generalization error can be bounded by the generalization error studied in \cite{arora2017generalization}. Another difference is that we adapt the Rademacher complexity argument to derive the bound, while \cite{arora2017generalization} made use of the $\epsilon$-net argument.
\end{remark}

% \paragraph{The Role of the Generator Set $\G$.}
% Compared with the discriminator set $\F$ that involves a subtle trade-off, the generator set $\G$ (or the hypothesis set) only contributes to the modeling error $\inf_{v\in \G} d_\F(\mu,\nu)$, and does not influence the generalization error. This violates the conventional intuition in the traditional statistical learning frameworks, where larger hypothesis set always leads to higher risk of overfitting. For example, the generalizability of the maximum likelihood estimator $\max_{\nu \in \mathcal G}\E_{\hat\mu_m}[\log \rho_\nu(x)]$ is determined by the size of the hypothesis set $\G$, forming the classical trade-off between the modeling error and overfitting risk. However, under the GAN framework, one does not have to scarify generalizability for modeling error as long as the discriminator set is small enough. This seems a critical difference, perhaps advantage of GANs over the traditional learning frameworks. 

\paragraph{Bounding the KL divergence and testing likelihood.} 
The above results depend on the evaluation metric we use, which is $d_\F(\mu,\nu)$ or $d_\BL(\mu,\nu)$. If we are interested in evaluating the model using even stronger metrics, such as KL divergence or equivalently testing likelihood, then the generator set $\G$ enters the scene in a more subtle way, in that a larger generator set $\G$ should be companioned with a larger discriminator set $\F$ in order to provide meaningful bounds on KL divergence. This is illustrated in the following result obtained by combining Theorem~\ref{thm:Rademacher} and Proposition~\ref{pro:kl}. 
 
\begin{corollary}\label{cor:klgen}
Assume both the true $\mu$ and all the generators $\nu \in \G$ have positive densities 
$\rho_\mu$ and $\rho_\nu$, respectively.
Assume $\F$ consists of bounded functions with $\Delta := \sup_{f\in \F}||f||_\infty <\infty$. 

Further, assume the discriminator set $\F$ is \emph{compatible} with the generator set $\G$ in the sense that 
$\log (\rho_\nu/\rho_\mu) \in \spanFplus$, $\forall \nu\in \G$, with a compatible  coefficient defined as 
$$ \Lambda_{\F, \G} := \sup_{\nu\in \G} ||\log (\rho_\nu/\rho_\mu)  ||_{\F,1} < \infty.$$ 
Then 
 \begin{align}\label{eqn:KLgenal}
	\KL(\mu, \nu_m) \leq  \Lambda_{\F, \G}~  \big ( 2 R_m^{(\mu)}(\F) + 2 \Delta  \sqrt{2 \log(1/\delta)/m} + {\Delta} \inf_{\nu \in \CalG} \sqrt{\KL(\mu, \nu)} +  \epsilon \big ).
 \end{align}
\end{corollary}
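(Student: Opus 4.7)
The plan is to chain Proposition~\ref{pro:kl}, Theorem~\ref{thm:Rademacher}, and Pinsker's inequality. First, I will apply Proposition~\ref{pro:kl} with $\nu = \nu_m \in \G$. The set $\spanFplus$ is closed under $g \mapsto -g$ with the same coefficient $\ell_1$-norm (just negate every $\alpha_i$ in a decomposition), so the assumed bound $\|\log(\rho_{\nu_m}/\rho_\mu)\|_{\F,1} \le \Lambda_{\F,\G}$ is equivalent to the hypothesis of Proposition~\ref{pro:kl} stated in terms of $\log(\rho_\mu/\rho_{\nu_m})$, with the same constant. Dropping the nonnegative $\KL(\nu_m,\mu)$ summand from \eqref{equ:kl} then yields
$$\KL(\mu,\nu_m) \le \Lambda_{\F,\G}\, d_\F(\mu,\nu_m).$$

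Second, Theorem~\ref{thm:Rademacher} applies verbatim (the boundedness hypothesis $\|f\|_\infty \le \Delta$ and the evenness of $\F$ are both assumed), so with probability at least $1-\delta$,
$$d_\F(\mu,\nu_m) \le \inf_{\nu\in\G} d_\F(\mu,\nu) + 2R_m^{(\mu)}(\F) + 2\Delta\sqrt{2\log(1/\delta)/m} + \epsilon.$$
The remaining step is to convert the modeling-error term $\inf_{\nu\in\G} d_\F(\mu,\nu)$ into a KL quantity. For any $\nu$ and any $f\in\F$, boundedness gives $\E_\mu f - \E_\nu f \le 2\Delta\, d_{TV}(\mu,\nu)$, and Pinsker's inequality gives $d_{TV}(\mu,\nu) \le \sqrt{\KL(\mu,\nu)/2}$. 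Combining, $d_\F(\mu,\nu) \le \Delta\sqrt{2\KL(\mu,\nu)}$; taking the infimum over $\G$ produces the $\Delta \inf_{\nu\in\G} \sqrt{\KL(\mu,\nu)}$ modeling-error term (the harmless $\sqrt{2}$ is absorbed into the overall constant).

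Plugging the resulting bound on $d_\F(\mu,\nu_m)$ into $\KL(\mu,\nu_m) \le \Lambda_{\F,\G}\, d_\F(\mu,\nu_m)$ produces \eqref{eqn:KLgenal}. The proof is a mechanical assembly of already-proved ingredients; the only non-routine observation is the sign-symmetry of $\|\cdot\|_{\F,1}$ on $\spanFplus$, which lets Proposition~\ref{pro:kl} be invoked regardless of the orientation of the log density ratio. The main "obstacle" is just bookkeeping of constants across the three steps --- in particular ensuring that the boundedness assumption used for Pinsker is the same $\Delta$ used in Theorem~\ref{thm:Rademacher}, which is indeed the case by hypothesis, so no extra work is needed.
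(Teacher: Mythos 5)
Your proposal is correct and follows essentially the same route as the paper's own (very terse) proof: Proposition~\ref{pro:kl} gives $\KL(\mu,\nu_m)\le \Lambda_{\F,\G}\,d_\F(\mu,\nu_m)$, Theorem~\ref{thm:Rademacher} bounds $d_\F(\mu,\nu_m)$, and Pinsker's inequality via $d_\F(\mu,\nu)\le \Delta\sqrt{2\KL(\mu,\nu)}$ handles the modeling-error term. Your explicit remark on the sign-symmetry of $\|\cdot\|_{\F,1}$ and on the stray factor $\sqrt{2}$ (which the paper's own chain also produces but its stated bound omits) are accurate refinements of details the paper leaves implicit.
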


Different from the earlier bounds, the bound in \eqref{eqn:KLgenal} depends on the compatibility coefficient  $\Lambda_{\F,\G}$ that casts a more interesting trade-off on the choice of the generator set $\G$: the generator set $\G$ should be small and have well-behaved density functions 
to ensure a small $\Lambda_{\F,\G}$, while should be large enough to have a small modeling error $ \inf_{\nu \in \CalG} \sqrt{\KL(\mu, \nu)}$. 
Related, the discriminator set should be large enough to include all density ratios $\log(\rho_\mu/\rho_\nu)$ in a ball of radius $\Lambda_{\F,\G}$ of $\spanFplus$, 
and should also be small to have a low Rademacher complexity $R_m^{(\mu)}(\F)$. 
Obviously, one can also extend Corollary~\ref{cor:klgen} 
using \eqref{eqn:kldecay} in Proposition~\ref{pro:power}, 
to allow $\log(\rho_\mu/\rho_\nu) \in \overline{\spanFplus}$ in which case the compatibility of $\G$ and $\F$ should be mainly characterized by the error decay function $\epsilon(r)$. 

$\KL(\mu, \nu_m) = \E_{\mu} [\log p_\mu] - \E_{\mu}[\log p_{\nu_m}]$ is the difference between the testing likelihood $\E_{\mu}[\log p_{\nu_m}]$ of estimated model $\nu_m$ 
and the optimal testing likelihood $\E_{\mu} [\log p_\mu]$. Therefore, Corollary~\ref{cor:klgen} also provides a bound for testing likelihood. Unfortunately, the condition in Corollary~\ref{cor:klgen} is rather strong, 
in that it requires that both the true distribution $\mu$ and the generators $\nu$ have positive  densities and that the log-density ratio $\log (\rho_{\mu}/ \rho_\nu)$ be well-behaved. 
In practical applications of computer vision, however, both $\mu$ and $\nu$ tend to concentrate on local regions or sub-manifolds of $\Xdom$, with very peaky densities, or even no valid densities; this causes the compatibility coefficient $\Lambda_{\F, \G}$ very large, or infinite, making the bound in \eqref{eqn:KLgenal} loose or vacuum. This provides a potential explanation for some of the recent empirical findings \citep[e.g.,][]{danihelka2017comparison,grover2017flow} that 
the negative testing likelihood is uncorrelated with the GAN loss functions, or even increases during the GAN training progress.  
The underlying reason here is that the neural distance is not strong enough to provide meaningful bound for KL divergence. See Appendix~\ref{app:empirical} for an illustration using toy examples.

\section{Related work}
\label{sec:relatedwork}
There is a surge of research interest in GANs; however, most of the work has been empirical in nature. There has been some theoretical literature on understanding GANs, including the discrimination and generalization properties of GANs.

The discriminative power of GANs is typically justified by assuming that the discriminator set $\CalF$ has enough capacity. For example, \citet{goodfellow2014generative} assumes that $\CalF$ contains the optimal discriminator $\frac{p_{\text{data}(x)}}{p_{\text{data}(x)} + p_{\text{g}(x)}}$. Similar capacity assumptions have been made in nearly all other GANs to prove their discriminative power; see, e.g., \citet{zhao2016energy, nowozin2016f, arjovsky2017wasserstein}. However, discriminators are in practice taken as certain parametric function class, like neural networks, which violates these capacity assumptions. The universal approximation property of neural networks is used to justify the discriminative power empirically. In this work, we show that the GAN loss is discriminative if $\spanFplus$ can approximate any continuous functions. This condition is very weak and can be satisfied even when none of the discriminators is close to the optimal discriminator. The MMD-based GANs~\citep{li2015generative, dziugaite2015training, li2017mmd} avoid the parametrization of discriminators by taking advantage of the close-form solution of the optimal discriminator in the non-parametric RKHS space. Therefore, the capacity assumption is satisfied in MMD-based GANs, and their discriminative power is easily justified.

\citet{liu2017approximation} defines a notion of adversarial divergences that include a number of GAN objective functions. They show that if the objective function is an adversarial divergence with some additional conditions, then using a restricted discriminator family has a moment-matching effect. Our treatment of the neural divergence is directly inspired by them. We refer to Remark~\ref{rem:compareLiu} for a detailed comparison. \citet{liu2017approximation} also shows that for objective functions that are strict adversarial divergence, convergence in the objective function implies weak convergence. However, they do not provide a condition under which an adversarial divergence is strict. A major contribution of our work is to fill this gap, and to provide such a condition that is sufficient and necessary. 

\citet{dziugaite2015training} studies generalization error, defined as $d_{\CalF}(\mu, \nu_m) - \inf_{\nu \in \CalG} d_{\CalF}(\mu, \nu)$ in our notation, for MMD-GAN in terms of fat-shattering dimension. Moreover, \citet{dziugaite2015training} obtains a generalization bound that incorporates the complexity of the hypothesis set $\CalG$. Although their overall error bound is still $O(m^{-1/2})$, their work shows the possibility to sharpen our $\CalG$-independent bound. \cite{arora2017generalization} studies the generalization properties of GANs through the quantity $d_{\CalF}(\mu, \nu) - d_{\CalF}(\hat{\mu}_m, \hat{\nu}_m)$ (in our notations). The main difference between our work and \cite{arora2017generalization} is the definition of generalization error; see more discussions in Remark~\ref{rem:genCompare}. Moreover, \cite{arora2017generalization} allows only polynomial number of samples from the generated distribution because the training algorithm should run in polynomial time. We do not consider this issue because in this work we only study the statistical properties of the objective functions and do not touch the optimization method. Finally, \cite{arora2017generalization} shows that the GAN loss can approach its optimal value even if the generated distribution has very low support, and \citep{arora2017gans} provides empirical evidence for this problem. Our result is consistent with their results because our generalization error is measured by the neural distance/divergence.

Finally, there are some other lines of research on understanding GANs. \citet{li2017towards} studies the dynamics of GAN's training and finds that: a GAN with an optimal discriminator
provably converges, while a first order approximation of the discriminator leads to unstable dynamics and mode collapse. \citet{lei2017geometric} studies WGAN and optimal transportation by convex geometry and provides a close-form formula for the optimal transportation map. \citet{hu2017unifying} provides a new formulation of GANs and variational autoencoders (VAEs), and thus unifies the most two popular methods to train deep generative models. We'd like to mention other recent interesting research on GANs, e.g., \citep{guo2017generative, sinn2017towards, nock2017f, mescheder2017numerics, tolstikhin2017adagan, heusel2017gans}. 

% To On the other hand, the function set can not be too large, to avoid the poor generalization emphonomenon of JS divergence and Wasserstein metric as observed by [Ma]. The result here relates to the case if we replace the concentration bound by 
%To be specific, 

%\section{Empirical Results on a Toy Example}
%\section{Illustrating  testing likelihood ....}

\section{Conclusions}
\label{sec:conclusion}
% Despite of the remarkable practical successes of GAN-style approaches, 
% a concrete statistical learning theory of GAN is still largely missing. This work approaches this problem by focusing on understaddnding the basic properties of the learning objective of GANs, including both neural distances and neural $\phi$-divergences. Obviously, an important aspect that we miss is optimization, in that goal optimization rarely achieved in practice. An important future direction is extending our theory to local optimalities. 

We studied the discrimination and generalization properties of GANs with parameterized discriminator class such as neural networks. A neural distance is guaranteed to be discriminative whenever the linear span of its discriminator set is dense in the bounded continuous function space. On the other hand, a neural divergence is discriminative whenever the linear span of features defined by the last linear layer of its discriminators is dense in the bounded continuous function space. We also provided generalization bounds for GANs in different evaluation metrics. In terms of neural distance, our bounds show that generalization is guaranteed as long as the discriminator set is small enough, regardless of the size of the generator or hypothesis set. This raises an interesting discrimination-generalization balance in GANs. Fortunately, several GAN methods in practice already choose their discriminator set at the sweet point, where both the discrimination and generalization hold. Finally, our generalization bound in KL divergence provides an explanation on the counter-intuitive behaviors of testing likelihood in GAN training.

There are several directions that we would like to explore in the future. First of all, in this paper, we do not talk about methods to compute the neural distance/divergence. This is typically a non-concave maximization problem and is extremely difficult to solve. Many methods have been proposed to solve this kind of minimax problems, but both stable training methods and theoretical analysis of these algorithms are still missing. Secondly, our generalization bound depends purely on the discriminator set. It is possible to obtain sharper bounds by incorporating structural information from the generator set. Finally, we would like to extend our analysis to conditional GANs (see, e.g.,~\citet{mirza2014conditional, springenberg2015unsupervised, chen2016infogan, odena2016conditional}), which have demonstrated impressive performance  \citep{reed2016learning,reed2016generative,Han16stackgan}.

\bibliography{iclr2018_conference}
\bibliographystyle{iclr2018_conference}

\newpage
\appendix

\section{Generalization error of other discriminator sets \texorpdfstring{$\F$}{F}}
\label{app:moregeneralization}
\subsection{Generalization bound for neural discriminators}
\label{subapp:spectralcomp}
For neural discriminators, we can use the following bound on the Rademacher complexity of DNNs, which was recently proposed in \citet{bartlett2017spectrally}. 
\begin{theorem}\label{thm:spectralcomp}
Let fixed activation functions $(\sigma_1, \dots, \sigma_L)$ and reference matrices $(M_1, \dots, M_L)$ be given, where $\sigma_i$ is $\rho_i$-Lipschitz and $\sigma_i(0) = 0$. Let spectral norm bounds $(s_1, \dots, s_L)$ and matrix (2,1) norm bounds $(b_1, \dots, b_L)$ be given. Let $\CalF$ denote the discriminator set consisting of all choices of neural network $f_{\mathcal{A}}$:
\begin{equation}\label{def:neuraldiscri}
    \CalF_\nn := \{f_{\mathcal{A}}~:~ \mathcal{A}=:(A_1, \dots, A_L), \|A_i\|_{\sigma} \le s_i, \|A_i^T - M_i^T\|_{2,1} \le b_i\},
\end{equation}
where $\|A\|_{\sigma}:=\sigma_{\max}(A)$ and $\|A\|_{2,1}:=\|(\|A_{:,1}\|_2, \dots, \|A_{:,m}\|_2)\|_1$ are the matrix spectral norm and $(2,1)$ norm, respectively, and 
$$f_{\mathcal{A}}(x) := \sigma_L(A_L\sigma_{L-1}(A_{L-1} \dots \sigma_1(A_1 x) \dots))$$
is the neural network associated with weight matrices $(A_1, \dots, A_L)$. Moreover, assume that each matrix in $(A_1, \dots, A_L)$ has dimension at most $W$ along each axis and define the spectral normalized complexity $R$ as
\begin{equation}\label{def:spectralcomp}
R := \sqrt{\log(2 W^2)} \left(\Pi_{j=1}^L s_j \rho_j\right) \left(\sum_{i=1}^L \left(\frac{b_i}{s_i}\right)^{2/3}\right)^{3/2}.
\end{equation}
Let data matrix $X \in \R^{m \times d}$ be given, where the $m$ rows correspond to data points. When the sample size $m \ge 3\|X\|_F R$, the empirical Rademacher complexity satisfies
\begin{equation}\label{eqn:spectralcomp}
    \hat{R}_m(\CalF_\nn):=\Expect_{\vct{\tau}} \left[\sup_{f\in \CalF_\nn} \frac{2}{m}\sum_i \tau_i f(X_i) \right] \le \frac{24 \|X\|_F R}{m} \left(1 + \log\frac{m}{3 \|X\|_F R} \right),
\end{equation}
where $\vct{\tau}=(\tau_1, \dots, \tau_m)$ are the Rademacher random variables which are iid with $\text{Pr}[\tau_i=1]=\text{Pr}[\tau_i=-1]=1/2$, $\|X\|_F$ is the Frobenius norm of $X$.
\end{theorem}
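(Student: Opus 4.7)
The plan is to bound the empirical Rademacher complexity by passing through the empirical $L^2$ covering number of $\CalF_\nn$ on the data $X$, and then integrating via Dudley's entropy integral. More precisely, I would show
\[
\hat R_m(\CalF_\nn) \;\lesssim\; \inf_{\alpha>0}\left\{\alpha + \frac{1}{\sqrt m}\int_{\alpha}^{\sup_f\|f(X)\|_2/\sqrt m} \sqrt{\log \CalN\bigl(\CalF_\nn,\|\cdot\|_{L_2(\hat\mu_m)},\epsilon\bigr)}\,\rd\epsilon\right\},
\]
and then the result will follow from a sharp covering-number estimate whose log scales like $\|X\|_F^2 R^2/\epsilon^2$ up to the factor $\log(2W^2)$, together with the standard optimization of $\alpha$ that produces the explicit $\tfrac{\|X\|_F R}{m}(1+\log(m/(3\|X\|_F R)))$ form.

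The first ingredient is a single-matrix covering lemma: for the set of matrices $A$ with $\|A^\top-M^\top\|_{2,1}\le b$, I would show the image $\{AX : A \text{ as above}\}$ admits an $\epsilon$-cover in Frobenius norm of log-cardinality at most $\lceil b^2\|X\|_F^2/\epsilon^2\rceil\log(2W^2)$. The natural route is Maurey's empirical method: write each column of $A^\top-M^\top$ as a convex combination of signed basis vectors weighted by its $\ell_2$ norm, and approximate by sampling a small number of such atoms; the $(2,1)$-norm controls the total weight, and the dimension factor $\log(2W^2)$ appears through a union bound over atom choices.

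The second ingredient is a layer-wise induction that lifts the per-matrix cover to a cover of the full composition $f_\mathcal{A}=\sigma_L\circ A_L\circ\cdots\circ A_1$. Using $\sigma_i$ being $\rho_i$-Lipschitz with $\sigma_i(0)=0$ and the spectral bound $\|A_i\|_\sigma\le s_i$, a perturbation $\Delta_i$ to layer $i$ propagates to an output perturbation of order $\prod_{j\ne i}s_j\rho_j \cdot \|\Delta_i\text{ applied to layer-}i\text{ input}\|$. If the layer-$i$ cover has resolution $\epsilon_i$, the total output perturbation is bounded (after bounding the layer-$i$ input norm by $\prod_{j<i}s_j\rho_j\|X\|_F$) by $\bigl(\prod_j s_j\rho_j\bigr)\sum_i \epsilon_i/s_i\cdot\|X\|_F$. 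The hard part will be allocating a total budget $\epsilon$ across the layers to minimize $\sum_i b_i^2/(\epsilon_i^2 s_i^2)$ subject to $\sum_i \epsilon_i/s_i = \epsilon/(\|X\|_F\prod_j s_j\rho_j)$; the Lagrangian calculation produces the exponents $2/3$ and yields
\[
\log\CalN(\CalF_\nn,\|\cdot\|_{L_2(\hat\mu_m)},\epsilon) \;\lesssim\; \frac{\|X\|_F^2\,R^2}{\epsilon^2},
\]
with $R$ as in \eqref{def:spectralcomp}.

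Finally, I would plug this covering bound into Dudley's integral. Because $\log\CalN\sim\epsilon^{-2}$, the integrand behaves like $1/\epsilon$ and produces the logarithmic factor; choosing $\alpha\asymp \|X\|_F R/m$ as a lower cutoff balances the two terms and produces the advertised $(24\|X\|_F R/m)(1+\log(m/(3\|X\|_F R)))$ bound, valid once $m\ge 3\|X\|_F R$ so that the log argument is nonnegative. The main obstacle is the Maurey-style covering together with the multi-layer budget optimization; the Dudley step and the Lipschitz composition estimate are then essentially bookkeeping.
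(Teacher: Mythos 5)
Your proposal is correct and is essentially the paper's own route: the paper proves this theorem simply by invoking the proof of Lemma A.8 in \citet{bartlett2017spectrally} with the optimized Dudley cutoff $\alpha = 3\|X\|_F R/\sqrt{m}$, and that cited proof is exactly the argument you sketch (Maurey-style covering of each $(2,1)$-constrained layer, layer-wise Lipschitz induction with the $2/3$-exponent budget allocation giving $\log\mathcal{N}\lesssim \|X\|_F^2R^2/\epsilon^2$, then Dudley's entropy integral producing the $\frac{\|X\|_FR}{m}\bigl(1+\log\frac{m}{3\|X\|_FR}\bigr)$ form under $m\ge 3\|X\|_FR$). So you have reconstructed the cited proof rather than deviating from it; only the explicit constants (the factor $24$, reflecting the factor $2/m$ in the paper's definition of $\hat{R}_m$) remain as bookkeeping.
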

\begin{proof}
The proof is the same with the proof of Lemma A.8 in \citet{bartlett2017spectrally}. When $m \ge 3\|X\|_F R$, we use the optimal $\alpha = 3 \|X\|_F R/\sqrt{m}$ to obtain the above result.
\end{proof}

Combined with our Theorem~\ref{thm:Rademacher}, we obtain the following generalization bound for the neural discriminator set defined in \eqref{def:neuraldiscri}.
\begin{corollary}\label{cor:Rademacher2spectral}
	Suppose that the discriminator set $\CalF_\nn$ is taken as \eqref{def:neuraldiscri} and that $\|f\|_{\infty} \le \Delta$ for any $f \in \CalF_\nn$. Let data matrix $X \in \R^{m \times d}$ be the $m$ data points that define the empirical distribution $\hat{\mu}_m$. Then with probability at least $1-\delta$, we have
	\begin{equation}\label{eqn:rademacher2spectral}
	d_{\CalF_\nn}(\mu, \nu_m) \leq \inf_{\nu \in \CalG} d_{\CalF}(\mu, \nu)~ + ~ \frac{48 \|X\|_F R}{m} \left(1 + \log \frac{m}{3 \|X\|_F R} \right)+6\Delta\sqrt{\frac{2\log(2/\delta)}{m}} ~+~ \epsilon, \end{equation}	
	where $R$ is the spectral normalized complexity defined in \eqref{def:spectralcomp} and $\epsilon$ is the optimization error defined in \eqref{eqn:optcondition}.  
\end{corollary}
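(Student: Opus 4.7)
The plan is to combine Theorem~\ref{thm:Rademacher}, which already gives a generalization bound in terms of the population Rademacher complexity $R_m^{(\mu)}(\CalF_\nn)$, with Theorem~\ref{thm:spectralcomp}, which bounds the empirical Rademacher complexity $\hat R_m(\CalF_\nn;X)$ conditional on the observed data matrix $X$. The bridge between these two is a standard concentration argument that controls the gap $R_m^{(\mu)}(\CalF_\nn) - \hat R_m(\CalF_\nn;X)$. Throughout, I will split the failure probability and apply a union bound at the end.

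First, I would invoke Theorem~\ref{thm:Rademacher} with failure probability $\delta/2$ (using that $\CalF_\nn$ is even, which we may assume without loss by absorbing signs into the top-layer weights in \eqref{def:neuraldiscri}) to obtain, with probability at least $1-\delta/2$,
\begin{equation*}
d_{\CalF_\nn}(\mu,\nu_m) - \inf_{\nu\in\CalG} d_{\CalF_\nn}(\mu,\nu) \le 2 R_m^{(\mu)}(\CalF_\nn) + 2\Delta \sqrt{\frac{2\log(2/\delta)}{m}} + \epsilon .
\end{equation*}
Next, I would view $\hat R_m(\CalF_\nn;X) = \Expect_{\vct\tau}[\sup_{f\in\CalF_\nn}\frac{2}{m}\sum_i \tau_i f(X_i)]$ as a function of the i.i.d.\ inputs $X_1,\dots,X_m\sim\mu$. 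Since $\|f\|_\infty\le\Delta$ on $\CalF_\nn$, replacing a single $X_i$ changes the supremum by at most $4\Delta/m$, so McDiarmid's inequality yields, with probability at least $1-\delta/2$,
\begin{equation*}
R_m^{(\mu)}(\CalF_\nn) \;=\; \Expect_X\bigl[\hat R_m(\CalF_\nn;X)\bigr] \;\le\; \hat R_m(\CalF_\nn;X) + 2\Delta\sqrt{\frac{2\log(2/\delta)}{m}} .
\end{equation*}

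Assuming $m\ge 3\|X\|_F R$ (the hypothesis of Theorem~\ref{thm:spectralcomp}), I would then plug in the deterministic bound $\hat R_m(\CalF_\nn;X) \le \frac{24\|X\|_F R}{m}\bigl(1+\log\frac{m}{3\|X\|_F R}\bigr)$ from \eqref{eqn:spectralcomp}. A union bound over the two $\delta/2$ events and straightforward arithmetic then give, with probability at least $1-\delta$,
\begin{equation*}
d_{\CalF_\nn}(\mu,\nu_m) \le \inf_{\nu\in\CalG} d_{\CalF_\nn}(\mu,\nu) + \frac{48\|X\|_F R}{m}\Bigl(1+\log\frac{m}{3\|X\|_F R}\Bigr) + 6\Delta\sqrt{\frac{2\log(2/\delta)}{m}} + \epsilon ,
\end{equation*}
where the $6\Delta\sqrt{2\log(2/\delta)/m}$ arises as $2\Delta\sqrt{2\log(2/\delta)/m} + 2\cdot 2\Delta\sqrt{2\log(2/\delta)/m}$ from the two concentration steps.

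None of the steps is really hard; the only technical point worth care is the McDiarmid bounded-difference constant $c_i = 4\Delta/m$, which forces the factor $\sqrt{8}=2\sqrt{2}$ in the bridging step and is precisely what produces the constant $6$ (rather than $4$) in the final $\sqrt{\log(2/\delta)/m}$ term. Aside from tracking these constants and remembering to state the sample-size condition $m\ge 3\|X\|_F R$ inherited from Theorem~\ref{thm:spectralcomp}, the argument is a mechanical composition of known ingredients.
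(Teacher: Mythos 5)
Your proposal is correct and follows essentially the same route as the paper: the paper's proof likewise replaces the population Rademacher complexity in Theorem~\ref{thm:Rademacher} by the empirical one via the standard (McDiarmid-based) bound $\sup_{f\in\CalF}|\Expect_{\mu}[f]-\Expect_{\hat\mu_m}[f]|\le \hat{R}_m(\CalF)+6\Delta\sqrt{\log(2/\delta)/(2m)}$ and then plugs in \eqref{eqn:spectralcomp}, which is exactly your two-step concentration plus union bound with matching constants. Your explicit tracking of the bounded-difference constant $4\Delta/m$ and of the sample-size condition $m\ge 3\|X\|_F R$ is a slightly more careful spelling-out of what the paper leaves implicit.
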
	
\begin{proof}
In the proof of Theorem~\ref{thm:Rademacher}, instead of using $$\sup_{f \in \CalF} \left| \Expect_{\mu}[f] - \Expect_{\hat{\mu}_m}[f] \right| \le R_m^{(\mu)}(\CalF) + 2 \Delta \sqrt{\frac{ \log(1/\delta)}{2 m}},$$ we use $$\sup_{f \in \CalF} \left| \Expect_{\mu}[f] - \Expect_{\hat{\mu}_m}[f] \right| \le \hat{R}_m(\CalF) + 6 \Delta \sqrt{\frac{ \log(2/\delta)}{2 m}}$$
to revise the generalization bound \eqref{eqn:rademacher} as 
\begin{equation}\label{eqn:rademacher_spectral}
	d_{\CalF}(\mu, \nu_m) - \inf_{\nu \in \CalG} d_{\CalF}(\mu, \nu) \le 2 \hat{R}_m(\CalF) + 6 \Delta \sqrt{\frac{2 \log(1/\delta)}{m}} + \epsilon.
\end{equation}
Combining the revised bound with Equation~\eqref{eqn:spectralcomp}, we conclude the proof. 
\end{proof}
Compared to Corollary~\ref{cor:Rademacher2new}, the bound in \eqref{eqn:rademacher_spectral} gets rid of the number of parameters $p$, which can be prohibitively large in practice. Moreover, Corollary~\ref{cor:Rademacher2spectral} can be directly applied to the spectral normalized GANs~\citep{anonymous2018spectral}, and may give an explanation of the empirical success of the spectral normalization technique. 

\subsection{Generalization bounds for non-parametric discriminator sets}
\label{subapp:nonparametric}
With the basic result in Theorem~\ref{thm:Rademacher}, 
we can also discuss the learning bounds of GANs with other choices of non-parametric discriminator sets $\F$. This allows us to highlight the advantages of using parametric neural discriminators. For simplicity, we assume zero model error and optimization so that the  bound is solely based on the generalization error $d_\F(\mu,\hat \mu_m)$ between $\mu$ and its empirical version $\hat \mu_m$. 
\begin{enumerate}
	\item Bounded Lipschitz distance, $\CalF = \{f\in C(X): ||f||_{\BL} \le 1\}$, which is equivalent to Wasserstein distance when $X$ is compact. 
	When $X$ is a convex bounded set in $\R^d$, we have $R_m^{(\mu)}(\CalF) \leq m^{-1/d}$ for $d>2$ (see Corollary 12 in \cite{sriperumbudur2009integral}), and hence $d_{\BL}(\mu,\nu)=O(m^{-1/d})$. This is comparable with Corollary~\ref{cor:BLgeneralize}.
%To reduce the generalization error to $O(\epsilon)$, we need $m = O(\epsilon^{-d})$ samples which grows exponentially with the data dimension; 
%this is significantly larger than the case of Lipschitz parametric sets and RKHS. 
%	$$R_m^{(\mu)}(\CalF) \le \begin{cases}
%	m^{-1/2}, & d = 1 \\
%	m^{-1/2} \log m, & d = 2\\
%	m^{-1/d}, & d > 2 
%	\end{cases}$$ 
%	for any $\mu \in \CalP_{\CalB}(X)$; see Corollary 12 in \cite{sriperumbudur2009integral}. To reduce the generalization error to $O(\epsilon)$, according to Eqn.~\eqref{eqn:rademacher}, we need $m = O(\epsilon^{-d})$ samples that grows exponentially with the data dimension. 
	
	This bound is tight. Assume that $\mu$ is the uniform distribution on $X$. A simple derivation (similar to Lemma 1 in \cite{arora2017generalization}) shows that $d_{\CalF}(\mu, \hat{\mu}_m) \ge c (1 - m \exp(-\Omega(d)))$ for some constant only depending on $X$. Therefore, one must need at least $m = \exp(\Omega(d))$ samples to reduce $d_{\CalF}(\mu, \hat{\mu}_m)$, and hence the generalization bound, to $O(\epsilon)$.
	%such that the generalization error $d_{\CalF}(\mu, \nu)$ can be controlled.
		
	\item Total variation (TV) distance, $\CalF = \{f\in C(X):  \|f\| \le \min\{1, \Delta\}\}$. It is easy to verify that $R_m^{(\mu)}(\CalF) = 2$. Therefore, Eqn.~\eqref{eqn:rademacher} cannot guarantee generalization even when we have infinite number of samples, i.e., $m \to \infty$. 
	
	The estimate given in Eqn.~\eqref{eqn:rademacher} is tight. Assume that $\mu$ is the uniform distribution on $X$. It is easy to see that $d_\mathrm{TV}(\mu, \hat{\mu}_m) = 2$ almost surely. Therefore, $\nu_m$ is close to $\hat{\mu}_m$ implies that it is order 1 away from $\mu$, which means that generalization does not hold in this case.
		 
%	\item Maximum mean discrepancy, $\CalF = \{f\in C(X): \|f\|_{H} \le 1, \|f\| \le \Delta\}$ where $H$ is a RKHS with $k$ as its reproducing kernel. Lemma 22 in \cite{Bartlett:2003:RGC} shows that if $\sup_{x \in X} k(x, x) \le C_{k} \le +\infty$, we have $R_m^{(\mu)}(\CalF) \le 2 \sqrt{C_{k}/m}$ for any $\mu \in \CalP_{\CalB}(X)$. Combined with Eqn.~\eqref{eqn:rademacher}, we need at most $m = \left( \sqrt{C_{k}} +  \Delta \sqrt{2\log(1/\delta)}\right)^2 / \epsilon^2$ such that $d_{\CalF}(\mu, \nu_m) - \inf_{\nu \in \CalG} \le 3 \epsilon$. We can see that $m$ scales linearly with $C_k$, which is much affordable than what we have in the Wasserstein distance case. 
	
%	\item Neural distances, $\CalF = \{f_{\theta} \in C(X): \theta \in \Theta \subset \R^p, \|f\| \le \Delta\}$. Corollary~\ref{cor:Rademacher2} gives a characterization of such function classes and proves that one only need $m = O(\Delta^2 p\log(p/\epsilon)/\epsilon^2)$ samples to bound the generalization error by $3 \epsilon$. 
	\end{enumerate}

	With the statement that training with the TV distance does not generalize, we mean that training with TV distance does not generalize in TV distance. More precisely, even if the training loss on empirical samples is very small, i.e., $\mathrm{TV}(\hat{\mu}_m, {\nu}_m) = O(\epsilon)$, the TV distance to the unknown target distribution can be large, i.e., $d_{TV}(\mu, {\nu}_m) = O(1)$. However, this does not imply that training with TV distance is useless, 
	because it is possible that training with a stronger metric leads to asymptotic vanishing in a weaker metric. 
	For example, $d_{TV}(\hat{\mu}_m, {\nu}_m) = O(\epsilon)$ implies  $d_{\CalF_{\text{nn}}}(\hat{\mu}_m, {\nu}_m) = O(\epsilon)$, and thus a small  $d_{\CalF_{\text{nn}}}(\mu, {\nu}_m)$. %too (see Theorem~\ref{cor:Rademacher2} below). 
	
	Take the Wasserstein metric as another example, even though we only establish $d_{W}(\mu,\nu_m)= O(m^{-1/d})$ (assuming zero model error ($\mu\in \G$) and optimization $\epsilon = 0$), it does not eliminate the possibility that the weaker neural distance has a faster convergence rate $d_{\F_{\mathrm{nn}}}(\mu, \nu_m) = O(m^{-1/2})$. From the practical perspective, however, TV and Wasserstein distances are less clearly favorable than neural distance because the difficulty of calculating and optimizing them. 
	%However, in the GAN setting, training with stronger metrics (like TV distance) is much more difficult than training with weaker metrics (neural distance). With the same computational cost, training with a weaker metric can get much smaller training loss than that with a stronger metric. With the same generalization error bound for the weaker metric, training with the weaker metric can obtain a smaller loss on the test dataset. This argument also holds true for the Wasserstein distance below. We will just focus on the generalization with the same metric in the rest of the paper.
	
%\subsection{Generalization in likelihood}
%	\label{subsec:genL}
\newcommand{\myphi}{\phi}
\newcommand{\mypsi}{\psi}
\newcommand{\reg}{\Psi}
\newcommand{\divg}{d}

\section{Neural \texorpdfstring{$\myphi$}{phi}-divergence}
\label{app:divergence_new}
$f$-GAN is another broad family of GANs that are based on minimizing $f$-divergence (also called $\myphi$-divergence) \citep{nowozin2016f}, which includes the original GAN by \citet{goodfellow2014generative}. \footnote{In this appendix, we call it $\myphi$-divergence because $f$ has been used for discriminators.} However, $\myphi$-divergence has substantially different properties from IPM (see e.g., \citet{sriperumbudur2009integral}), and is not defined as the intuitive moment matching form as IPM. In this Appendix, we extend our analysis to $\myphi$-divergence by interpreting it as a form of \emph{penalized moment matching}. Similar to the case of IPM, we analyze the \emph{neural $\myphi$-divergence} that restricts the discriminators to parametric function set $\F$ for practical computability, and establish its discrimination and generalization properties under mild conditions that practical $f$-GANs satisfy. 

Assume that $\mu$ and $\nu$ are two distributions on $X$. Given a convex, lower-semicontinuous univariate function $\myphi$ that satisfies $\myphi(1)=0$, the related $\myphi$-divergence is $\divg_\myphi(\mu ~||~ \nu)  = \E_{\nu}\big[\myphi\big( \frac{\mathrm{d} \mu}{\mathrm{d} \nu}  \big)\big]$. If $\myphi$ is strictly convex, then a standard derivation based on Jensen's inequality shows that $\myphi$-divergence is nonnegative and discriminative: $\divg_\myphi(\mu ~||~ \nu) \geq \myphi(1) = 0$ and the equality holds iff $\mu = \nu$. Different choices of $\myphi$ recover popular divergences as special cases. For example, $\myphi(t)=(t-1)^2$ recovers Pearson $\chi^2$ divergence, and $\myphi(t) = (u+1)\log ((u+1)/2) + u\log u$ gives the Jensen-Shannon divergence used in the vanilla GAN~\cite{goodfellow2014generative}. 

\subsection{Discriminative power of neural \texorpdfstring{$\myphi$}{phi}-divergence}
\label{subapp:divergence_discrimination}
In this work, we find it helps to develop intuition by introducing another convex function $\mypsi(t) := \myphi(t+1)$, defined by shifting the input variable of $\myphi$ by $+1$; the $\phi$-divergence becomes 
\begin{align}\label{equ:fdef}
d_\phi(\mu ~||~ \nu) =
\E_{\nu}\bigg[\mypsi\bigg( \frac{\mathrm{d} \mu}{\mathrm{d} \nu}   - 1  \bigg)\bigg] = 
 \int_{X} \rho_\nu(x) \mypsi\bigg(\frac{\rho_\mu(x)}{\rho_\nu(x)} - 1 \bigg) \tau (\mathrm{d} x), 
\end{align}
where we should require that $\mypsi(0)=0$; in right hand side of \eqref{equ:fdef}, we assume $\rho_\mu$ and $\rho_\nu$ are the density functions of $\mu$ and $\nu$, respectively, under a base measure $\tau$. 
The key advantage of introducing $\psi$ is that it gives a suggestive variational representation that can be viewed as a regularized moment matching. Specially, assume $\mypsi^*$ is the convex conjugate of $\mypsi$, that is, $\mypsi^*(t) = \sup_{y} \{ y t - \mypsi(y)\} $. By standard derivation, we can show that 
	\begin{equation}\label{def:fdivg2}
	d_\phi(\mu~||~\nu)  \geq 
	 \sup_{f  \in \mathcal A} 
	 \left( \Expect_{\mu}[f] -    \E_{\nu}[f] -  \reg_{\nu, \mypsi^*}[f] \right), ~~~~~ \text{with} ~~~~\reg_{\nu, \mypsi^*}[f] :=  \Expect_{x\sim \nu}[\mypsi^*(f(x))], 
	\end{equation}
	where $\mathcal A$ is \emph{the class of all functions} $f: X \to \text{dom}(\mypsi^*)$ where $\text{dom}(\mypsi^*) = \{t \colon \mypsi^*(t) \in \R \}$, 
	and the equality holds if $\varphi^*(\frac{\rho_\mu(x)}{\rho_\nu(x)}-1) \in \mathcal A$ where $\varphi$ is the inverse function of $\mypsi^*{}'$.  
In \eqref{def:fdivg2}, the term $\reg_{\nu, \mypsi^*}[f]$, as we show in Lemma~\ref{lem:reg} in sequel, can be viewed as a type of complexity penalty on $f$ that ensures the supreme is finite. 
This is in contrast with the IPM $d_\F(\mu,\nu)$ in which the complexity constraint is directly imposed using the function class $\F$, instead of a regularization term. 
%Therefore, $\myphi$-divergence can be viewed as restricting the complexity function 

\begin{lemma} \label{lem:reg}
Assume $\mypsi \colon \R \to \R \cup \{\infty\}$ is a convex, lower-semicontinuous function with conjugate $\mypsi^*$ and $\mypsi(0)=0$. 
The penalty $\reg_{\nu, \mypsi^*} [ f ] $  in \eqref{def:fdivg2} has the following properties 

i) $\reg_{\nu, \mypsi^*}[f]$ is a convex functional of $f$, and $\reg_{\nu, \mypsi^*}[f] \geq 0$ for any $f$. 

ii) % \red{Under condition XXXX}, 
There exists a constant $b_0 \in \R \cup \{\infty\}$ such that $\mypsi^*(b_0) = 0$.  
Further, if $\mypsi$ is strictly convex, then $\reg_{\nu, \mypsi^*}[f] = 0$ implies $f(x) = b_0$ almost surely under measure $\nu$.  %\todo{$b_0$ may be $\infty$, need some condition.}
\end{lemma}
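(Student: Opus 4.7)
My plan is to derive both properties directly from the variational definition $\mypsi^*(t) = \sup_y\{yt - \mypsi(y)\}$. For part (i), $\mypsi^*$ is a pointwise supremum of affine functions of $t$ and is therefore convex; hence $f \mapsto \mypsi^*(f(x))$ is convex for each $x$, and taking the expectation under $\nu$ preserves convexity, giving convexity of $\reg_{\nu, \mypsi^*}$. For nonnegativity, I would evaluate the supremum at $y=0$: using $\mypsi(0) = 0$ one obtains $\mypsi^*(t) \ge 0 \cdot t - \mypsi(0) = 0$ for every $t$, and integrating this pointwise bound gives $\reg_{\nu, \mypsi^*}[f] \ge 0$.

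For the existence of $b_0$ in (ii), I would invoke the Fenchel--Moreau theorem. Since $\mypsi$ is convex and lower-semicontinuous, $\mypsi^{**} = \mypsi$, so
\[
0 = \mypsi(0) = \sup_t\{0 \cdot t - \mypsi^*(t)\} = -\inf_t \mypsi^*(t),
\]
which gives $\inf_{t \in \R} \mypsi^*(t) = 0$. If this infimum is attained at a finite point, take $b_0$ to be that minimizer. If not, convexity and lower semicontinuity of $\mypsi^*$ force it to be eventually monotone, so $\mypsi^*(t) \to 0$ as $t \to +\infty$ or as $t \to -\infty$, and we let $b_0$ be the corresponding extended value.

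For uniqueness under strict convexity, the key step is the set characterization
\[
\{t \in \R : \mypsi^*(t) = 0\} = \partial \mypsi(0).
\]
The inclusion ``$\subseteq$'' follows since $\mypsi^*(t) = 0$ combined with $\mypsi(0) = 0$ forces $\mypsi(y) \ge ty$ for all $y$, which is exactly $t \in \partial \mypsi(0)$. Conversely, $t \in \partial\mypsi(0)$ gives $yt - \mypsi(y) \le 0$ for all $y$ so $\mypsi^*(t) \le 0$, and part (i) supplies the reverse inequality. Strict convexity of $\mypsi$ at $0$ then collapses $\partial\mypsi(0)$ to a singleton $\{b_0\}$. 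Since $\mypsi^* \ge 0$, the hypothesis $\reg_{\nu,\mypsi^*}[f] = 0$ forces $\mypsi^*(f(x)) = 0$ $\nu$-a.s., and hence $f(x) = b_0$ $\nu$-a.s.

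The main obstacle I anticipate is the uniqueness step, because strict convexity of $\mypsi$ does not automatically collapse $\partial \mypsi(0)$ to a single point---e.g., $\mypsi(y) = y^2 + |y|$ is strictly convex with $\partial \mypsi(0) = [-1,1]$. The argument implicitly requires $\mypsi$ to be additionally differentiable at $0$, which holds for all standard $\phi$-divergences used in $f$-GANs (KL, reverse KL, Pearson $\chi^2$, Jensen--Shannon); this should be flagged as an added regularity assumption, or absorbed into a stronger notion of strict convexity (such as essential smoothness at the origin). In the degenerate case $b_0 \in \{\pm\infty\}$, one also has to interpret ``$f(x) = b_0$ a.s.'' as meaning $|f(x)|$ diverges on the support of $\nu$, which should be spelled out explicitly.
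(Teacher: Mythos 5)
Your proof is correct, and for part (ii) it takes a genuinely different---and in fact more careful---route than the paper's. For (i), the paper derives $\psi^*\ge 0$ from the biconjugate identity $\psi(0)=\sup_y\{0\cdot y-\psi^*(y)\}$, whereas you simply evaluate the supremum defining $\psi^*$ at $y=0$; both work, and yours is slightly more elementary (it does not even invoke lower semicontinuity at that step). For (ii), the paper argues that strict convexity of $\psi$ implies strict convexity of $\psi^*$, hence a unique zero of $\psi^*$; as you observe, that implication is false in general, because convex duality exchanges strict convexity with differentiability rather than preserving it. Your characterization $\{t:\psi^*(t)=0\}=\partial\psi(0)$ is exactly right, and your example $\psi(y)=y^2+|y|$ (strictly convex, with $\partial\psi(0)=[-1,1]$, so $\psi^*$ vanishes on all of $[-1,1]$) shows the lemma's uniqueness conclusion genuinely needs the extra hypothesis you flag: differentiability of $\psi$ at $0$, equivalently $\partial\psi(0)$ being a singleton. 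This added condition is harmless for the paper's purposes, since every $\phi$ used in $f$-GANs is differentiable at $1$ (so $\psi(t)=\phi(t+1)$ is differentiable at $0$), and it is consistent in spirit with the differentiability of $\psi^*$ at $b_0$ assumed later in Theorem~\ref{thm:fdiv}(ii). Your Fenchel--Moreau argument for the existence of $b_0$ (namely $\inf_t\psi^*(t)=-\psi^{**}(0)=0$, attained at a finite minimizer or approached as $t\to\pm\infty$, matching the convention $b_0\in\R\cup\{\infty\}$) also fills in a step that the paper's proof leaves implicit, at the price of using lower semicontinuity there; the extended-value reading of ``$f(x)=b_0$ a.s.'' that you spell out is the intended one.
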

\begin{proof}
i) It is obvious that $\reg_{\nu, \mypsi^*}[f]$ is convex given that $f^*$ is convex. 
By the convex conjugate, we have 
$\mypsi(t) = \sup_{y} \big \{ t y - \mypsi^*(y) \big \}.$ Take $t=0$ and note that $\mypsi(0) = 0$, then we have $ \mypsi^*(y) \geq 0, ~ \forall y.$  This proves $\reg_{\nu, \mypsi^*}[f] \geq 0.$ 

ii) If $\mypsi$ is strictly convex, then $\mypsi^*$ is also strictly convex. This implies there exists at most a single value $b_0$ such that $\mypsi^*(c) = 0$.  
Given that $\mypsi^*(y) \geq 0$ for $\forall y$, 
we arrive that $\E_{x\sim \nu}[ \mypsi^*(f(x))] = 0$ implies $\mypsi^*(f(x)) = 0$ almost surely under $x\sim \nu$, which then implies $f(x) = b_0$ almost surely. 
\end{proof}

In practice, it is impossible to numerically optimize over the class of all functions in \eqref{def:fdivg2}. 
Instead, practical $f$-GANs restrict the optimization to a parametric set $\F$ of neural networks, 
yielding the following \emph{neural $\myphi$-divergence}: 
\begin{align}\label{equ:nfdivg}
\divg_{\myphi, \F}(\mu ~||~ \nu)  = 
	 \sup_{f \in \F}  \left( \Expect_{\mu}[f] -    \E_{\nu}[f] -  \reg_{\nu, \mypsi^*}[f] \right).  %~~~~~ \text{with} ~~~~\reg_{\nu, \mypsi^*}[f] :=  \Expect_{\nu}[\mypsi^*\circ f], 
\end{align}
Note that this can be viewed as a generalization of the $\F$-related IPM $d_{\F}(\mu,\nu)$ by considering $\mypsi^*=0$. 
However, the properties of the neural $\myphi$-divergence can be significantly different from that of $d_{\F}(\mu,\nu)$.  %$\Phi(\mu ~|| ~ \nu)$. 
For example, $\divg_{\myphi, \F}(\mu ~||~ \nu)$ is not even guaranteed to be non-negative for arbitrary discriminator sets $\F$ because of the negative regularization term. Fortunately, we can still establish the non-negativity and discriminative property of $\divg_{\myphi, \F}(\mu ~||~ \nu)$ under certain weak conditions on $\F$. Moreover, the property that $d_\F(\mu, \nu) = 0$ implies moment matching on $\CalF$, which is the key step to establish the discriminative power, is not necessarily true for neural divergence. Fortunately, it turns out that $\divg_{\myphi, \F}(\mu ~||~ \nu) = 0$ implies moment matching on features defined by the last linear layer of discriminators. 

\begin{theorem}\label{thm:fdiv}
Assume $\F$ includes the constant function $b_0 \in \R$, which satisfies $\mypsi^*(b_0) = 0$ as defined in Lemma~\ref{lem:reg}. 
We have 

%i) $\divg_{\myphi, \F}(\mu ~||~ \nu)  \geq 0$ for $\forall \mu, \nu$. 
i)  $0 \leq  \divg_{\myphi, \F} (\mu ~||~\nu) \leq d_\F (\mu, \nu)~ \text{for}~ \forall \mu, \nu.$ As a result, 
$$d_\F (\mu, \nu) = 0  ~~~~~~ \text{implies} ~~~~~~ \divg_{\myphi, \F} (\mu ~||~\nu) = 0.$$
In other words, moment matching on $\F$ is a sufficient condition of zero neural $\myphi$-divergence. 

ii) Further, we assume $\F$ has the following form: 
\begin{align}\label{equ:ff0}
\F \supseteq \{\sigma(\alpha f_0 + c_0) \colon  \forall  |\alpha | \leq \alpha_{f_0}, ~~\text{and}~~ f_0 \in \F_0 \}, 
\end{align}
where $\F_0$ is any function set, and $\alpha_{f_0} > 0$ is  positive number associated with each $f_0 \in \F_0$, and $c_0$ is a constant and $\sigma \colon \R \to \R$ is any function that satisfies $\sigma(c_0) = b_0$ and $\sigma'(c_0) > 0$. Here $\sigma$ can be viewed as the output activation function of a deep neural network whose previous layers are specified by $\F_0$. Assume $\mypsi^*(y)$ is differentiable at $y = b_0$. Then 
$$\divg_{\myphi, \F} (\mu ~||~\nu) = 0  ~~~~~~ \text{implies} ~~~~~~ d_{\F_0} (\mu ~,~\nu) = 0.$$ %.$$ 
In other words,
moment matching on $\F_0$ is a necessary condition of zero neural $\myphi$-divergence. 

iii) $\overline{\spanFplus_0} \supseteq {C_b(X)}$ is a sufficient condition for $\divg_{\myphi, \F}$ to be discriminative, i.e., $\divg_{\myphi, \F} (\mu ~||~\nu) = 0$ implies $\mu = \nu$.
%$\alpha $
\end{theorem}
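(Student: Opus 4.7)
The plan is to handle the three parts in order, since (iii) reduces immediately to (ii) combined with Theorem~\ref{thm:span}.

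For part (i), the lower bound $0 \le d_{\phi,\F}(\mu\|\nu)$ comes from plugging the constant function $b_0 \in \F$ into the variational expression: $\E_\mu[b_0] - \E_\nu[b_0] - \Psi_{\nu,\psi^*}[b_0] = b_0 - b_0 - \psi^*(b_0) = 0$, since $\psi^*(b_0) = 0$ by hypothesis. The upper bound $d_{\phi,\F}(\mu\|\nu) \le d_\F(\mu,\nu)$ is immediate from Lemma~\ref{lem:reg}(i), which gives $\Psi_{\nu,\psi^*}[f] \ge 0$, so dropping this nonnegative penalty and taking the sup yields $d_\F(\mu,\nu)$. The consequence that $d_\F(\mu,\nu) = 0 \Rightarrow d_{\phi,\F}(\mu\|\nu) = 0$ follows by sandwiching.

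Part (ii) is the main content and the step I expect to be the central obstacle. The idea is a local perturbation argument. For each $f_0 \in \F_0$, define
\[
h(\alpha) := \E_\mu[\sigma(\alpha f_0 + c_0)] - \E_\nu[\sigma(\alpha f_0 + c_0)] - \E_\nu[\psi^*(\sigma(\alpha f_0 + c_0))], \qquad \alpha \in [-\alpha_{f_0}, \alpha_{f_0}].
\]
By assumption $\sigma(\alpha f_0 + c_0) \in \F$, so $h(\alpha) \le d_{\phi,\F}(\mu\|\nu) = 0$ for all admissible $\alpha$. Also $h(0) = b_0 - b_0 - \psi^*(b_0) = 0$, so $\alpha = 0$ is an interior maximizer and (assuming we can differentiate under the expectation, which we justify via dominated convergence using boundedness of $\sigma$, $\sigma'$ and $\psi^*$ in a neighborhood of $b_0$) we must have $h'(0) = 0$. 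The key subcalculation is that $\psi^*$ attains its minimum value $0$ at $b_0$ (again by Lemma~\ref{lem:reg}, $\psi^* \ge 0$ and $\psi^*(b_0) = 0$), so by the hypothesized differentiability of $\psi^*$ at $b_0$ we get $(\psi^*)'(b_0) = 0$. Then
\[
h'(0) = \sigma'(c_0)\bigl(\E_\mu[f_0] - \E_\nu[f_0]\bigr) - (\psi^*)'(b_0)\,\sigma'(c_0)\,\E_\nu[f_0] = \sigma'(c_0)\bigl(\E_\mu[f_0] - \E_\nu[f_0]\bigr).
\]
Since $\sigma'(c_0) > 0$, this forces $\E_\mu[f_0] = \E_\nu[f_0]$. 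As $f_0 \in \F_0$ was arbitrary, $d_{\F_0}(\mu,\nu) = 0$.

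For part (iii), conclude by chaining: if $d_{\phi,\F}(\mu\|\nu) = 0$, part (ii) gives $d_{\F_0}(\mu,\nu) = 0$, and then the density of $\mathrm{span}\F_0$ in $C_b(X)$ lets us invoke Theorem~\ref{thm:span} on $\F_0$ to conclude $\mu = \nu$. The main subtlety throughout is the exchange of differentiation and expectation in part (ii); this is routine once one verifies uniform integrability of the integrands in a neighborhood of $\alpha = 0$, which should follow from the assumed smoothness of $\sigma$ at $c_0$, smoothness of $\psi^*$ at $b_0$, and boundedness of $f_0$ (inherited from $\F_0 \subset C_b(X)$ implied by the use of $\spanFplus_0 \subset C_b(X)$ in part (iii)).
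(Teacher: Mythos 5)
Your proposal is correct and follows essentially the same route as the paper: part (i) is the identical sandwiching via $b_0\in\F$ and $\Psi_{\nu,\psi^*}\ge 0$, part (ii) is the same first-order perturbation of $\sigma(\alpha f_0+c_0)$ at $\alpha=0$ using $(\psi^*)'(b_0)=0$ and $\sigma'(c_0)>0$ (the paper phrases it as one-sided difference quotients applied to $\pm f_0$, you phrase it as $h'(0)=0$ at an interior maximizer, which is the same calculation), and part (iii) is the same chaining with Theorem~\ref{thm:span}. Your explicit remark about justifying the limit--expectation interchange via dominated convergence is if anything slightly more careful than the paper, which passes to the limit inside the expectations without comment.
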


Condition~\eqref{equ:ff0} defines a commonly used structure of $\F$ that naturally satisfied by the $f$-GANs used in practice; in particular, the output activation function $\sigma$ plays the role of ensuring the output of $\F$ respects the input domain of the convex function $\mypsi^*$. 
For example, the vanilla GAN has $\mypsi^* = -\log(1-\exp(t)) - t$ with an input domain of $(-\infty, 0)$, and activation function is taken to be $\sigma(t) =  -\log ( 1+ \exp(-t))$. See Table 2 of \citet{nowozin2016f} for the list of output activation functions related to commonly used $\mypsi$. 

\begin{proof}[Proof of Theorem~\ref{thm:fdiv}]
%1) and 2) are obvious:  
i) because $b_0 \in \F$ and $\mypsi^*(b_0) = 0$, we have $d_{\phi, \F}(\mu ~||~ \nu)   \geq \Expect_{\mu}[b_0] -    \E_{\nu}[b_0] -  \Psi_{\nu, \mypsi^*}[b_0]  = 0$. 
Because $ \Psi_{\nu, \mypsi^*}[f] \geq 0$, 
we obtain  $d_{\phi, \F} (\mu ~||~\nu) \leq d_\F (\mu, \nu)$ by comparing \eqref{equ:nfdivg} with 
$d_\F(\mu, \nu) = \sup_{f\in \F}\{\E_{\mu} f - \E_\nu f\}$. 

ii), note that $d_{\mypsi, \F} (\mu ~||~\nu) = 0$ implies $\Expect_{\mu}[f] -    \E_{\nu}[f] \leq \Psi_{\nu, \mypsi^*}[f], ~~~ \forall f \in \F.$ 
%Because $\alpha f_0 + b_0 \in \F$ for $|\alpha| \leq \alpha_F$, 
Therefore, 
$$
\Expect_{\mu}[\sigma(\alpha f_0  +c_0)] -    \E_{\nu}[\sigma(\alpha f_0  + c_0)] \leq \Psi_{\nu, \mypsi^*}[\sigma(\alpha f_0 + c_0) ], ~~~ \forall f_0 \in \F_0, ~~~ |\alpha| \leq \alpha_{f_0}.
$$
This implies that 
\begin{align}\label{equ:dd}
\frac{1}{\alpha}(\Expect_{x\sim \mu}[\sigma(\alpha f_0(x) + c_0)  ] -    \E_{x\sim \nu}[\sigma(\alpha f_0(x) + c_0)]) \leq 
  %\lowlim_{\alpha \to 0} 
  \frac{1}{\alpha }\E_{x\sim \nu}[\mypsi^*(\sigma(\alpha f_0(x) + c_0)) ].
  %- \mypsi^*(\sigma(b_0))]
  % = \lowlim_{\alpha \to 0} \frac{1}{\alpha }  \E_{x\sim \nu} [\mypsi^*(\alpha f(x) +b_0)] 
%= \E_{x\sim } [\mypsi^*{}'(b_0) f(x)],
\end{align}
By the differentiability assumptions, 
\begin{align*}
& \lim_{\alpha\to0} \frac{\sigma(\alpha f_0(x)  +c_0) - \sigma(c_0)}{\alpha} = \sigma'(c_0) f(x),  \\
%& \lim_{\alpha \to 0} \frac{\mypsi^*(\sigma(\alpha f_0(x) + c_0))}{\alpha} = 
& \lim_{\alpha \to 0} \frac{\mypsi^*(\sigma(\alpha f_0(x) + c_0)) - \mypsi^*(\sigma(c_0))}{\alpha} =  
\mypsi^*{}'(b_0) \sigma'(c_0) f_0(x) = 0,
%\limit_{\alpha \to 0} \frac{\mypsi^*(\sigma(\alpha f_0 + b_0))}{\alpha} 
\end{align*}
where we used the fact that  $\mypsi^*(\sigma(c_0)) = \mypsi^*(b_0) = 0$ and 
$\mypsi^*{}'(b_0) =0 $ because $b_0$ is a differentiable minimum point of $\mypsi^*$. 
Taking the limit of $\alpha \to 0$ on both sides of \eqref{equ:dd}, we get
$$
\sigma'(c_0) [\E_{x\sim \mu} [f_0(x)]  - \E_{x\sim \nu} [f_0(x)] ] 
\leq 0, ~~~~\forall f_0 \in \F_0.  %\mypsi^*{}'(b_0) \sigma'(c_0)  \E_{x\sim \nu}f_0(x) = 0. 
$$
Because $\sigma'(c_0)>  0$ by assumption, this implies $\E_{x\sim \mu} [f_0(x)]  - \E_{x\sim \nu} [f_0(x)]$. The same argument applies to $-f_0$, and we thus we finally obtain $\E_{x\sim \mu} [f_0(x)]  = \E_{x\sim \nu} [f_0(x)]$. 

iii) Combining Theorem~\ref{thm:span} and the last point, we directly get the result.
\end{proof}

\begin{remark}\label{rem:compareLiu}
Our results on neural $\myphi$-divergence can in general extended to the more unified framework of \citet{liu2017approximation}
in which divergences of form $\max_{f}\E_{(x,y)\sim \mu \otimes \nu} [f(x,y)]$ are studied. We choose to focus on $\myphi$-divergence because of its practical importance. Our Theorem~\ref{thm:fdiv} i) can be viewed as a special case of Theorem 4 of \citet{liu2017approximation} and 
our Theorem~\ref{thm:fdiv} ii) is related to Theorem 5 of \citet{liu2017approximation}. However, Theorem 5 of \citet{liu2017approximation} requires a rather counter-intuitive condition, while our condition in 
Theorem~\ref{thm:fdiv} ii) is clear and satisfied by all $\myphi$-divergence listed in \citet{nowozin2016f}. 
\end{remark}

Similar to Theorem~\ref{thm:weakconverge}, under the conditions of Theorem~\ref{thm:fdiv}, we have similar results for neural $\myphi$-divergence. 
\begin{theorem}\label{thm:weakconvergediv}
	Using the same notions in Theorem~\ref{thm:fdiv}, assume that $(X, d_X)$ is a compact metric space and that $\overline{\spanFplus_0} \supseteq {C_b(X)}$. Then if $\lim_{n\to\infty} d_{\phi, \CalF}(\mu~||~\nu_n) = 0$, $\nu_n$ converges to $\mu$ in the distribution sense.
	
	Further, if there exists $C > 0$ such that $\CalF \subset \{f\in C(X): \|f\|_{\text{Lip}} \le C\}$, we have
	$$\lim_{n\to\infty} d_{\phi, \CalF}(\mu~||~\nu_n) = 0 \Longleftrightarrow \nu_{n}\rightharpoonup \mu.$$
\end{theorem}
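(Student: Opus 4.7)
My plan is to handle the two implications separately, leveraging Theorem~\ref{thm:fdiv} together with a compactness argument for the forward implication, and the previously established Theorem~\ref{thm:weakconverge} together with the sandwich from Theorem~\ref{thm:fdiv}(i) for the reverse.

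For the forward implication ($d_{\phi,\F}(\mu\,||\,\nu_n)\to 0 \Rightarrow \nu_n\wto\mu$), compactness of $X$ gives sequential compactness of $\CalP_\CalB(X)$ under weak convergence (Prohorov). It therefore suffices to show that every weakly convergent subsequence of $\{\nu_n\}$ has limit $\mu$. So assume $\nu_{n_k}\wto \nu_\infty$ along some subsequence. For each fixed $f\in\F$, $f$ is continuous and bounded on the compact $X$, and (for the $\phi$-divergences of practical interest, where the output activation $\sigma$ places $f(X)$ inside the domain of continuity of $\psi^*$) $\psi^*(f)$ is also continuous and bounded on $X$. Hence both $\nu\mapsto \E_\nu[f]$ and $\nu\mapsto \E_\nu[\psi^*(f)]$ are continuous under weak convergence. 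Passing to the limit,
$$\E_\mu[f] - \E_{\nu_\infty}[f] - \E_{\nu_\infty}[\psi^*(f)] \;=\; \lim_k \bigl(\E_\mu[f] - \E_{\nu_{n_k}}[f] - \E_{\nu_{n_k}}[\psi^*(f)]\bigr) \;\le\; \lim_k d_{\phi,\F}(\mu\,||\,\nu_{n_k}) \;=\; 0.$$
Taking the supremum over $f\in\F$ yields $d_{\phi,\F}(\mu\,||\,\nu_\infty)\le 0$; combined with non-negativity (Theorem~\ref{thm:fdiv}(i)), $d_{\phi,\F}(\mu\,||\,\nu_\infty)=0$. Then the density hypothesis $\overline{\spanFplus_0}\supseteq C_b(X)$ lets us invoke Theorem~\ref{thm:fdiv}(iii) to conclude $\nu_\infty=\mu$. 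Since every subsequence has a further subsequence converging to $\mu$, the full sequence satisfies $\nu_n\wto\mu$.

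For the reverse implication, assume further $\|f\|_{\Lip}\le C$ for all $f\in\F$. Since the IPM $d_\F(\mu,\nu)$ is invariant under shifting each discriminator by a constant, replacing every $f\in\F$ by $f-f(x_0)$ (for any fixed base point $x_0\in X$) does not change $d_\F$; on the compact $X$ these shifted functions are uniformly Lipschitz and uniformly bounded by $C\cdot\mathrm{diam}(X)$, so they lie in a bounded Lipschitz space. The second half of Theorem~\ref{thm:weakconverge} then gives $d_\F(\mu,\nu_n)\to 0$ from $\nu_n\wto\mu$. The sandwich $0\le d_{\phi,\F}(\mu\,||\,\nu_n)\le d_\F(\mu,\nu_n)$ from Theorem~\ref{thm:fdiv}(i) finishes the job.

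\textbf{Main obstacle.} The delicate step is the continuity of $\nu\mapsto \E_\nu[\psi^*(f)]$ under weak convergence in the forward direction: this forces one to verify that $\psi^*\circ f$ is continuous and bounded on $X$, which is not automatic from $f\in\F$ alone but is guaranteed by the $f$-GAN construction where the output activation $\sigma$ is tailored so that $f(X)\subseteq \mathrm{dom}(\psi^*)$ and $\psi^*$ is continuous there (cf.\ Table~2 of \citet{nowozin2016f}). A secondary subtlety is that $d_{\phi,\F}$ is not shift-invariant, which is why the reverse direction is routed through $d_\F$ rather than argued directly for $d_{\phi,\F}$.
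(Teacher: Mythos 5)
Your proof is correct, but your forward direction takes a genuinely different route from the paper's. The paper disposes of the forward implication in one line by invoking part (iii) of Theorem~\ref{thm:fdiv} together with Theorem~10 of \citet{liu2017approximation} (which asserts that convergence in a \emph{strict} adversarial divergence on a compact metric space implies weak convergence). You instead re-derive that fact inline: Prohorov gives sequential compactness of $\CalP_\CalB(X)$, you pass to a weakly convergent subsequence $\nu_{n_k}\wto\nu_\infty$, use continuity of $\nu\mapsto\E_\nu[f]-\E_\nu[\psi^*(f)]$ under weak convergence to take the limit inside the variational representation, conclude $d_{\phi,\F}(\mu\,\|\,\nu_\infty)=0$, and appeal to Theorem~\ref{thm:fdiv}(iii) for $\nu_\infty=\mu$, finishing with the standard subsequence principle. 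Your argument is more self-contained and makes explicit the regularity assumption that the paper inherits implicitly from Liu et al., namely that $\psi^*\circ f\in C_b(X)$ for every $f\in\F$ (you correctly note this is where the $f$-GAN output activation earns its keep). For the reverse direction the paper routes through Wasserstein ($d_{\phi,\F}\le d_\F\le C\,d_W$, with $d_W$ metrizing weak convergence on compact $X$), which sidesteps the boundedness issue entirely since $d_W$ is itself shift-invariant and only needs the Lipschitz bound; you route through $d_\BL$ and Theorem~\ref{thm:weakconverge}, which forces you to mention the constant-shift trick to get uniform boundedness from uniform Lipschitzness on a compact domain. Both are valid; the paper's choice of $d_W$ is marginally cleaner here, while your choice of $d_\BL$ reuses a result already proved in the paper rather than citing a property of $d_W$ externally.
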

Notice that we assume that $(X, d_X)$ be a compact metric space here for simplicity. A non-compact result is available but its proof is messy and non-intuitive. 
\begin{proof}
The first half is a direct application of Theorem~\ref{thm:fdiv} and Theorem 10 in \citet{liu2017approximation}. 

For the second half, we have
$$d_{\phi, \CalF}(\mu~||~\nu_n) \le d_{\CalF}(\mu, \nu_n) \le C d_W(\mu, \nu_n),$$
where we use Theorem~\ref{thm:fdiv} i) in the first inequality and the Lipschitz condition of $\CalF$ in the second ineqaulity. Since $d_W$ metrizes the weak convergence for compact $X$, we obtain $d_W(\mu, \nu_n) \to 0$ and thus $d_{\phi, \CalF}(\mu~||~\nu_n) \to 0$.
\end{proof}

\subsection{Generalization properties of neural \texorpdfstring{$\myphi$}{phi}-divergence}
\label{subapp:divergence_generalization}
Similar to the case of neural distance, we can establish generalization bounds for neural $\myphi$-divergence.
\begin{theorem}\label{thm:Rademacherdiv}
	Assume that $\|f\|_{\infty} \le \Delta$ for any $f \in \CalF$. $\hat{\mu}_m$ is an empirical distribution with $m$ samples from $\mu$, and $\nu_m \in \CalG$ satisfies $d_{\phi,\CalF}(\hat{\mu}_m~||~\nu_m) \le \inf_{\nu \in \CalG} d_{\phi,\CalF}(\hat{\mu}_m~||~\nu) + \epsilon$.
	Then with probability at least $1 - 2 \delta$, we have
	\begin{equation}\label{eqn:rademacherdiv}
	d_{\phi,\CalF}(\mu~||~\nu_m) \le \inf_{\nu \in \CalG} d_{\phi,\CalF}(\mu~||~\nu) + 2 R_m^{(\mu)}(\CalF) + 2 \Delta \sqrt{\frac{2 \log(1/\delta)}{m}} + \epsilon,
	\end{equation}
	where $R_m^{(\mu)}(\CalF)$ is the Rademacher complexity of $\CalF$.
\end{theorem}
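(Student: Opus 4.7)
The plan is to mirror the proof of Theorem~\ref{thm:Rademacher}, leveraging the structural observation that in the variational representation
\[
d_{\phi,\CalF}(\mu~||~\nu) = \sup_{f\in\CalF}\big\{\E_\mu[f]-\E_\nu[f]-\reg_{\nu,\mypsi^*}[f]\big\},
\]
the penalty $\reg_{\nu,\mypsi^*}[f]=\E_\nu[\mypsi^*(f)]$ depends only on the generator $\nu$ and on the discriminator $f$, not on the data distribution $\mu$. Consequently, replacing $\mu$ by $\hat\mu_m$ perturbs only the linear term $\E_\mu[f]$ inside the supremum, and the uniform deviation $\Delta_m := \sup_{f\in\CalF}|\E_\mu[f]-\E_{\hat\mu_m}[f]|$ entirely controls the empirical–expected discrepancy of the neural $\phi$-divergence.

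First, I would apply the elementary inequality $|\sup_f A(f)-\sup_f B(f)|\le \sup_f|A(f)-B(f)|$ to the $\nu$-wise pair of variational problems to obtain
\[
\big|d_{\phi,\CalF}(\mu~||~\nu)-d_{\phi,\CalF}(\hat\mu_m~||~\nu)\big| \le \Delta_m, \qquad \forall\, \nu\in\CalG.
\]
Combining this bound (applied once with $\nu=\nu_m$ and once with an arbitrary $\nu^\star\in\CalG$, then taking the infimum over $\nu^\star$) with the $\epsilon$-optimality condition $d_{\phi,\CalF}(\hat\mu_m~||~\nu_m)\le\inf_{\nu\in\CalG}d_{\phi,\CalF}(\hat\mu_m~||~\nu)+\epsilon$ yields, by a direct analogue of the decomposition \eqref{eqn:genNN2},
\[
d_{\phi,\CalF}(\mu~||~\nu_m) \le \inf_{\nu\in\CalG} d_{\phi,\CalF}(\mu~||~\nu) + 2\Delta_m + \epsilon.
\]

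It then remains to control $\Delta_m$ with high probability, which is the classical Rademacher-complexity bound on the uniform deviation of a uniformly bounded function class. Since $\|f\|_\infty\le\Delta$ for every $f\in\CalF$, both $\sup_f(\E_\mu[f]-\E_{\hat\mu_m}[f])$ and $\sup_f(\E_{\hat\mu_m}[f]-\E_\mu[f])$ are functions of the $m$ samples with bounded differences at most $2\Delta/m$. McDiarmid's inequality combined with the standard symmetrization step then gives, on each side with probability at least $1-\delta$,
\[
\sup_{f\in\CalF}\pm\big(\E_\mu[f]-\E_{\hat\mu_m}[f]\big) \le R_m^{(\mu)}(\CalF)+\Delta\sqrt{\tfrac{2\log(1/\delta)}{m}}.
\]
A union bound over the two signs yields $\Delta_m\le R_m^{(\mu)}(\CalF)+\Delta\sqrt{2\log(1/\delta)/m}$ with probability at least $1-2\delta$ (this is the source of the $2\delta$ in the statement, in contrast with the one-sided $\delta$ of Theorem~\ref{thm:Rademacher}). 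Plugging this into the previous display produces \eqref{eqn:rademacherdiv}.

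I do not anticipate any substantive obstacle. The only point deserving care is the clean cancellation of the $\reg_{\nu,\mypsi^*}$ penalty when swapping $\mu$ and $\hat\mu_m$; this is precisely what reduces the generalization analysis of neural $\phi$-divergences to that of neural distances, and is the reason that no separate Rademacher control of the nonlinearity $\mypsi^*(f)$ is required despite its presence in the definition.
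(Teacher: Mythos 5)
Your proposal is correct and follows essentially the same route as the paper: reduce to the uniform deviation $\sup_{f\in\CalF}\left|\Expect_{\mu}[f]-\Expect_{\hat\mu_m}[f]\right|$ via the same decomposition as Eqn.~\eqref{eqn:genNN2} (valid because the penalty $\reg_{\nu,\mypsi^*}$ does not involve $\mu$), then bound each of the two one-sided suprema by McDiarmid plus symmetrization and take a union bound, which is exactly where the paper's $1-2\delta$ comes from. Your write-up in fact makes explicit the cancellation of the penalty term that the paper only invokes implicitly with ``the same argument as \eqref{eqn:genNN2}.''
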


Notice that the only difference between Theorem~\ref{thm:Rademacher} and Theorem~\ref{thm:Rademacherdiv} is that the failure probability change from $\delta$ to $2\delta$. This comes from the fact that $\CalF$ is typically not even in the neural divergence case. For example, the vanilla GAN takes $\sigma(t) = -\log(1 + \exp(-t))$ as the output activation function, and thus $f \le 0$ for all $f \in \CalF$. 
\begin{proof}[Proof of Theorem~\ref{thm:Rademacherdiv}] With the same argument in Equation~\eqref{eqn:genNN2}, we obtain
\begin{equation*}%\label{eqn:genNN2}
d_{\phi,\CalF}(\mu~||~\nu_m) - \inf_{\nu \in \CalG} d_{\phi,\CalF}(\mu~||~\nu) \le 2 \sup_{f\in \CalF} \left| \Expect_{\mu}[f] - \Expect_{\hat{\mu}_m}[f]\right| + \epsilon.
\end{equation*}
Although $\CalF$ is not even, we have 
$$\sup_{f\in \CalF} \left| \Expect_{\mu}[f] - \Expect_{\hat{\mu}_m}[f]\right| = \max\left\{ \sup_{f\in \CalF} \left( \Expect_{\mu}[f] - \Expect_{\hat{\mu}_m}[f]\right), \sup_{f\in \CalF} \left( \Expect_{\hat{\mu}_m}[f] - \Expect_{\mu}[f]\right) \right\}.$$
Standard argument on Rademacher complexity (same in the proof of Theorem~\ref{thm:Rademacher}) gives with probalibity at least $1-\delta$, 
$$\Expect\left[ \sup_{f \in \CalF} \left( \Expect_{\mu}[f] - \Expect_{\hat{\mu}_m}[f] \right) \right] \le R_m(\CalF) + 2 \Delta \sqrt{\frac{\log(1/\delta)}{2 m}}.$$
With the same argument, we obtain that with probalibity at least $1-\delta$, 
$$\Expect\left[ \sup_{f \in \CalF} \left( \Expect_{\hat{\mu}_m}[f] - \Expect_{\mu}[f] \right) \right] \le R_m(\CalF) + 2 \Delta \sqrt{\frac{\log(1/\delta)}{2 m}}.$$
Combining all the results above, we conclude the proof.
\end{proof}

With Theorem~\ref{thm:Rademacherdiv}, we obtain generalization bounds for difference choices of $\CalF$, as we had in section~\ref{sec:generalization}. For example, we have an analog of Corollary~\ref{cor:Rademacher2new} in the neural divergence setting as follows. 

\begin{corollary}\label{cor:Rademacher2divnew}
	Under the condition of Theorem~\ref{thm:Rademacherdiv}, we further assume that (1) $\CalF = \CalF_{\text{nn}} = \{f_{\theta}: \theta\in \Theta \subset [-1,1]^p\}$ is a parametric function class with $p$ parameters in a bounded set $\Theta$ and that (2) every $f_{\theta}$ is $L$-Lipschitz continuous with respect to the parameters $\theta$. Then with probability at least $1-2\delta$, we have
	\begin{equation}\label{eqn:rademacher2divnew}
	d_{\phi,\CalF_{\text{nn}}}(\mu~||~\nu_m) \le \inf_{\nu \in \CalG} d_{\phi,\CalF_{\text{nn}}}(\mu~||~\nu) + \frac{C}{\sqrt{m}} + \epsilon,
	\end{equation}
	where $C = 16\sqrt{2 \pi} p L + 2 \Delta \sqrt{2 \log(1/\delta)}$.
\end{corollary}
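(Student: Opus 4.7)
The plan is to apply Theorem~\ref{thm:Rademacherdiv} directly and reduce the corollary to a bound on the Rademacher complexity $R_m^{(\mu)}(\CalF_{\mathrm{nn}})$. Indeed, substituting such a bound into the right-hand side of \eqref{eqn:rademacherdiv} will yield a $C/\sqrt m$ term, with $C$ absorbing the $2\Delta\sqrt{2\log(1/\delta)}$ contribution from the concentration of $\sup_{f\in\CalF}|\E_\mu f-\E_{\hat\mu_m}f|$. So essentially all the work is in showing that, for a parametric class with $p$ parameters in $[-1,1]^p$ that is $L$-Lipschitz in the parameters, one has $R_m^{(\mu)}(\CalF_{\mathrm{nn}})\le 8\sqrt{2\pi}\, pL/\sqrt m$.

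The natural route for the Rademacher bound is via Gaussian comparison. First I would pass from $R_m^{(\mu)}$ to the Gaussian complexity $G_m^{(\mu)}$ using the standard inequality $R_m^{(\mu)}(\CalF)\le \sqrt{\pi/2}\,G_m^{(\mu)}(\CalF)$ (this is where the $\sqrt{2\pi}$ in the final constant originates). Next, since $f_\theta$ is $L$-Lipschitz in $\theta\in\Theta$, the empirical Gaussian process $\theta\mapsto \frac{2}{m}\sum_i g_i f_\theta(X_i)$ (with $g_i$ i.i.d.\ standard Gaussian) has increments dominated (in the subgaussian sense, with a factor $L/\sqrt m$) by those of a linear Gaussian process $\theta\mapsto\frac{2L}{\sqrt m}\langle g,\theta\rangle$, so by Slepian's lemma,
\[
G_m^{(\mu)}(\CalF_{\mathrm{nn}})\le \frac{2L}{\sqrt m}\,\Expect\!\left[\sup_{\theta\in[-1,1]^p}\langle g,\theta\rangle\right]=\frac{2L}{\sqrt m}\,\Expect\|g\|_1=\frac{2L}{\sqrt m}\cdot p\sqrt{2/\pi}.
\]
Combining these two steps and tracking the multiplicative constants in the standard one-sided-to-two-sided concentration inequality (factor~$2$), and then the factor~$2$ coming from the two-sided symmetrization for the non-even class, should produce the claimed $16\sqrt{2\pi}\,pL$.

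The main obstacle is not conceptual but bookkeeping: the proof is structurally identical to the one for Corollary~\ref{cor:Rademacher2new} in the IPM case, with the one genuine change being that $\CalF_{\mathrm{nn}}$ for $\phi$-divergence is typically \emph{not} symmetric under $f\mapsto -f$ (e.g., $\sigma(t)=-\log(1+e^{-t})$ in the vanilla GAN forces $f\le 0$). This means the symmetrization step must be applied separately to $\sup_f(\E_\mu f-\E_{\hat\mu_m}f)$ and $\sup_f(\E_{\hat\mu_m}f-\E_\mu f)$, each giving a failure probability of $\delta$, whose union gives the $2\delta$ in Theorem~\ref{thm:Rademacherdiv} (and hence here). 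As a sanity check I would verify that restricting to the special case $\mypsi^*=0$ recovers exactly Corollary~\ref{cor:Rademacher2new} modulo the $\delta\leftrightarrow 2\delta$ discrepancy, confirming that the whole argument is a one-line substitution of a Rademacher bound into the generalization inequality already established in Theorem~\ref{thm:Rademacherdiv}.
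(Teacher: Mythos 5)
Your reduction is exactly the paper's: Corollary~\ref{cor:Rademacher2divnew} is proved by plugging a bound $R_m^{(\mu)}(\CalF_{\text{nn}})=O(pL/\sqrt m)$ into Theorem~\ref{thm:Rademacherdiv}, whose proof already accounts for the non-even discriminator class by applying the one-sided concentration argument to both $\sup_f(\Expect_\mu[f]-\Expect_{\hat\mu_m}[f])$ and its negation, which is precisely where the $2\delta$ comes from — you have that part right. Where you genuinely diverge is in how the Rademacher complexity itself is bounded: the paper (in the proof of Corollary~\ref{cor:Rademacher2new}, which this corollary inherits verbatim) uses a chaining argument — sub-Gaussianity of $\theta\mapsto\frac{1}{L\sqrt m}\sum_i\tau_i f_\theta(x_i)$ in the parameter metric, the covering bound $\log N(\epsilon,\Theta,\|\cdot\|_2)\le p\log(\sqrt p/\epsilon)$, and Dudley's entropy integral, which is exactly what produces the constant $8\sqrt{2\pi}pL$ and hence the $16\sqrt{2\pi}pL$ in the statement. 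You instead pass to Gaussian complexity via $R_m\le\sqrt{\pi/2}\,G_m$ and compare with the linear process $\theta\mapsto\frac{2L}{\sqrt m}\langle g,\theta\rangle$ on $[-1,1]^p$; the increment domination $\Expect(X_\theta-X_{\theta'})^2\le\frac{4L^2}{m}\|\theta-\theta'\|_2^2$ does hold, so the comparison is legitimate (strictly speaking you want the Sudakov--Fernique inequality for expected suprema rather than Slepian's lemma proper, which compares probabilities under matched variances). Your route gives $R_m^{(\mu)}(\CalF_{\text{nn}})\le\sqrt{\pi/2}\cdot\frac{2L}{\sqrt m}\,p\sqrt{2/\pi}=\frac{2pL}{\sqrt m}$ — note the $\sqrt{\pi/2}$ and $\sqrt{2/\pi}$ cancel, so contrary to your parenthetical the $\sqrt{2\pi}$ does \emph{not} originate there; you simply obtain a smaller constant ($4pL$ in place of $16\sqrt{2\pi}pL$), which of course still implies the stated bound. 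In short: your approach trades the paper's more generic covering-number/Dudley machinery (applicable to any totally bounded $\Theta$) for a comparison argument that exploits the $\ell_2$-Lipschitz structure and the cube geometry directly and yields a sharper constant; both are correct, and the reduction to Theorem~\ref{thm:Rademacherdiv} is identical.
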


\section{Proof of results in section~\ref{sec:discriminative}}
	\label{app:discriminative}
	\begin{proof}[Proof of Theorem~\ref{thm:span}]
	For the sufficient part, the proof is standard and the same as that of the uniqueness of weak convergence. We refer to Lemma 9.3.2 in \citet{dudley_2002} for a complete proof.

		For the necessary part, suppose that $\CalF \subset C_b(X)$ is discriminative in $\CalP_{\CalB}(X)$. Assume that $\overline{\text{span} (\CalF \cup \{\vct{1}\})}$ is a strictly closed subspace of $C_b(X)$. Take $g \in C_b(X) \backslash \overline{\text{span} (\CalF)}$ and $\|g\|_{\infty} = 1$. By the Hahn-Banach theorem, there exists a bounded linear functional $L: C(X) \to \R$ such that $L(f) = 0$ for any $f \in \overline{\text{span} (\CalF \cup \{\vct{1}\})}$ and $L \neq \vct{0}$. Thanks to the Riesz representation theorem for compact metric spaces, there exists a signed, regular Borel measure $m \in M_{\CalB}(X)$ such that
		$$L(f) = \int_m f \quad \forall f \in C_b(X).$$
		Suppose $m = \mu - \nu$ are the Hahn decomposition of $m$, where $\mu$ and $\nu$ are two nonnegative Borel measures. Then we have
		$L(f) = \int_{\mu} f - \int_{\nu} f$ for any $f \in C_b(X)$. Thanks to $L(\vct{1}) = 0$, we have $0 < \mu(X) = \nu(X) < \infty$. We can assume that $\mu$ and $\nu$ are Borel probability measures. (Otherwise, we can use the normalized nonzero linear functional $L/\mu(X)$ whose Hahn decomposition consists of two Borel probability measures.) Since $L(f) = 0$ for any $f \in \overline{\text{span} (\CalF)}$, we have $\int_{\mu} f = \int_{\nu} f$ for any $f \in \CalF$. Since $\CalF \subset C_b(X)$ is discriminative, we have $\mu = \nu$ and thus $L = \vct{0}$, which leads to a contradiction.
	\end{proof} 
	
	\begin{proof}[Proof of Corollary~\ref{cor:relu}]
	Thanks to $\{\lambda\theta: \lambda \ge 0, \theta \in\Theta \} = \R^{d+1}$, for any $[w,b] \in \R^{n+1}$, there exists $[w_0,b_0] \in \Theta$ and $\lambda > 0$ such that
	$$\sigma(w^\top x + b ) = \sigma(\lambda (w_0^\top x + b_0) ) = \lambda^\alpha \sigma(w_0^\top x + b_0),$$
	where we used $\sigma(u) = \max\{u, 0\}^{\alpha}$ in the last step. Therefore, we have
	$$\spanFplus_\nn \subset span \{\sigma(w^\top x + b ) \colon [w,b]\in \R^{d+1}\}.$$
	Thanks to Theorem~\ref{thm:single}, we know that $\spanFplus_\nn$ is dense in $C_b(\Xdom)$.
	\end{proof}
	
	\begin{proof}[Proof of Theorem~\ref{thm:weakconverge}]
		Given a function $g\in C_b(X)$, we say that $g$ is approximated by $\F$ with error decay function $\epsilon(r)$ 
		if for any $r\geq 0$, 
		there exists $f_r \in \spanFplus $ with $||f_r ||_{\F, 1} \leq r$ such that
		$||f - f_r||_\infty \leq  \epsilon(r)$. 
		Obviously, $\epsilon(r)$ is an non-increasing function w.r.t. $r$. Thanks to $\overline{\spanFplus} = C_b(X)$, we have $\lim_{r\to \infty} \epsilon(r) = 0$. Now denote $r_n := d_{\CalF}(\mu, \nu_n)^{-1/2}$ and correspondingly $f_n := f_{r_n}$. We have
		\begin{align*}
		&|\E_\mu g - \E_{\nu_n} g| 
		\leq |\E_\mu g - \E_\mu f_n|  + |\E_\nu g - \E_\nu f_n|  +  |\E_\mu f_n - \E_{\nu_n} f_n|  \\
		&\leq   2 \epsilon(r_n)  + r_n ~ d_\F(\mu, \nu_n). 
		= 2 \epsilon(r_n)  + 1/r_n. 
		%|\E_\mu f_\epsilon - \E_\nu f_\epsilon|  + |\E_\mu g - \E_\mu f_\epsilon|  + |\E_\nu g - \E_\nu f_\epsilon|  \\
		\end{align*}
		If $\lim_{n\to\infty}d_{\CalF}(\mu, \nu_n) = 0$, we have $\lim_{n\to\infty}r_n = \infty$. Thanks to $\lim_{r\to \infty} \epsilon(r) = 0$, we prove that $\lim_{n\to\infty} |\E_\mu g - \E_{\nu_n} g| = 0$. Since this holds true for any $g\in C_b(X)$, we conclude that $\nu_n$ weakly converges to $\mu$. 
		
		If $\CalF\subseteq \text{BL}_C(X)$ for some $C>0$, we have $d_{\F}(\mu, \nu) \le C d_{BL}(\mu, \nu)$ for any $\mu, \nu$. Because the bounded Lipschitz distance (also called Fortet–Mourier distance) metrizes the weak convergence, we obtain that $\nu_n \wto \mu $ implies $d_{BL}(\mu, \nu_n) \to 0$, and thus $d_{\F}(\mu, \nu_n) \to 0$.
	\end{proof} 
	
	\begin{proof}[Proof of Proposition~\ref{prop:powerspan}]
        Let $g = \sum_{i=1}^n w_i f_i + w_0$. Then we have
        $$| \E_\mu g - \E_\nu g| = \sum_{i=1}^n |w_i| |\E_\mu f_i - \E_\nu f_i| \le \left(\sum_{i=1}^n |w_i|\right) d_\F(\mu, \nu).$$
        The result is obtain by taking infimum over all possible $w_i$.
    \end{proof}

	\begin{proof}[Proof of Proposition~\ref{pro:power}]
		For any $r \geq 0$, we have 
		\begin{align*}
		|\E_\mu g - \E_\nu g| 
		\leq |\E_\mu g - \E_\mu f_r|  + |\E_\nu g - \E_\nu f_r|  +  |\E_\mu f_r - \E_\nu f_r|  
		\leq   2 \epsilon(r)  + r ~ d_\F(\mu, \nu). 
		\end{align*}
		Taking the infimum on $r>0$ on the right side gives the result. 
	\end{proof}
	
	\begin{proof}[Proof of Corollary~\ref{cor:BLbound}]
		Proposition~5 of \citet{bach2014breaking} shows that for any bounded Lipschitz function $g$ that satisfies 
		$||g ||_{\BL} \colon= \max\{|| g ||_\infty, || g ||_{\Lip}\} \leq \eta$, we have 
		%$\max \{ f(x), \frac{f(x) - f(x')}{||x - x'||}\} \leq \eta$, $\forall x,x' \in X$,  we have 
		$\epsilon(r) = O(\eta(r/\eta)^{-1/(\alpha + (d-1)/2)} \log (r/\eta))$. 
		%and hence 
		Using Proposition~\ref{pro:power}, we get
		\begin{align*}
		|\E_\mu g  - \E_\nu g |  \leq \tilde{O}( ||g||_{\BL} ~ d_{\F}(\mu, \nu)^{\frac{1}{\alpha + (d + 1)/2}}),
		\end{align*}
		The result follows $\BL(\mu,\nu) = \sup_{g}\{ |\E_\mu g  - \E_\nu g |\colon ~ ||g||_{\BL}\leq1\}.$
	\end{proof}
%	\begin{proof}[Proof of Theorem~\ref{thm:multiple}]
%		Suppose $(w_1, w_2, \dots, w_{L-1})^T$ and $(i_1, i_2, \dots, i_L)$ are such a pair. Then the following function class
%		$$\CalF_0 = \{ w_L \sigma(w_{L-1}( \dots w_1\sigma(w^T x + b) \dots ) ): w\in \R^T, b \in \R\}$$
%		is contained in $\CalF_{\text{nn}}$, where $w_L$ is the corresponding weight in the last layer. Based on the assumption on $\Theta_{\text{last}}$, we can take $w_l > 0$. $\CalF_0$ can be viewed as neural networks with one hidden layer and with activation function $\tilde{\sigma}(u) = w_L \sigma(w_{L-1}( \dots w_1\sigma(u) \dots ) )$. Since $w_l > 0$ for $1 \le l \le L$, we have $\tilde{\sigma}$ is a bounded continuous strictly monotone activation function, which must not be a polynomial. By applying Theorem~\ref{thm:single}, $\CalF_0$ is dense in $C(X)$. Therefore, $\CalF$ is dense in $C(X)$.
%	\end{proof}
%	
%	\begin{proof}[Proof of Theorem~\ref{thm:multirelu}]
%		The proof follows the same structure of that for Theorem~\ref{thm:multiple}. The difference is that $\tilde{\sigma}(u) = w_L \sigma(w_{L-1}( \dots w_1\sigma(u) \dots ) )$ can be simplified to $\tilde{\sigma}(u) = C u^{\alpha^{L-1}}$ where $C =\Pi_{l=1}^L w_l^{\alpha^{L-l}}$. By applying Corollary~\ref{cor:relu}, we know that $\CalF_0$ is dense in $C(X)$. Therefore, $\CalF$ is dense in $C(X)$.
%	\end{proof}
	
\section{Proof of results in section~\ref{sec:generalization}}
\label{app:generalization}

\paragraph{Proof of Equation~\eqref{eqn:genNN2}}
Using the standard derivation and the optimality condition~\eqref{eqn:optcondition}, we have 
\begin{equation*}
\begin{aligned}
& d_{\CalF}(\mu, \nu_m) - \inf_{\nu \in \CalG} d_{\CalF}(\mu, \nu) \\
& = d_{\CalF}(\mu, \nu_m) - d_{\CalF}(\hat{\mu}_m, \nu_m) + d_{\CalF}(\hat{\mu}_m, \nu_m) - \inf_{\nu \in \CalG} d_{\CalF}(\mu, \nu) \\
& \le d_{\CalF}(\mu, \nu_m) - d_{\CalF}(\hat{\mu}_m, \nu_m) + \inf_{\nu \in \CalG} d_{\CalF}(\hat{\mu}_m, \nu) - \inf_{\nu \in \CalG} d_{\CalF}(\mu, \nu) + \epsilon.
\end{aligned}
\end{equation*}
Therefore, we obtain
\begin{equation*}%\label{eqn:genNN1}
d_{\CalF}(\mu, \nu_m) - \inf_{\nu \in \CalG} d_{\CalF}(\mu, \nu) \le 2 \sup_{\nu\in \CalG} \left| d_{\CalF}(\mu, \nu) - d_{\CalF}(\hat{\mu}_m, \nu)\right| + \epsilon.
\end{equation*}
Combining with the definition~\eqref{def:MMD}, we obtain
\begin{equation*}%\label{eqn:genNN2}
d_{\CalF}(\mu, \nu_m) - \inf_{\nu \in \CalG} d_{\CalF}(\mu, \nu) \le 2 \sup_{f\in \CalF} \left| \Expect_{\mu}[f] - \Expect_{\hat{\mu}_m}[f]\right| + \epsilon.
\end{equation*}

	\begin{proof}[Proof of Theorem~\ref{thm:Rademacher}]
	    First of all, since $\CalF$ is even, we have $\sup_{f \in \CalF} \left| \Expect_{\mu}[f] - \Expect_{\hat{\mu}_m}[f] \right| = \sup_{f \in \CalF} \left( \Expect_{\mu}[f] - \Expect_{\hat{\mu}_m}[f] \right) $. 
		Consider the function 
		$$h(X_1, X_2, \dots, X_m) = \sup_{f \in \CalF} \left( \Expect_{\mu}[f] - \Expect_{\hat{\mu}_m}[f] \right).$$
		Since $f$ takes values in $[-\Delta, \Delta]$, changing $X_i$ to another independent copy $X_i'$ can change $h$ by no more than $2 \Delta/m$. McDiarmid's inequality implies that with probability at least $1-\delta$, 
		$$\sup_{f \in \CalF} \left( \Expect_{\mu}[f] - \Expect_{\hat{\mu}_m}[f] \right) \le \Expect\left[ \sup_{f \in \CalF} \left( \Expect_{\mu}[f] - \Expect_{\hat{\mu}_m}[f] \right) \right] + 2 \Delta \sqrt{\frac{ \log(1/\delta)}{2 m}}.$$
		Standard argument on Rademacher complexity gives
		$$\Expect\left[ \sup_{f \in \CalF} \left( \Expect_{\mu}[f] - \Expect_{\hat{\mu}_m}[f] \right) \right] \le 2 \Expect_{\vct{\tau}, \vct{X}} \left[\sup_{f\in \CalF} \frac{1}{m}\sum_i \tau_i f(X_i) \right]:= R_m(\CalF).$$
		Combining the two estimates above and Eqn.~\eqref{eqn:genNN2}, we conclude the proof.
	\end{proof}
	
	\begin{proof}[Proof of Corollary~\ref{cor:BLgeneralize}]
    Part of the proof is from Proposition 7 in \citet{bach2014breaking}. More accurately, the discriminator set we use here is 
    $$\CalF = \left\{\sum_{i=1}^n w_i \max(v_i^\top [x; 1], ~0 ) \colon \sum_{i=1}^n |w_i| \le 1,\quad \|v_i\|_2 = 1 \forall 1 \le i \le n \right\}$$
    for a fix $n \in \N$. Since $\|x\|_2 \le 1$ and $\|v\|_2 \le 1$, it is easy to see that $\|f\|_{\infty} \le \sqrt{2}$ for all $f \in \CalF$. 
    
    We want to estimate $R_m^{(\mu)}(\CalF)$ and then use Theorem~\ref{thm:Rademacher} to prove the result. First, it's easy to verify that 
    $$\sup_{f\in \CalF} \left|\frac{2}{m}\sum_i \tau_i f(X_i)\right| = \sup_{\|v\|_2 = 1} \left|\frac{2}{m}\sum_i \tau_i \max(v^\top [X_i;1], 0)\right|.$$
    Then we have
    \begin{equation*}
        \begin{aligned}
        &R_m^{(\mu)}(\CalF) = \Expect\left[ \sup_{\|v\|_2 = 1} \left|\frac{2}{m}\sum_i \tau_i \max(v^\top [X_i;1], 0)\right| \right] \\
        &\le \Expect\left[ \sup_{\|v\|_2 = 1} \left|\frac{2}{m}\sum_i \tau_i v^\top [X_i;1]\right| \right] = \frac{2}{m} \Expect\left[ \left\|\sum_i \tau_i [X_i;1]\right\|_2 \right],
        \end{aligned}
    \end{equation*}
    where we use the 1-Lipschitz property of $\max(x, 0)$ and Talagrand's contraction lemma~\citep{ledoux2013probability} in the inequality step. From \citet{kakade2009complexity}, we get the Rademacher complexity of linear functions 
    $$\Expect\left[ \left\|\sum_i \tau_i [X_i;1]\right\|_2 \right] \le \sqrt{2 m}.$$
    Therefore, we obtain 
    $$R_m^{(\mu)}(\CalF) \le \frac{2\sqrt{2}}{\sqrt{m}}.$$
    Combined with $\|f\|_{\infty} \le \sqrt{2}$ and Theorem~\ref{thm:Rademacher}, we finish the proof.
    \end{proof}
	
	\begin{proof}[Proof of Corollary~\ref{cor:Rademacher2new}]
We need to derive a bound for the Rademacher complexity~\eqref{eqn:rede}. For fixed $\{x_i\}_{i=1}^m$, let's consider $X_{\theta} = \frac{1}{L \sqrt{m}} \sum_{i=1}^m \tau_i f_{\theta}(x_i)$. First of all, $\{X_\theta : \theta \in \Theta\}$ is a sub-Gaussian process with respect to the Eulidean distance on $\Theta$, i.e., for all $\theta, \theta'\in \Theta$ and all $\lambda > 0$
$$ \Expect \left[ \exp\left(\lambda (X_\theta - X_{\theta'})\right) \right] \le \exp\left( \frac{\lambda^2 \|\theta - \theta'\|_2^2}{2} \right). $$
This is a standard result and can be derived by the Hoeffding's lemma. Secondly, we have the following bound for the $\epsilon$-cover number of $\Theta$:
\begin{equation}\label{eqn:covernumber}
\log N(\epsilon, \Theta, \|\cdot\|_2) \le p \log(\sqrt{p}/\epsilon).
\end{equation}
This bound is from the following simple construction. Consider a uniform grid with grid size $2 \epsilon/\sqrt{p}$. The balls with centers as the grid points and with radius $\epsilon$ cover the unit cubic on $\R^p$, i.e., $[-1,1]^p$. The number of balls in this construction is $(\sqrt{p}/\epsilon)^p$. Finally, by applying Dudley's entropy integral, we have
\begin{equation}\label{eqn:dubley}
\Expect\left[ \sup_{\theta \in \Theta} X_\theta \right] \le 8 \sqrt{2} \int_{0}^{\sqrt{p}} \sqrt{\log N(\epsilon, \Theta, \|\cdot\|_2) } \mathrm{d} \epsilon
\end{equation}
Combing \eqref{eqn:covernumber} and \eqref{eqn:dubley} and taking $\tau = \log(\sqrt{p}/\epsilon)$, we obtain
\begin{equation}\label{eqn:dubley2}
\Expect\left[ \sup_{\theta \in \Theta} X_\theta \right] \le 8 \sqrt{2} p \int_{0}^{\infty} \tau^{1/2} e^{-\tau} \mathrm{d} \tau = 4\sqrt{2 \pi} p.
\end{equation}
Notice that Eqn.~\eqref{eqn:dubley2} holds true for arbitrary samples $\{x_i\}_{i=1}^m$, and thus we conclude that
$$R_m^{(\mu)}(\CalF) \le C/\sqrt{m},$$
where $C = 8\sqrt{2 \pi} p L$.
\end{proof}
	
	\begin{proof}[Proof of Corollary~\ref{cor:MMDgeneralize}]
    Lemma 22 in \cite{Bartlett:2003:RGC} shows that if $\sup_{x \in X} k(x, x) \le C_{k} \le +\infty$, we have $R_m^{(\mu)}(\CalF) \le 2 \sqrt{C_{k}/m}$ for any $\mu \in \CalP_{\CalB}(X)$. Also, note that $f(x) \leq || f||_H \sqrt{k(x,x)} \leq || f||_H \sqrt{C_k}$. Combined with Theorem~\ref{thm:Rademacher}, we conclude the proof.
    \end{proof}

	\begin{proof}[Proof of Corollary~\ref{cor:klgen}]
		%\todo{is it $2\epsilon$ or $\epsilon$??}
		Use Proposition~\ref{thm:Rademacher} and note that $ \KL(\mu, \nu_m) \leq \Lambda_{\F, \G} ~ d_{\F}(\mu,\nu)$ and $d_\F(\mu,\nu)\leq \Delta \mathrm{TV}(\mu,\nu) \leq \Delta \sqrt{2\KL(\mu,\nu)}$ by Pinsker's inequality. 
	\end{proof}

\section{Inconsistency between GAN's loss and testing likelihood}
\label{app:empirical}
In this section, we will test our analysis of the consistency of GAN objective and likelihood objective on two toy datasets, e.g., a 2D Gaussian dataset and a 2D 8-Gaussian mixture dataset.
\subsection{A 2D Gaussian example}
The underlying ground-truth distribution is a 2D Gaussian with mean $(0.5, -0.5)$ and covariance matrix $\frac{1}{128}\begin{bmatrix}
17 & 15 \\ 15 & 17
\end{bmatrix}$. We take $10^5$ samples for training, and 1000 samples for testing. % We show one batch (256) of training dataset and the testing dataset in Figure~\ref{fig:gaussian_samples}.
% \begin{figure}[ht]
% 	\begin{center}
% 		%\framebox[4.0in]{$\;$}
% 		\includegraphics[width=0.45\textwidth]{}
% 	\end{center}
% 	\caption{Samples from training and testing datasets.}\label{fig:gaussian_samples}
% \end{figure}

For a 2D Gaussian distribution, we use the following generator
\begin{equation}\label{eqn:gaussian_gen}
	\begin{bmatrix}
		x_1 \\ x_2
	\end{bmatrix}
	= \begin{bmatrix} 1 \\ l & 1 \end{bmatrix} \begin{bmatrix} e^{s_1} \\  & e^{s_2} \end{bmatrix} 
	\begin{bmatrix} z_1 \\ z_2 \end{bmatrix} + \begin{bmatrix} b_1 \\ b_2 \end{bmatrix},
\end{equation}
where $\vct{z} = \begin{bmatrix} z_1 \\ z_2 \end{bmatrix}$ is a standard 2D normal random vector, and $l\in \R$, $\vct{s} = \begin{bmatrix} s_1 \\ s_2 \end{bmatrix}\in \R^2$ and $\vct{b} = \begin{bmatrix} b_1 \\ b_2 \end{bmatrix}\in \R^2$ are trainable parameters in the generator.  

We train the generative model by WGAN with weight clipping. In the first experiment, the discriminator set is a neural network with one hidden layer and 500 hidden neurons, i.e., $$\CalF_{\text{nn}} = \{\sum_{i=1}^{500} \alpha_i \max(\vct{w}_i^\top [x; 1], ~0 ): -0.05 \le \vct{\alpha} \le 0.05, -0.05 \le \vct{w}_i \le 0.05 \quad\forall i\}.$$
Motivated by Corollary~\ref{cor:klgen}, in the second experiment, we take the discriminators to be the log-density ratio between two Gaussian distributions, which are quadratic polynomials: % i.e., 
%take the quadratic polynomials as discriminator set, i.e.,
$$\CalF_{\text{quad}} = \{x^\top A x + \vct{b}^\top x: -0.05 \le A \le 0.05, -0.05 \le \vct{b} \le 0.05\}.$$

We plot their results in Figure~\ref{fig:gaussian_wgan}. We can see that both discriminators behave well: The training loss (the neural distance) converge to zero, and the testing log likelihood increases monotonically during the training. % which is consistent with the increase of the {\bf negative} training loss. 
However, the quadratic polynomial discriminators $\CalF_{\text{quad}}$ yields higher testing log likelihood and better generative model at the convergence. 
This is expected because Corollary~\ref{cor:klgen} guarantees that the testing log likelihood is bounded by the GAN loss (up to a constant), while it is not true for $\CalF_{\text{nn}}$. 
% \begin{figure}[ht]
% 	\begin{center}
% 		%\framebox[4.0in]{$\;$}
% % 		\includegraphics[width=0.45\textwidth]{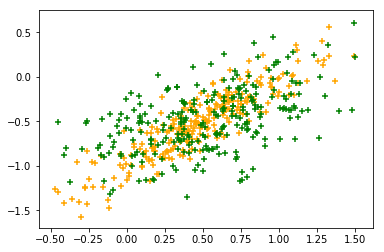}
% 		\includegraphics[width=0.75\textwidth]{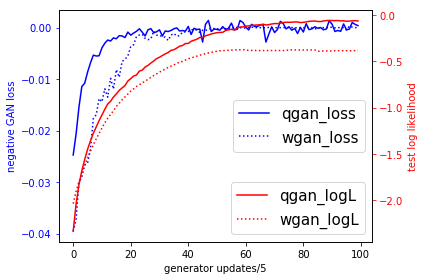}\\
% 		\includegraphics[width=0.45\textwidth]{gaussian_wgan_samples_new.png}
% 		\includegraphics[width=0.45\textwidth]{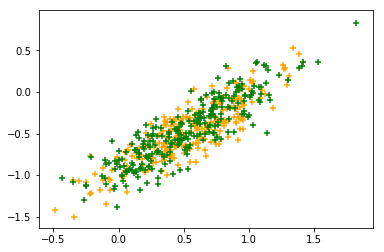}
% % 		\includegraphics[width=0.45\textwidth]{}
% 	\end{center}
% 	\caption{Upper: WGAN with neural discriminator. Lower: WGAN with quadratic polynomials as discriminator. Middle column: negative training loss. Right Column: log likelihood on testing dataset.}\label{fig:gaussian_wgan}
% \end{figure}

\begin{figure}[ht]
	\begin{center}
		%\framebox[4.0in]{$\;$}
% 		\includegraphics[width=0.45\textwidth]{gaussian_wgan_samples_new.png}
		\includegraphics[width=0.55\textwidth]{}
	\end{center}
	\caption{Negative GAN losses and testing likelihood. qgan: WGAN with quadratic polynomials as discriminator. wgan: WGAN with neural discriminator.}\label{fig:gaussian_wgan}
\end{figure}

\begin{figure}[ht]


	\begin{center}
		%\framebox[4.0in]{$\;$}
		\includegraphics[width=0.45\textwidth]{gaussian_qgan_samples_new.png}
		\includegraphics[width=0.45\textwidth]{gaussian_wgan_samples_new.png}
	\end{center}
	\caption{Samples from trained generators. Left: WGAN with quadratic polynomials as discriminator. Right: WGAN with a neural discriminator with 500 neurons.}\label{fig:gaussian_wgan2}
\end{figure}

We can also maximize the likelihood (MLE) on the training dataset to train the model, and we show its result in Figure~\ref{fig:gaussian_mle}. We can see that both MLE and Q-GAN (refers to WGAN with the quadratic discriminator $\F_{\mathrm{quad}}$) yield similar results. 
However, directly maximizing the likelihood converges much faster than the WGAN in this example. 

\begin{figure}[h]
	\begin{center}
		%\framebox[4.0in]{$\;$}
		\includegraphics[width=0.45\textwidth]{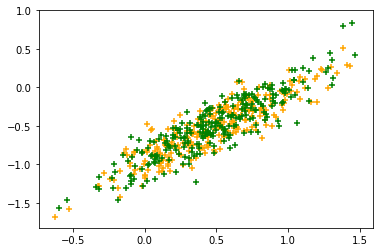}
		\includegraphics[width=0.45\textwidth]{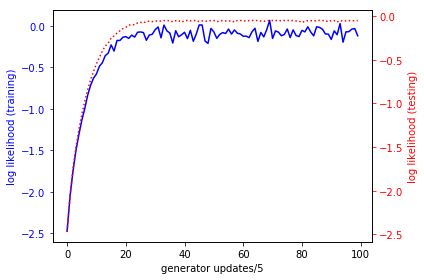}
	\end{center}
	\caption{Left: samples from the maximal likelihood estimate. Right: log likelihood on  training and testing datasets, trained with SGD.}\label{fig:gaussian_mle}
\end{figure}

In this simple Gaussian example, the WGAN loss and the testing log likelihood are consistent. We indeed observe that by carefully choosing the discriminator set (as suggested in Corollary~\ref{cor:klgen}), the testing log likelihood can be simultaneously optimized as we optimize the GAN objective.  

\subsection{An example of 2D 8-Gaussian mixture}
The underlying ground truth distribution is a 2D Gaussian mixture with 8 Gaussians and with equal weights. Their centers are distributed equally on the circle centered at the origin and with radius $\sqrt{2}$, and their standard deviations are all 0.01414. We take $10^5$ samples as training dataset, and 1000 samples as testing dataset. We show one batch (256) of training dataset and the testing dataset in Figure~\ref{fig:mix_samples}. Note that that the density of the ground-truth distribution is highly singular. 
\begin{figure}[ht]
	\begin{center}
		%\framebox[4.0in]{$\;$}
		\includegraphics[width=0.45\textwidth]{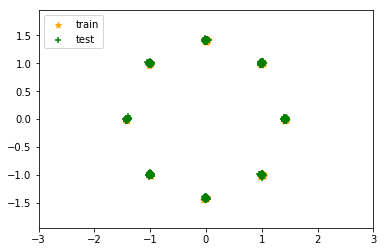}
	\end{center}
	\caption{Samples from training and testing datasets.}\label{fig:mix_samples}
\end{figure}

We still use Eqn.~\eqref{eqn:gaussian_gen} as the generator for a single Gaussian component. Our generator assume that there are 8 Gaussian components and they have equal weights, and thus our generator does not have any modeling error. The training parameters are eight sets of scaling and biasing parameters in Eqn.~\eqref{eqn:gaussian_gen}, each for one Gaussian component. 

We first train the model by WGAN with clipping. We use an MLP with 4 hidden layers and relu activations as the discriminator set. We show the result in Figure~\ref{fig:mix_neural}. We can see that the generator's samples are nearly indistinguishable from the real samples. However, the GAN loss and the log likelihood are not consistent. In the initial stage of training, both the {\bf negative} GAN loss and log likelihood are increasing. As the training goes on, the generator's density gets more and more singular, the log likelihood behaves erratically in the latter stage of training. Although the {\bf negative} GAN loss is still increasing, the log likelihood oscillates a lot, and in fact over half of time the log likelihood is $-\infty$. We show the generated samples at intermediate steps in Figure~\ref{fig:mix_middle_samples}, and we indeed see that the likelihood starts to oscillate violently when the generator's distribution gets singular.  

This inconsistency between GAN loss and likelihood is observed by other works as well. The reason for this consistency is that the neural discriminators are not a good approximation of the singular density ratios. 
\begin{figure}[ht]
	\begin{center}
		%\framebox[4.0in]{$\;$}
		\includegraphics[width=0.42\textwidth]{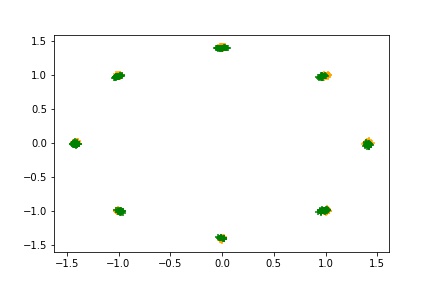}
		\includegraphics[width=0.48\textwidth]{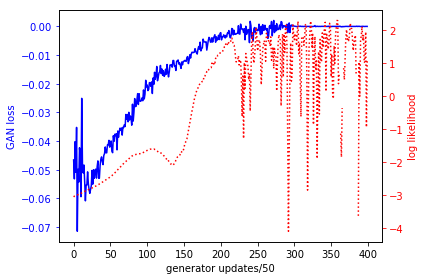}
	\end{center}
	\caption{Left: samples from training dataset (yellow) and samples from generator (green). Right: negative GAN loss and log likelihood (evaluated on the testing dataset).}\label{fig:mix_neural}
\end{figure}

\begin{figure}[ht]
	\begin{center}
		%\framebox[4.0in]{$\;$}
		\includegraphics[width=0.3\textwidth]{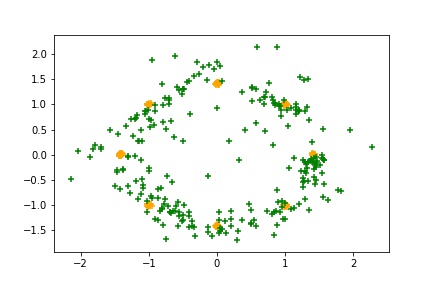}
		\includegraphics[width=0.3\textwidth]{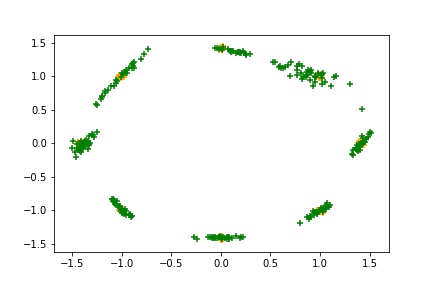}
		\includegraphics[width=0.3\textwidth]{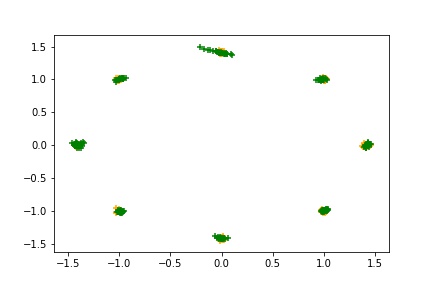}
	\end{center}
	\caption{Left to right: generated samples at step 100, 200 and 300, respectively.}\label{fig:mix_middle_samples}
\end{figure}

We also train the model by maximizing likelihood on the training dataset. We show the result in Figure~\ref{fig:mix_mle}. We can see that the maximal likelihood training got stuck in a local minimum, and failed to exactly recover all 8 components. The log likelihood on training and testing datasets are consistent as expected. Although the log likelihood ($\approx 2.7$) obtained by maximizing likelihood is higher than that ($\approx 2.0$) obtained by WGAN training, its generator is obviously worse than what we obtained in WGAN training. The reason for this is that the negative log-likelihood loss has many local minima, and maximizing likelihood is easy to get trapped in a local minimum. 

\begin{figure}[ht]
	\begin{center}
		%\framebox[4.0in]{$\;$}
		\includegraphics[width=0.42\textwidth]{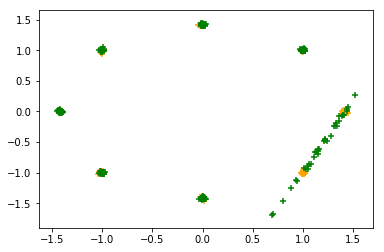}
		\includegraphics[width=0.48\textwidth]{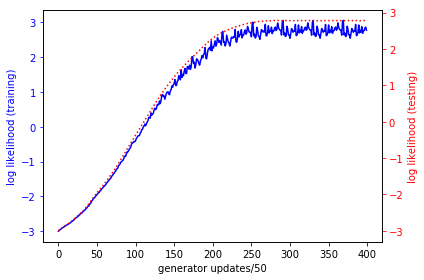}
	\end{center}
	\caption{Left: samples from training dataset and samples from generator. Right: log likelihood on training and testing dataset.}\label{fig:mix_mle}
\end{figure}

The FlowGAN~\citep{grover2017flow} proposed to combine the WGAN loss and the log likelihood to solve the inconsistency problem. We showed the FlowGAN result on this dataset in Figure~\ref{fig:mix_flowgan}. We can see that training by FlowGAN indeed makes the training loss and log likelihood consistent. However, FlowGAN got stuck in a local minimum as maximizing likelihood did, which is not desirable.
\begin{figure}[ht]
	\begin{center}
		%\framebox[4.0in]{$\;$}
		\includegraphics[width=0.42\textwidth]{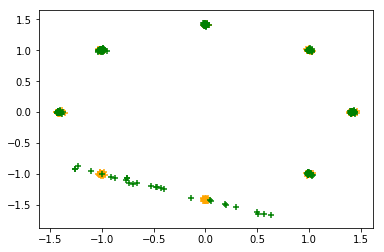}
		\includegraphics[width=0.48\textwidth]{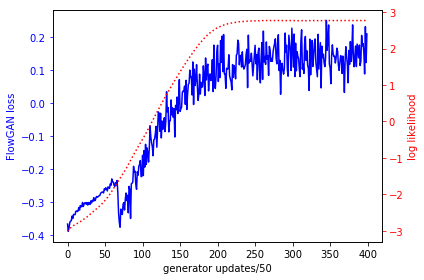}
	\end{center}
	\caption{Left: samples from training dataset and samples from generator. Right: negative FlowGAN loss and log likelihood on testing dataset.}\label{fig:mix_flowgan}
\end{figure}

\end{document}